\newtheorem{theorem}{Theorem}
\newtheorem{corollary}{Corollary}
\newtheorem{proposition}{Proposition}
\newtheorem{lemma}{Lemma}
\newtheorem{remark}{Remark}
\newtheorem{algorithm}{Algorithm}
\newtheorem{assumption}{Assumption}
\numberwithin{equation}{section}
\newcommand{\ie}{\emph{i.e.}{}}
\newcommand{\eg}{\emph{e.g.}{}}
\newcommand\iid{\ensuremath{\mathit{i.i.d.}}\ }
\newcommand{\rv}{\emph{r.v.}{}}
\newcommand{\wrt}{\emph{w.r.t.}{}}
\newcommand{\ninf}[1]{\| {#1}\|_{\infty}}
\newcommand{\ud}{\mathrm{d}}
\newcommand{\dd}{\{1,\ldots,d\}}
\DeclareMathOperator{\argmin}{argmin}
\def\mb{\mathbf}
\def\point{\,\cdot\,}
\def\P{\mathbb{P}}
\def\PP{\P}
\def\rset{\mathbb{R}}
\def\cone{\mathcal{C}}
\newcommand{\hatmass}{\widehat{\mathcal{M}}}
\journal{Journal of Multivariate Analysis}
\begin{document}

\begin{frontmatter}



\title{Sparse Representation of Multivariate Extremes with Applications to Anomaly Detection}

\author{Nicolas Goix\corref{cor1}}
\cortext[cor1]{Corresponging author.}
\ead{nicolas.goix@telecom-paristech.fr}

\author{Anne Sabourin\corref{cor2}}

\author{Stéphan Clémençon}

\address{LTCI, CNRS, Télécom ParisTech, Université Paris-Saclay\\
46 Rue Barrault, 75013, Paris, France}


\author{}

\address{}

\begin{abstract}
Capturing the dependence structure of multivariate extreme events is
a major concern in many fields involving the management of risks
stemming from multiple sources, \emph{e.g.}~portfolio monitoring, insurance, environmental risk management and anomaly detection.
One convenient (nonparametric) characterization of  extreme dependence in the
framework of multivariate Extreme Value Theory (EVT) is the \textit{angular
  measure}, which provides direct information about the probable
'directions' of extremes, that is, the relative contribution of each
feature/coordinate of the `largest' observations. Modeling the
angular measure in high dimensional problems is a major challenge for
the multivariate analysis of rare events.
The present paper proposes a novel methodology aiming at 
exhibiting a sparsity pattern within the dependence structure of extremes. 
This is achieved by estimating the amount of mass spread by the angular measure on
representative sets of directions, corresponding to  specific sub-cones of $\mathbb{R}_+^d$.
This dimension reduction technique  paves the way towards scaling up existing multivariate EVT methods.
Beyond a non-asymptotic study providing a theoretical validity
framework for our method, we propose  as a direct application a --first--
Anomaly Detection algorithm based on \textit{multivariate} EVT.  This algorithm builds a sparse `normal profile' of extreme behaviours, to be confronted with new (possibly abnormal) extreme observations. Illustrative experimental results provide strong empirical evidence of the relevance of our approach.
\end{abstract}

\begin{keyword}
Multivariate Extremes\sep Anomaly Detection \sep Dimensionality Reduction\sep VC theory


\end{keyword}

\end{frontmatter}



\section{Introduction}
\label{sec:intro}

\subsection{Context: multivariate extreme values in large dimension}

Extreme Value Theory (EVT in abbreviated form) provides a  theoretical basis for
modeling the tails of probability  distributions. In many applied fields where 
rare events  may have a disastrous impact, such as 
finance, insurance, climate,  environmental risk management, network
monitoring (\cite{finkenstadt2003extreme,smith2003statistics}) or  
anomaly detection (\cite{Clifton2011,Lee2008}), the information carried by extremes is crucial. In a multivariate context, the dependence structure
of the joint tail is of particular interest, as it gives access
\emph{e.g.} to probabilities of a joint excess above high thresholds or
to multivariate quantile regions. Also, the distributional structure of
extremes indicates which components of a multivariate quantity may be
simultaneously large while the others stay small, which is a valuable
piece of information for multi-factor risk assessment or
detection of anomalies among other --not abnormal--  extreme data.

In a multivariate `Peak-Over-Threshold' setting, realizations of a $d$ -dimensional random vector $\mb Y = (Y_1 ,..., Y_d)$ are observed and the goal pursued is to learn the conditional distribution of excesses, $\left[~ \mb Y ~|~ \|\mb Y\| \ge  r ~ \right]$, above some large threshold $ r>0$.
The dependence structure of such excesses is described via the
distribution of the ‘directions’ formed by the most extreme
observations, the  so-called \emph{angular measure}, hereafter denoted by $\Phi$.  The latter
is defined on the positive orthant of the $d-1$ dimensional
hyper-sphere. To wit, for any region $A$  on the unit sphere (a set
of `directions'), after suitable standardization of the data (see
 Section~\ref{sec:framework}), 
$C \Phi(A) \simeq \PP(\|\mb Y\|^{-1} \mb Y \in A ~|~ \|\mb Y\| >r)$, where $C$ is a normalizing constant. 
Some probability mass may be spread on any sub-sphere of dimension $k
< d$, the $k$-faces of an hyper-cube if we use the infinity norm, which
complexifies inference when $d$ is large. To fix ideas, the presence of $\Phi$-mass on a
sub-sphere of the type $\{\max_{1\leq i\leq k} x_i = 1 ~;~   x_i >0 \;(i\le k) ~;~  x_{k+1} = \ldots = x_d = 0\}$ indicates that the components $Y_1,\ldots,Y_k$ may
simultaneously be large, while the others are small.
An extensive exposition of this multivariate extreme setting may be found \eg~in \cite{Resnick1987},~\cite{BGTS04}.


Parametric or semi-parametric modeling and estimation of the structure of
multivariate extremes is relatively well documented in the statistical literature, see
\emph{e.g.} \cite{coles1991modeling,fougeres2009models,cooley2010pairwise,sabourinNaveau2012}
and the references therein.
In a non-parametric setting, there is also an abundant literature concerning consistency and asymptotic normality of estimators of functionals characterizing the extreme dependence structure, \eg~extreme value copulas or the \emph{stable tail dependence function} (STDF), see \cite{Segers12Bernoulli}, \cite{Drees98}, \cite{Embrechts2000}, \cite{Einmahl2012}, \cite{dHF06}. 
In many applications, it is nevertheless more convenient to work with the angular measure itself, as the latter gives more direct information on the dependence structure and is able to reflect structural simplifying properties (\eg~sparsity as detailed below) which would not appear in copulas or in the STDF.
However, non-parametric modeling of the angular measure faces major difficulties, stemming from the potentially complexe structure of the latter, especially in a high dimensional setting.
Further, from a theoretical point of view, non-parametric estimation of the angular measure has only been studied in the two dimensional case, in \cite{Einmahl2001} and \cite{Einmahl2009}, in an asymptotic framework.

{Scaling up multivariate EVT} is a major challenge that one
faces when confronted to high-dimensional learning
tasks, since most multivariate extreme value models have been
designed to handle moderate dimensional problems (say, of dimensionality  $d\le 10$). 
For larger dimensions, 
 simplifying modeling choices are needed,
 stipulating \emph{e.g} that only some pre-definite subgroups of components
 may be concomitantly extremes, or, on the contrary, that all of them
 must be (see  \emph{e.g.} 
 \cite{stephenson2009high} or \cite{sabourinNaveau2012}).
This curse of dimensionality can be explained, in the context of
extreme values analysis, by the relative  scarcity of extreme  data,
the  computational
complexity of the estimation procedure and, in the parametric case, by 
the fact that the dimension of the parameter space usually grows with
that of the sample space. This calls for dimensionality reduction devices
adapted to multivariate extreme values.

In a wide range of situations, one may expect the occurrence of two phenomena:

\noindent
\textbf{1-} Only a `small' number of groups of components may be concomitantly extreme, so that only a `small' number of hyper-cubes (those corresponding to these subsets of indexes precisely) have non zero mass (`small' is relative to the total number of groups $2^d$).

\noindent
\textbf{2-} Each of these groups contains a limited number of coordinates (compared to the original dimensionality), so that the corresponding hyper-cubes with non zero mass have small dimension compared to $d$.

\noindent
The main purpose of this paper is to introduce a data-driven
methodology for identifying such faces, so as to reduce the
dimensionality of the problem and thus to learn a sparse 
representation  of extreme behaviors. 
In case hypothesis \textbf{2-} is not fulfilled, such a sparse  `profile' can still be learned, but looses the low dimensional property of its supporting hyper-cubes.

One major issue is that real data generally do not concentrate on sub-spaces of zero Lebesgue measure. This is circumvented by setting to zero any coordinate less than a threshold $\epsilon>0$, so that the corresponding `angle' is assigned to a lower-dimensional face.





The theoretical results stated in this paper build on the work
of 
\cite{COLT15}, where non-asymptotic bounds related to the statistical performance of a non-parametric estimator of the STDF, another functional measure of the dependence structure of extremes,  are established.  
However, even in the case of a sparse angular measure, the support of
the STDF would not be so, since the latter functional is  an
integrated version of the former (see~\eqref{eq:integratePhiLambda},
 Section~\ref{sec:framework}). Also, 
in many applications, it is more convenient to work with 
 the {angular
  measure}. Indeed, it  provides
direct  information about the
probable `directions' of extremes, that is, the relative contribution
of each components of the `largest' observations  (where `large' may be 
understood \emph{e.g.} in the sense of the infinity norm on the input
space). We emphasize again that estimating these `probable relative contributions' is a major concern in many fields
involving  the management of risks from multiple sources.
To the best of our knowledge, non-parametric estimation of the angular
measure has only been treated in the two dimensional case, in
\cite{Einmahl2001} and \cite{Einmahl2009}, in an asymptotic
framework.

\noindent
\textbf{Main contributions.} The present paper extends the non-asymptotic bounds proved in \cite{COLT15} to the angular measure of extremes, restricted to a well-chosen representative class of sets, corresponding to lower-dimensional regions of the space. The objective is to learn a representation of the angular measure, rough enough to
control the variance in high dimension and accurate enough to gain information about the 'probable directions' of extremes. This yields a --first-- non-parametric estimate of the angular measure in any dimension, restricted to a
class of sub-cones, with a non asymptotic bound on the error. 
 The representation thus obtained is exploited to detect anomalies among extremes. 

The proposed algorithm  is based on \textit{dimensionality
  reduction}. 
We believe that our method can also be used as a preprocessing stage, for dimensionality reduction purpose, before proceeding with a parametric or semi-parametric estimation which could benefit from the structural information issued in the first step. Such applications are beyond the scope of this paper and will be the subject of further research.

\subsection{Application to Anomaly Detection}
Anomaly Detection (AD in short, and depending of the application domain, outlier detection, novelty detection, deviation detection, exception mining) generally consists in assuming that the dataset under study contains a \textit{small} number of anomalies, generated by distribution models that  \textit{differ} from that generating the vast majority of the data.
This formulation motivates many statistical AD methods, based on the underlying assumption that anomalies occur in low probability regions of the data generating process. Here and hereafter, the term `normal data' does not refer to Gaussian distributed data, but  to  \emph{not abnormal} ones, \ie~data belonging to the above mentioned majority. 
Classical parametric techniques, like those developed in \cite{Barnett94} or in \cite{Eskin2000}, assume that the normal data are generated by a distribution belonging to some  specific, known in advance parametric model.  
The most popular non-parametric approaches include algorithms based on density (level set) estimation (see \textit{e.g.} \cite{Scholkopf2001},  \cite{Scott2006} or \cite{Breunig99LOF}), on dimensionality reduction (\textit{cf} \cite{Shyu2003}, \cite{Aggarwal2001}) or on decision trees (\cite{Liu2008}).
One may refer to \cite{Hodge2004survey}, \cite{Chandola2009survey}, \cite{Patcha2007survey} and \cite{Markou2003survey} for excellent overviews of current research on Anomaly Detection, ad-hoc techniques being far too numerous to be listed here in an exhaustive manner.
The framework we develop in this paper is non-parametric and lies at
the intersection of support estimation, density estimation and
dimensionality reduction: it consists in learning from training data
the support of a distribution, that can be decomposed into sub-cones,
hopefully of low dimension each and to which some mass is assigned,
according to 
empirical versions of probability measures on 
extreme regions. 

EVT has been intensively used in AD in the one-dimensional
situation, see for instance \cite{Roberts99}, \cite{Roberts2000},
\cite{Clifton2011}, \cite{Clifton2008}, \cite{Lee2008}.
In the multivariate setup, however, there is --to the best of our
knowledge--  no anomaly detection method
relying on \textit{multivariate} EVT. Until now, the multidimensional case has only been  tackled by means of extreme value statistics
based on univariate EVT. The major reason is 
the difficulty to scale up existing multivariate EVT models
with the dimensionality.
In the present paper we bridge the gap between the practice of AD and multivariate EVT by proposing a method which is
able to learn a sparse `normal profile' of multivariate extremes and,
as such, may be implemented to improve the accuracy of  any usual AD
algorithm. 
Experimental results show that this method significantly
improves the performance in extreme regions, as  
the risk
is taken not to uniformly predict as abnormal the most extremal observations, but to learn their dependence structure.
These improvements may typically be useful in applications where the cost of false positive errors (\ie~false alarms) is very high (\eg~predictive maintenance in aeronautics).

\bigskip
The structure of the paper is as follows.  The whys
and wherefores of multivariate EVT are explained in the following
Section~\ref{sec:framework}. A non-parametric estimator of the
subfaces' mass is introduced in Section~\ref{sec:estimation}, the
accuracy of which is
investigated by establishing 
finite sample error bounds relying on  {\sc VC} inequalities
tailored to low probability regions.
An application to Anomaly Detection is proposed in Section~\ref{sec:appliAD}, where some background on AD is provided, followed by a novel AD  
 algorithm which relies on the above mentioned non-parametric
 estimator. 
Experiments on both simulated and real data are performed in Section~\ref{sec:experiments}. Technical details are deferred to the Appendix section.

\section{Multivariate EVT Framework and Problem Statement}
\label{sec:framework}
%
%

Extreme Value Theory (\textsc{EVT}) develops models for learning the
unusual rather than the usual, in order to provide a reasonable
assessment of the probability of occurrence of rare events. Such models are widely used in fields
involving risk management such as Finance, Insurance, Operation Research, Telecommunication
or Environmental Sciences for instance. For clarity, we start off with recalling some key notions pertaining to (multivariate) \textsc{EVT}, that shall be involved in the formulation of the problem next stated and in its subsequent analysis. 

\subsection{Notations}
Throughout the paper, bold symbols refer to multivariate quantities, and for $m \in \mathbb{R}\cup \{\infty\}$, $\mb m$ denotes the vector $(m,\ldots,m)$.
Also, comparison operators  between two vectors (or between a vector and a real number) are
understood component-wise, \ie~ `$\mb x \le \mb z$' means `$x_j \le z_j$
for all $1\le j\le d$' and  for any real number $T$,  `$\mb x\le T$' means `$x_j \le T$ for all $1\le j\le d$'. 
We denote by $\lfloor u \rfloor$ the integer part of any real number $u$, by $u_+=\max(0,\; u)$ its positive part and by $\delta_{\mb a}$ the Dirac mass at any point $\mb a\in \mathbb{R}^d$.  
For uni-dimensional random variables $Y_1,\ldots,Y_n$, $Y_{(1)} \le \ldots\le Y_{(n)}$ denote their order statistics.

\subsection{Background on (multivariate) Extreme Value Theory}

In the univariate case, \textsc{EVT} essentially consists in modeling
the distribution of the maxima ({\it resp.} the upper tail of the \rv~under study) as a {\it generalized
extreme value distribution}, namely an element of the Gumbel, Fr\'echet
or Weibull parametric families ({\it resp.} by a generalized Pareto distribution).
It plays a crucial role in risk monitoring: 
consider the $(1-p)^{th}$
quantile of the distribution  $F$ of a r.v. $X$,  for a given exceedance probability $p$, that is
$x_p = \inf\{x \in \mathbb{R},~ \mathbb{P}(X > x) \le p\}$. For
moderate values of $p$, a natural empirical estimate is  $x_{p,n} = \inf\{x \in
\mathbb{R},~ 1/n \sum_{i=1}^n \mathds{1}_{\{X_i > x\}}\le p\}$.
However,  if
$p$ is very small, the finite  sample $X_1,\; 
\ldots, X_n$  carries insufficient information and the empirical quantile $x_{p,n}$ becomes 
unreliable. 
That is where \textsc{EVT} comes into play  by providing
parametric estimates of large
quantiles: 
whereas statistical inference often involves sample means and the
Central Limit
Theorem, 
\textsc{EVT} handles phenomena whose behavior is 
not ruled by an `averaging effect'. The focus is on the sample maximum
rather than the mean. The primal assumption is the existence of two
sequences $\{a_n, n \ge 1\}$ and $\{b_n, n \ge 1\}$, the $a_n$'s being
positive, and a non-degenerate distribution function $G$ such that
\begin{equation}
\label{intro:assumption1}
\lim_{n \to \infty} n ~\mathbb{P}\left( \frac{X - b_n}{a_n} ~\ge~ x \right) = -\log G(x)
\end{equation}
for all continuity points $x \in \mathbb{R}$ of $G$.
If this assumption is fulfilled -- it is the case for most textbook
distributions -- then $F$ is said to lie in the \textit{domain of
  attraction} of $G$: $F \in DA(G)$. The tail behavior of $F$
is then essentially characterized by $G$, which is proved to be -- up
to  re-scaling -- of the type $G(x) = \exp(-(1 + \gamma
x)^{-1/\gamma})$ for $1 + \gamma x > 0$, $\gamma \in \mathbb{R}$,
setting by convention $(1 + \gamma x)^{-1/\gamma} = e^{-x}$ for
$\gamma = 0$. The sign of $\gamma$ controls the shape of the tail and
various estimators of the re-scaling sequence and of the shape index $\gamma$ as well have
been studied in great detail, see \emph{e.g.}  \cite{DEd1989},
\cite{ELL2009}, 
\cite{Hill1975}, \cite{Smith1987}, \cite{BVT1996}. 
\medskip


\noindent \textbf{Extensions to the multivariate setting} are well understood
from a probabilistic point of view, but far from obvious from a
statistical perspective. Indeed, the tail dependence structure, ruling the possible simultaneous occurrence of large observations in several directions, has no finite-dimensional parametrization.

The analogue of (\ref{intro:assumption1}) for a $d$-dimensional \rv $\mb X = (X^1,\; \ldots, \; X^d)$ with distribution $\mb F(\mb x):=\mathbb{P}(X_1 \le x_1, \ldots, X_d \le x_d)$, namely $\mb F \in \textbf{DA} (\mb G)$ stipulates the existence of two sequences $\{\mb a_n, n \ge 1\}$ and $\{\mb b_n, n \ge 1\}$ in $\mathbb{R}^d$, the $\mb a_n$'s being positive,
and a non-degenerate distribution function $\mb G$ such that
\begin{equation}
\label{intro:assumption2}
\lim_{n \to \infty} n ~\mathbb{P}\left( \frac{X^1 - b_n^1}{a_n^1} ~\ge~ x_1 \text{~or~} \ldots \text{~or~} \frac{X^d - b_n^d}{a_n^d} ~\ge~ x_d \right) = -\log \mb G(\mathbf{x})
\end{equation}
for all continuity points $\mathbf{x} \in \mathbb{R}^d$ of $\mb G$. This clearly implies
that the margins $G_1(x_1),\ldots,G_d(x_d)$ are univariate extreme
value distributions, namely of the type $G_j(x) = \exp(-(1 + \gamma_j
x)^{-1/\gamma_j})$. Also, denoting by $F_1,\; \ldots,\; F_d$ the
marginal
distributions of $\mb F$, Assumption~\eqref{intro:assumption2} implies marginal convergence: $F_i \in DA(G_i)$ for $i=1,\; \ldots,\; n$.
 To understand 
the structure of the limit $\mb G$ and dispose of the
 unknown sequences $(\mb a_n, \mb b_n)$ (which are entirely determined by the
 marginal distributions $F_j$'s), 
 it is convenient to
 work with marginally standardized variables, that is, to separate the margins from the dependence structure in the description of the joint distribution of $\mb X$. Consider the standardized variables 
 $V^j =1/(1-F_j(X^j))$ and $\mathbf{V}=(V^1,\; \ldots,\; V^d)$.  In
 fact (see Proposition 5.10 in \cite{Resnick1987}), Assumption~\eqref{intro:assumption2} is
 equivalent to marginal convergences $F_j \in DA(G_j)$ as in (\ref{intro:assumption1}),  
 together with standard  multivariate regular variation of $\mathbf{V}$'s
 distribution,  which means existence of a limit measure $\mu$  on $ [0,\infty]^d\setminus\{\mb 0\}$ such that 
\begin{equation}
\label{intro:regvar}
 n~ \mathbb{P}\left( \frac{V^1 }{n} ~\ge~ v_1 \text{~or~} \cdots
   \text{~or~} \frac{V^d }{n} ~\ge~ v_d \right) \xrightarrow[n\to\infty]{}\mu \left([\mb 0,\mb v]^c\right),
\end{equation}
where $[\mb 0,\mathbf{v}]:=[0,\; v_1]\times \cdots \times
[0,\; v_d]$. Thus, the variable $\mb V$ satisfies
(\ref{intro:assumption2}) with $\mb a_n = \mb n = (n,\; \ldots,\; n)$, $\mb b_n =\mb 0 =(0,\; \ldots,\; 0)$. The dependence structure of the limit $\mb G$ in (\ref{intro:assumption2})
can be expressed by means of the so-termed \textit{exponent measure} $\mu$: 
\begin{equation}
- \log \mb G(\mathbf{x})= \mu\left( \left[ \mb 0, \left(\frac{-1}{\log G_1(x_1)}, \dots ,\frac{-1}{\log G_d(x_d)}\right)\right]^c\right). \nonumber
\end{equation}
The latter  is finite on
sets bounded away from $\mb 0$ and has the
homogeneity property : $\mu(t\point) =
t^{-1}\mu(\point)$. Observe in addition that, due to the standardization chosen (with
`nearly' Pareto margins), the support of $\mu$ is included in $[\mb 0,\; \mathbf{1}]^c$. 
 To wit, the measure $\mu$ should be viewed, up to a a normalizing factor, as
the asymptotic distribution of $\mb V$ in extreme regions. For any borelian subset $A$ bounded away from $\mb 0$ on which $\mu$ is continuous, we have 
\begin{equation}
\label{eq:regularVariation}
t~ \mathbb{P}\left( \mb V \in t A\right) \xrightarrow[t\to\infty]{}\mu(A).     
\end{equation}
Using the homogeneity property $\mu(t\point) =
t^{-1}\mu(\point)$, one may show
that $\mu$  can be decomposed into a  radial component and an angular component
$\Phi$, which are independent from each other (see \emph{e.g.} \cite{dR1977}).
Indeed, for all $\mb v = (v_1,...,v_d) \in \mathbb{R}^d$, set
\begin{equation}\label{eq:pseudoPolar_change}
  \left\{ \begin{aligned}
R(\mb v)&:= \|\mb v\|_\infty ~=~ \max_{i=1}^d v_i, \\
\Theta (\mb v) &:= \left( \frac{v_1}{R(\mb v)},..., \frac{v_d}{R(\mb v)} \right)
\in S_\infty^{d-1},     
  \end{aligned}\right.
\end{equation}
where $S_\infty^{d-1}$ is the positive orthant of  the unit sphere in $\mathbb{R}^d$ for the infinity norm.
Define the \emph{ spectral measure} (also called \emph{angular measure}) by $\Phi(B)= \mu (\{\mb v~:~R(\mb v)>1 ,
\Theta(\mb v) \in B \})$. Then, for every $B
\subset S_\infty^{d-1}$, 
\begin{equation}
\label{mu-phi}
\mu\{\mb v~:~R(\mb v)>z, \Theta(\mb v) \in B \} = z^{-1} \Phi (B)~. 
\end{equation}
In a nutshell,  there
is a one-to-one correspondence between the exponent measure $\mu$ and the angular measure
$\Phi$, both of them can be used to characterize the asymptotic tail
dependence of the distribution $\mb F$ (as
soon as the  margins $F_j$ are known), since   
\begin{equation}\label{eq:integratePhiLambda}
  \mu \big( [\mb 0,\mathbf{x}^{-1}]^c \big) =  \int_{\boldsymbol{\theta} \in S_{\infty}^{d-1}}   \max_j{\boldsymbol{\theta}_j x_j} \;\ud \Phi(\boldsymbol{\theta}),
\end{equation}
this equality being obtained from the change of variable~\eqref{eq:pseudoPolar_change} , see \emph{e.g.} Proposition 5.11 in \cite{Resnick1987}. 
Recall that here and beyond, operators on vectors are understood component-wise, so that $\mb x^{-1}=(x_1^{-1},\ldots,x_d^{_1})$.
The angular measure can be seen as the asymptotic conditional distribution of the
`angle' $\Theta$ given that the radius $R$ is large, up to the
normalizing constant $\Phi(S_\infty^{d-1})$. Indeed, dropping
the dependence on $\mb V$ for convenience, we have for any
\emph{continuity set} $A$ of $\Phi$, 
\begin{equation}
  \label{eq:limitConditAngle}
\begin{aligned}
  \PP(\Theta \in A ~|~R>r ) &= 
\frac{r  \PP(\Theta \in A , R>r ) }{r\PP(R>r)} 
& \xrightarrow[r\to \infty]{} \frac{\Phi(A)}{\Phi(S_\infty^{d-1})} .
\end{aligned}  
\end{equation}
The choice of the marginal standardization is somewhat arbitrary and alternative standardizations  lead
to different limits. Another common choice consists in considering `nearly
uniform' 
variables (namely, uniform variables when the margins are continuous): defining $\mathbf{U}$ by $U^j =1-F_j(X^j)$ for
$j\in\{1,\ldots,d\}$,   
Condition (\ref{intro:regvar}) is equivalent to each of the  following conditions:
\begin{itemize}
\item $\mathbf{U}$ has  `inverse multivariate regular variation' 
  with limit measure $\Lambda(\point)$ $:=\mu((\point)^{-1})$, namely,
  for every measurable set $A$ bounded away from $+\boldsymbol{\infty}$ which is a
  continuity set of $\Lambda$,
\begin{equation}
\label{reg_var_U}
t~ \mathbb{P}\left( \mb U \in t^{-1} A\right)
\xrightarrow[t\to\infty]{} \Lambda(A) = \mu(A^{-1}), 
\end{equation}
where $A^{-1} = \{\mb u \in \rset^{d}_+ ~:~(u_1^{-1},\ldots,u_d^{-1})
\in A\}$. The limit measure $\Lambda$ is finite on sets bounded away from $\{+\boldsymbol{\infty}\}$. 
\item The \textit{stable tail dependence function} (STDF) defined for $\mb x\in[\mb 0,\boldsymbol{\infty}], \mb x\neq\boldsymbol{\infty}$ by 
\begin{equation}
\label{stdf1}
l(\mb x) = \lim_{t \to 0} t^{-1} \mathbb{P} \left( U^1 \le t\, x_1 ~\text{or}~ \ldots ~\text{or}~ U^d \le t\,x_d  \right)
 = \mu\left([\mb 0, \mb{x}^{-1}]^c\right) 
\end{equation}
exists. 
\end{itemize}

\subsection{Statement of the Statistical Problem}\label{sec:decomposMu}

The focus of this work  is on 
the dependence structure in extreme regions of a 
random vector $\mb X$ in a multivariate domain of attraction (see
(\ref{intro:assumption1})). This asymptotic dependence   
 is fully described by the exponent measure $\mu$, or
equivalently by the spectral measure $\Phi$. The goal 
 of this paper is to  infer a  meaningful  (possibly sparse) summary of the latter.   
 As shall be seen below,
since the support of $\mu$ can be naturally partitioned in a specific
and interpretable manner, this boils down to accurately recovering the
mass spread on each element of the partition.  In order to formulate
this approach rigorously, additional 
definitions are required.
\medskip

 \noindent{\bf Truncated cones}. For any non empty subset of features $\alpha\subset\{1,\; \ldots,\; d \}$, consider the truncated cone (see Fig.~\ref{fig:3Dcones})
 \begin{equation}
 \label{cone}
 \mathcal{C}_\alpha = \{\mb v \ge 0,~\|\mb v\|_\infty \ge 1,~ v_j > 0 ~\text{ for } j \in \alpha,~ v_j = 0 ~\text{ for } j \notin \alpha \}.
 \end{equation}
The corresponding subset of the sphere is 
\begin{equation}
\Omega_{\alpha}  = \{\mb x \in S_{\infty}^{d-1} :  x_i > 0 \text{ for } i\in\alpha~,~  x_i = 0 \text{ for } i\notin \alpha   \} 
 = S_{\infty}^{d-1}\cap {\mathcal{C}}_\alpha, \nonumber
\end{equation}
and we clearly have $\mu(\mathcal{C}_\alpha) =  \Phi(\Omega_\alpha)$ for any $\emptyset\neq \alpha \subset\{1,\; \ldots,\; d \}$.
The collection $\{\mathcal{C}_\alpha:\; \emptyset \neq
\alpha\subset \{1,\; \ldots,\; d \}\}$ forming a partition of the
truncated positive orthant $\mathbb{R}_+^{d}\setminus[\mb 0,\mb 1]$, one may naturally decompose the exponent measure as 
\begin{equation}\label{eq:decomp1}
 \mu = \sum_{\emptyset \neq \alpha\subset\{1,\ldots ,d\}}
\mu_\alpha,
\end{equation} 
where each component $\mu_\alpha$ is concentrated on the
untruncated cone corresponding to ${\cal C_\alpha}$.
Similarly, 
 the $\Omega_\alpha $'s forming  a partition of
$S_\infty^{d-1}$, we have 
\begin{equation}
 \Phi ~=~ \sum_{\emptyset \neq \alpha\subset\{1,\ldots ,d\}} \Phi_\alpha ~, \nonumber
\end{equation}  
where $\Phi_\alpha$ denotes the restriction of $\Phi$ to 
${\Omega}_\alpha$ for all $\emptyset\neq \alpha \subset\{1,\; \ldots,\; d \}$.
The fact that mass is spread on   $\cone_\alpha$ indicates that conditioned upon
the event `$R(\mb V)$ is large' (\ie~an excess of a large radial threshold),
the components $V^j (j\in\alpha)$ may be simultaneously large while
the other  $V^j$'s  $(j\notin\alpha)$ are small, with positive
probability.
Each index subset $\alpha$ thus defines a specific direction in the tail region. 

 However this interpretation should be handled with care,
since  for $\alpha\neq\{ 1,\ldots,d\}$,  if $\mu(\cone_\alpha)>0$,
then $\cone_\alpha$  is not a continuity set of $\mu$
(it has empty interior), nor $\Omega_\alpha$ is a continuity set
of $\Phi$. Thus, the quantity $t \PP(\mb V \in t \cone_\alpha)$ does not necessarily converge to
$\mu(\cone_\alpha)$ as $t\rightarrow +\infty$. 
Actually, if $\mb F$ is continuous, we have $\PP(\mb V \in t \cone_\alpha) =0$
for any $t>0$. However, consider for $\epsilon \ge 0$ the {\it $\epsilon$-thickened rectangles }
\begin{equation}
 \label{eq:epsilon_Rectangle} 
 R_\alpha^\epsilon~=\{\mb v \ge 0,~\|\mb v\|_\infty \ge 1,~ v_j > \epsilon  ~\text{ for } j \in \alpha,
~v_j \le \epsilon ~\text{ for } j \notin \alpha
 \} ,
\end{equation}
Since the boundaries of the sets $R_\alpha^\epsilon$ are disjoint, only a countable number of them may be discontinuity sets of $\mu$. Hence, the threshold $\epsilon$ may be chosen arbitrarily small  in such a way that
$R_\alpha^\epsilon$ is a continuity set of $\mu$. 
The result stated below
shows that nonzero mass on $\cone_\alpha$ is the same as
nonzero mass on $R_\alpha^\epsilon$  for $\epsilon$ arbitrarily small.

\noindent
\begin{minipage}{0.5\linewidth}
\centering
\includegraphics[scale=0.2]{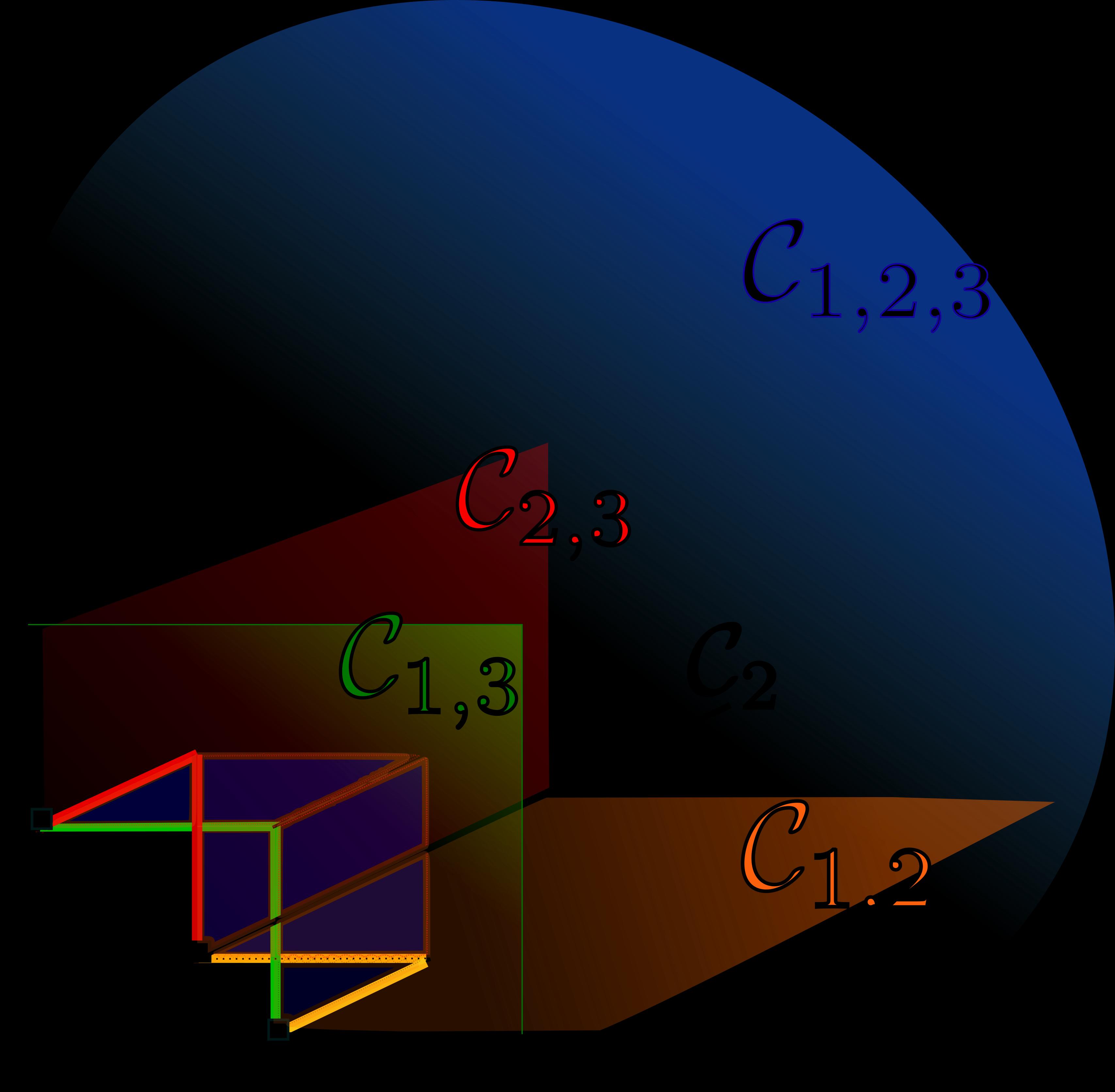}
\captionof{figure}{Truncated cones in 3D}
\label{fig:3Dcones}
\end{minipage}\hfill
\begin{minipage}{0.5\linewidth}
\centering
\includegraphics[width=0.71\linewidth]{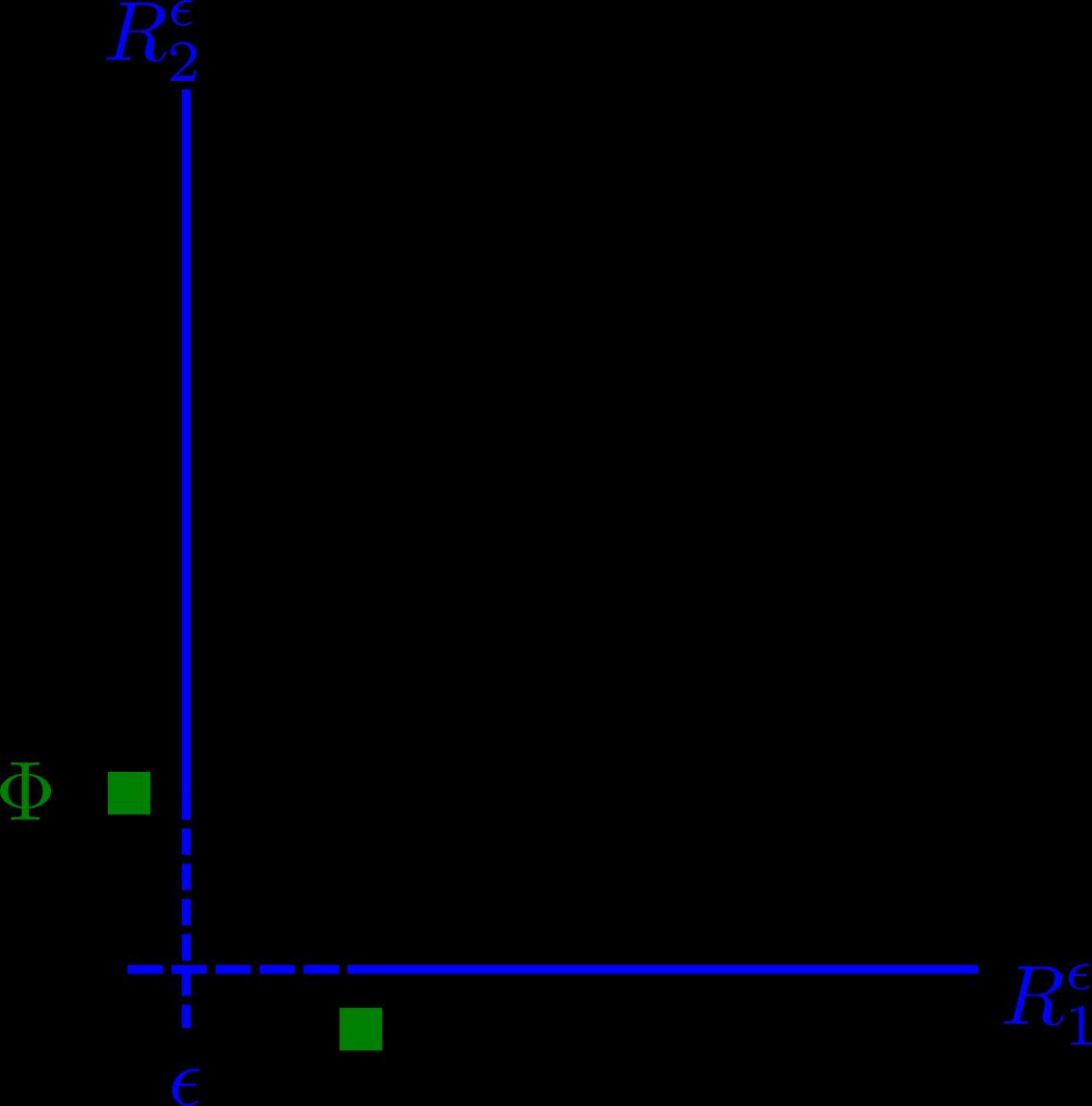}
\captionof{figure}{Truncated $\epsilon$-rectangles in 2D}
\label{2Dcones}
\end{minipage}

\begin{lemma}\label{lem:limit_muCalphaEps}
For any non empty index subset $\emptyset \neq \alpha\subset\{1,\ldots,d\}$, the exponent measure of
$\cone_\alpha$ is
\[
\mu(\cone_\alpha) = \lim_{\epsilon\to 0} \mu(R_\alpha^\epsilon).
\]
\end{lemma}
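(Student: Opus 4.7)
The plan is to exploit the fact that $\{\cone_\beta : \emptyset \ne \beta \subset \{1,\ldots,d\}\}$ partitions the truncated orthant $\{v \ge 0 : \|v\|_\infty \ge 1\}$, and that $R_\alpha^\epsilon$ lies entirely in this region. This gives the decomposition
\[
\mu(R_\alpha^\epsilon) = \sum_{\emptyset \ne \beta \subset \{1,\ldots,d\}} \mu\bigl(R_\alpha^\epsilon \cap \cone_\beta\bigr),
\]
and I would analyze each term according to the relationship between $\alpha$ and $\beta$.

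First, I would eliminate all $\beta$ with $\alpha \not\subset \beta$: for any $j \in \alpha \setminus \beta$, the constraint $v_j > \epsilon > 0$ from $R_\alpha^\epsilon$ conflicts with $v_j = 0$ from $\cone_\beta$, so the intersection is empty. Only terms with $\beta \supset \alpha$ remain. For the diagonal term $\beta = \alpha$, one has $R_\alpha^\epsilon \cap \cone_\alpha = \{v \in \cone_\alpha : v_j > \epsilon \text{ for all } j \in \alpha\}$; these sets are increasing as $\epsilon \downarrow 0$ with union $\cone_\alpha$ (since every $v \in \cone_\alpha$ satisfies $v_j > 0$ on $\alpha$), so continuity from below of $\mu$ yields $\mu(R_\alpha^\epsilon \cap \cone_\alpha) \uparrow \mu(\cone_\alpha)$.

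For the off-diagonal terms $\beta \supsetneq \alpha$, I would use the inclusion $R_\alpha^\epsilon \cap \cone_\beta \subset T_\beta^\epsilon$, where $T_\beta^\epsilon := \{v \in \cone_\beta : v_j \le \epsilon \text{ for all } j \in \beta \setminus \alpha\}$. The sets $T_\beta^\epsilon$ are decreasing as $\epsilon \downarrow 0$ with empty intersection, because the limit would force $v_j = 0$ for some $j \in \beta$, contradicting $v \in \cone_\beta$. Since $T_\beta^{\epsilon_0} \subset \cone_\beta \subset \{\|v\|_\infty \ge 1\}$ is bounded away from $\mb 0$, its $\mu$-mass is finite, and continuity from above then delivers $\mu(T_\beta^\epsilon) \downarrow 0$. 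Summing over the finite collection of $\beta$'s and swapping limit with sum yields $\lim_{\epsilon\to 0}\mu(R_\alpha^\epsilon) = \mu(\cone_\alpha)$.

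The main subtlety, and the step I expect to require the most care, is that the family $\{R_\alpha^\epsilon\}_\epsilon$ is itself neither monotone increasing nor monotone decreasing as $\epsilon \downarrow 0$: relaxing $v_j > \epsilon$ on $\alpha$ enlarges the set, while tightening $v_j \le \epsilon$ off $\alpha$ shrinks it. So one cannot apply continuity of $\mu$ directly to $R_\alpha^\epsilon$; the decomposition via the partition $\{\cone_\beta\}$ is what disentangles the problem into monotone pieces, one (the diagonal) increasing to $\cone_\alpha$ and the others decreasing to emptiness, whose limits can be handled separately.
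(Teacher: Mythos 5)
Your proof is correct, and it takes a genuinely different route from the paper's. The paper treats $\alpha=\{1,\ldots,d\}$ directly (there $R_\alpha^\epsilon$ is itself increasing as $\epsilon\downarrow 0$) and, for $\alpha\subsetneq\{1,\ldots,d\}$, writes $R_\alpha^\epsilon$ as a set difference $O_\alpha^\epsilon\setminus N_\alpha^\epsilon$, where $O_\alpha^\epsilon=\{\mb x\in\rset_+^d : x_j>\epsilon \ \forall j\in\alpha\}$ and $N_\alpha^\epsilon$ additionally requires $x_j>\epsilon$ for some $j\notin\alpha$; both families increase as $\epsilon\downarrow 0$, so two applications of continuity from below and a subtraction give the result. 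You instead decompose $R_\alpha^\epsilon$ over the partition $\{\cone_\beta\}$, discard the terms with $\alpha\not\subset\beta$, recover the diagonal term by continuity from below, and dominate each term with $\beta\supsetneq\alpha$ by a family decreasing to $\emptyset$, handled by continuity from above (you correctly secure the required finiteness via $T_\beta^{\epsilon_0}\subset\{\|\mb v\|_\infty\ge 1\}$, on which $\mu$ is finite). Both arguments circumvent the same obstruction you identify --- the non-monotonicity of $\epsilon\mapsto R_\alpha^\epsilon$ --- the paper by a difference of two monotone families, you by a partition into monotone pieces. Your version costs an extra case analysis and an appeal to continuity from above, but it buys an explicit identification of the error $\mu(R_\alpha^\epsilon)-\mu(R_\alpha^\epsilon\cap\cone_\alpha)$ as the mass leaking in from the strictly larger cones $\cone_\beta$, $\beta\supsetneq\alpha$, which is precisely the leakage quantified later in the bias bound of Lemma~\ref{lemma_simplex}; the paper's version is marginally shorter and, incidentally, is stated without the intersection with $\{\|\mb v\|_\infty\ge 1\}$ that both decompositions implicitly need.
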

\begin{proof}
First consider the case $\alpha=\dd$. Then $R_\alpha^\epsilon$'s forms an increasing sequence of sets as $\epsilon$ decreases and $\mathcal{C}_\alpha = R_\alpha^0 = \cup_{\epsilon>0, \epsilon \in \mathbb{Q}}~R_\alpha^\epsilon$. The result follows from the `continuity from below' property of the measure $\mu$. 
Now, for $\epsilon\ge 0$ and $\alpha\subsetneq\{1,\; \ldots,\; d\}$, consider the sets
\begin{equation}
\begin{aligned}
O_\alpha^\epsilon &  =\{ \mb x \in\rset_+^d~: \forall j \in \alpha:x_j > \epsilon \},  \nonumber \\
N_\alpha^\epsilon &  =\{\mb x \in\rset_+^d~: \forall j \in \alpha:   x_j > \epsilon, \exists j \notin\alpha: x_j > \epsilon \}, \nonumber
\end{aligned}
\end{equation}
so that $N_\alpha^\epsilon \subset O_\alpha^\epsilon$ and $R_\alpha^\epsilon  = O_\alpha^\epsilon \setminus N_\alpha^\epsilon$. Observe also that $\cone_\alpha = O_\alpha^0\setminus N_\alpha^0$. Thus, $\mu(R_\alpha^\epsilon) = \mu(O_\alpha^\epsilon) - \mu(N_\alpha^\epsilon)$, and $\mu(\cone_\alpha) = \mu(O_\alpha^0) - \mu(N_\alpha^0)$,  so that it is sufficient to show that 
\begin{equation}
\mu(N_\alpha^0) = \lim_{\epsilon\to 0}\mu(N_\alpha^\epsilon) ,
\quad \text{and}  \quad
\mu(O_\alpha^0) = \lim_{\epsilon\to 0}\mu(O_\alpha^\epsilon). \nonumber
\end{equation}
Notice that the $N_\alpha^\epsilon$'s and the $O_\alpha^\epsilon$'s form two increasing sequences of sets (when $\epsilon$ decreases), and that  $N_\alpha^0 = \bigcup_{\epsilon>0,\epsilon\in\mathbb{Q}} N_\alpha^\epsilon$, $O_\alpha^0 = \bigcup_{\epsilon>0,\epsilon\in\mathbb{Q}} O_\alpha^\epsilon$. This proves the desired result.
\end{proof}


We may now  make precise the  above heuristic
interpretation of the quantities $\mu(\cone_\alpha)$: the vector
$\mathcal{M}=\{ \mu(\mathcal{C}_{\alpha}):\; \emptyset \neq
\alpha\subset\{1,\; \ldots,\; d \}\}$ asymptotically describes the
dependence structure of the extremal observations. 
%
Indeed, by
Lemma~\ref{lem:limit_muCalphaEps}, and the discussion above, $
\epsilon$ may be chosen such that $R_\alpha^\epsilon$ is a
continuity set of $\mu$, while $\mu(R_\alpha^\epsilon)$ is
arbitrarily close to $\mu(\cone_\alpha)$.  Then, using the
characterization (\ref{eq:regularVariation}) of  $\mu$, 
the following asymptotic identity  holds true:
\begin{equation}
\begin{aligned}
\lim_{t\to\infty} t \PP\left( \|\mb V\|_\infty\ge t, V^j> \epsilon t~~ (j \in\alpha), V^j \le \epsilon t~~ (j\notin\alpha)\right) &=\mu(R_\alpha^\epsilon) \\ \nonumber
 &\simeq \mu(\cone_\alpha). \nonumber
\end{aligned}
\end{equation}
\begin{remark}
  \label{rk_approx_mu_n}
In terms of conditional probabilities, denoting $R = \|T(\mb X)\|$, where
  $T$ is the standardization map $\mb X\mapsto \mb V$,  we have
\begin{equation}
\nonumber  \PP(T(\mb X)\in r R_\alpha^\epsilon~|~ R>r) = 
\frac{r \PP(\mb V\in r R_\alpha^\epsilon)}{ r\PP(\mb V \in r([\mb 0,\mathbf{1}]^c)} \xrightarrow[r\to\infty]{}\frac{\mu(R_\alpha^\epsilon)}{\mu([\mb 0,\mathbf{1}]^c)},
\end{equation}
 as in~\eqref{eq:limitConditAngle}. In other terms, 
\begin{equation}
\begin{aligned}
\PP\left( V^j> \epsilon r~~ (j \in\alpha), V^j \le \epsilon r~~ (j\notin\alpha) ~\big\vert~ \|\mb V\|_\infty\ge r \right) &\xrightarrow[r\to\infty]{} C \mu(R_\alpha^\epsilon) \\ \nonumber
&~~~~~\simeq C\mu(\cone_\alpha), \nonumber
\end{aligned}
\end{equation}
where $C = 1/ \Phi(S_\infty^{d-1}) =1/\mu([\mb 0,\mathbf{1}]^c) $.
This clarifies the meaning of `large' and `small' in the heuristic
explanation given above. 
\end{remark}

\noindent {\bf Problem statement.} 
As explained above, our goal is to  describe the dependence on extreme
regions by investigating the structure of $\mu$ (or, equivalently,
that of $\Phi$). 
More precisely, the aim is twofold. First, recover a rough
approximation of the support of $\Phi$ based on the partition
$\{\Omega_\alpha, \alpha\subset\{1,\ldots,d\}, \alpha\neq
\emptyset\}$, that is, determine which $\Omega_\alpha$'s have
nonzero mass, or equivalently, which $\mu_\alpha's$ (\emph{resp.}
$\Phi_\alpha$'s) are nonzero. This support estimation is potentially
sparse (if a small number of $\Omega_\alpha$ have non-zero mass) and
possibly low-dimensional (if the dimension of the sub-cones
$\Omega_\alpha$ with non-zero mass is low).
The second objective is to 
investigate how the exponent measure $\mu$ spreads its mass on the
$\mathcal{C}_{\alpha}$'s, the theoretical quantity
$\mu(\mathcal{C}_{\alpha})$ indicating to which extent extreme
observations may occur in the `direction' $\alpha$ for $\emptyset
\neq \alpha \subset \{1,\; \ldots,\; d \}$. 
These two goals are achieved using empirical versions of
the angular measure defined in
Section~\ref{sec:classicEstimators}, evaluated on the
$\epsilon$-thickened rectangles $R_\alpha^\epsilon$.
Formally, we wish to recover the $(2^{d}-1)$-dimensional unknown
vector 
\begin{equation}
\label{eq:representation_M}
\mathcal{M}=\{ \mu(\mathcal{C}_{\alpha}):\; \emptyset \neq \alpha\subset\{1,\; \ldots,\; d \}\}
\end{equation}
 from $\mb X_1,\;
\ldots,\; \mb X_n\overset{i.i.d.}{\sim} \mb F$ and build an estimator
$\widehat{\mathcal{M}}$ such that
\begin{equation}
\nonumber
\vert\vert \widehat{\mathcal{M}} -\mathcal{M}
\vert\vert_{\infty} \;
{=} \; \sup_{\emptyset \neq \alpha \subset \{1,\; \ldots,\; d \}}\; \vert
\widehat{\mathcal{M}}(\alpha)- \mu(\mathcal{C}_{\alpha})\vert
\end{equation}
is small with large probability.  
In view of Lemma~\ref{lem:limit_muCalphaEps}, (biased) estimates of
$\mathcal{M}$'s components are built from an empirical version of 
the exponent measure, evaluated on the
$\epsilon$-thickened rectangles $R_\alpha^\epsilon$ (see Section~\ref{sec:classicEstimators} below). As a by-product, one obtains an estimate of the support of the limit measure $\mu$,
\begin{equation}
\bigcup_{\alpha:\; \widehat{\mathcal{M}}(\alpha)>0 }\mathcal{C}_{\alpha}. \nonumber
\end{equation}
 The results stated in the next section are non-asymptotic and sharp bounds are given by means of {\sc VC} inequalities tailored to low probability regions.

\subsection{Regularity Assumptions}\label{sec:RegularAssumptions} 
Beyond the existence of the limit measure $\mu$ (\ie~multivariate regular variation of $\mathbf{V}$'s distribution, see~\eqref{intro:regvar}), and thus, existence of an angular measure $\Phi$ (see (\ref{mu-phi})),
three additional assumptions are made, which are natural when estimation of the support of a distribution is considered. 

\begin{assumption}\label{hypo:continuous_margins}
The margins of $\mb X$ have continuous c.d.f., namely $F_j,~1 \le j \le d$ is continuous.
\end{assumption}
\noindent
 Assumption~\ref{hypo:continuous_margins}
is widely used in the context of non-parametric estimation of the dependence
structure (see \emph{e.g.} \cite{Einmahl2009}): it ensures that the transformed variables $V^j = (1 -
F_j(X^j))^{-1}$ (\emph{resp.} $U^j =  1 -F_j(X^j)$) have indeed a
standard Pareto distribution, $\P(V^j>x) = 1/x,~ x\ge 1$ (\emph{resp.}
the $U^j$'s are uniform variables). 

\bigskip
For any non empty subset $\alpha$ of $\{1,\; \ldots,\;d\}$, one denotes by $\ud x_\alpha$ the Lebesgue measure on ${\cal C}_\alpha$ and write $\ud x_\alpha  = \ud x_{i_1}\ldots\ud x_{i_k}$, when $\alpha=\{i_1, \ldots , i_k\}$. For convenience, we also write $\ud x_{\alpha\setminus{i}}$ instead of  $\ud x_{\alpha\setminus{\{i\}}}$.
\begin{assumption}\label{hypo:continuousMu}
Each component $\mu_\alpha$ of~\eqref{eq:decomp1} is absolutely continuous w.r.t.
Lebesgue measure $\ud x_\alpha$ on ${\cal C}_\alpha$. 
\end{assumption}
\noindent
Assumption~\ref{hypo:continuousMu} has a very convenient consequence 
regarding $ \Phi$: the fact that the exponent measure $\mu$ spreads no mass on subsets of the form 
$\{\mb x: \;\ninf{\mb x} \ge 1, x_{i_1} = \dotsb =  x_{i_r} \neq 0 \}$ with $r \ge 2$,
implies that the spectral measure $\Phi$ spreads no mass on edges $\{\mb x: \;\ninf{\mb x} = 1,  \; x_{i_1} = \dotsb =  x_{i_r} =1 \}$ with $r \ge 2~.$
This is summarized by the following result.
\begin{lemma}\label{lem:continuousPhi}
Under Assumption~\ref{hypo:continuousMu}, the following assertions holds true.
\begin{itemize}
\item  $ \Phi$ is concentrated on the (disjoint) edges
\begin{equation}
\begin{aligned}
   \Omega_{\alpha,i_0} = \{\mb x: \; \ninf{\mb x}  = 1,\; x_{i_0} = 1,~~& 0<  x_i < 1 ~~\text{~for~} i \in \alpha \setminus \{i_0\}\\ \nonumber
&x_i=0 ~~~~\text{~~~ for } i\notin \alpha ~~~~~~~\} \nonumber
\end{aligned}
\end{equation}
for $i_0\in\alpha$, $\emptyset \neq \alpha\subset\{1,\; \ldots,\; d \}$.
\item The restriction $\Phi_{\alpha,i_0}$ of $\Phi$ to
  $\Omega_{\alpha,i_0}$ is absolutely continuous \wrt~the Lebesgue measure $\ud x_{\alpha\setminus{i_0}}$ on the cube's edges, whenever $|\alpha|\ge 2 $.

\end{itemize}
\end{lemma}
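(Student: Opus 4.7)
The overall approach is to transfer the absolute continuity of each $\mu_\alpha$ granted by Assumption~\ref{hypo:continuousMu} onto the corresponding angular component $\Phi_\alpha$, using the polar-like identity~\eqref{mu-phi} which matches angular sets on $\Omega_\alpha$ to radially truncated cones in $\mathcal{C}_\alpha$. Crucially, \eqref{mu-phi} implies the homogeneity relation $\mu_\alpha(\{R>r,\;\Theta\in B\}) = r^{-1}\Phi_\alpha(B)$ for any $r>0$ and any Borel $B\subset\Omega_\alpha$, which will be the bridge between Lebesgue-nullness on $\mathcal{C}_\alpha$ and $\Phi_\alpha$-nullness on the edges.

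For the first assertion, I would decompose $\Omega_\alpha$ as the disjoint union of the edges $\Omega_{\alpha,i_0}$ (over $i_0\in\alpha$) together with a ``ridge'' set $E_\alpha = \{\mathbf{x}\in\Omega_\alpha : \exists\, i\neq i' \in\alpha,\; x_i=x_{i'}=1\}$, so that it only remains to show $\Phi_\alpha(E_\alpha)=0$. By~\eqref{mu-phi},
\[
\Phi_\alpha(E_\alpha) \;=\; \mu_\alpha\bigl(\{\mathbf{v}\in\mathcal{C}_\alpha : R(\mathbf{v})>1,\; \Theta(\mathbf{v})\in E_\alpha\}\bigr),
\]
and the latter set is contained in the finite union $\bigcup_{i\neq i'\in\alpha}\{\mathbf{v}\in\mathcal{C}_\alpha : v_i=v_{i'}\}$ of hyperplane traces, which has zero $|\alpha|$-dimensional Lebesgue measure on $\mathcal{C}_\alpha$ and therefore zero $\mu_\alpha$-mass under Assumption~\ref{hypo:continuousMu}.

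For the second assertion, fix $|\alpha|\ge 2$, $i_0\in\alpha$, and a set $B\subset\Omega_{\alpha,i_0}$ that is Lebesgue-null w.r.t.\ $\ud x_{\alpha\setminus i_0}$; the goal is to conclude $\Phi_{\alpha,i_0}(B)=0$. The key computation is the pseudo-polar change of variables $\mathbf{v}\mapsto(r,\boldsymbol{\theta})$ defined by $r=v_{i_0}$ and $\theta_j=v_j/v_{i_0}$ for $j\in\alpha\setminus\{i_0\}$, valid on the portion of $\mathcal{C}_\alpha$ where $v_{i_0}$ is the unique coordinate-wise maximum; a direct Jacobian computation gives $\ud v_\alpha = r^{|\alpha|-1}\,\ud r\,\ud x_{\alpha\setminus i_0}$. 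Consequently the slab $S=\{\mathbf{v}\in\mathcal{C}_\alpha : R(\mathbf{v})\in(1,2],\;\Theta(\mathbf{v})\in B\}$ has Lebesgue measure $\int_1^2 r^{|\alpha|-1}\,\ud r \cdot \mathrm{Leb}_{\alpha\setminus i_0}(B)=0$, so Assumption~\ref{hypo:continuousMu} yields $\mu_\alpha(S)=0$. Writing $\mu_\alpha(S)=\mu_\alpha(\{R>1,\,\Theta\in B\})-\mu_\alpha(\{R>2,\,\Theta\in B\})=\Phi_{\alpha,i_0}(B)-\tfrac{1}{2}\Phi_{\alpha,i_0}(B)$ via the homogeneity relation, we obtain $\Phi_{\alpha,i_0}(B)=0$ as required. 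The only delicate step is verifying the $r^{|\alpha|-1}$ Jacobian and the diffeomorphism domain; the restriction $|\alpha|\ge 2$ is essential because for $|\alpha|=1$ the edge $\Omega_{\alpha,i_0}$ collapses to a single vertex and the statement would be vacuous.
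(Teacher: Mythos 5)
Your proof is correct and follows essentially the same route as the paper: reduce both assertions to the fact that a Lebesgue-null angular set induces a Lebesgue-null truncated cone in $\mathcal{C}_\alpha$, then invoke Assumption~\ref{hypo:continuousMu} together with the homogeneity relation~\eqref{mu-phi}. The only difference is that you make explicit (via the $r^{|\alpha|-1}$ Jacobian and the bounded slab) the step that the paper merely asserts, namely that $\tilde D$ has zero Lebesgue measure on $\mathcal{C}_\alpha$; this is a welcome amount of extra rigor, not a divergence in method.
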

\begin{proof}
  The first assertion straightforwardly results from the discussion above.  Turning to the
  second point, consider any  measurable
  set $D \subset \Omega_{\alpha,i_0}$ such that $\int_{D}\ud x_{\alpha \setminus i_0} = 0$. Then the
  induced truncated cone $\tilde D = \{ \mb v:~ \|\mb v\|_\infty \ge
  1, \mb v / \|\mb v\|_\infty \in D \}$ satisfies $\int_{\tilde D}\ud
  x_{\alpha} = 0$ and belongs to $\mathcal{C}_\alpha$. Thus,  by virtue of
  Assumption~\ref{hypo:continuousMu}, $\Phi_{\alpha,
    i_0}(D)=\Phi_{\alpha}(D) = \mu_\alpha(\tilde D) = 0$.
\end{proof}

\noindent
It follows from Lemma~\ref{lem:continuousPhi} that the angular
measure $\Phi$ decomposes as $\Phi = \sum_{\alpha} \sum_{i_0\in\alpha}
\Phi_{\alpha,i_0}$  and that there exist densities $ \frac{\ud
  \Phi_{\alpha,i_0}}{\ud x_{ \alpha \smallsetminus i_0}},~~
|\alpha|\ge 2,~i_0\in\alpha,$ such that for all $B \subset
\Omega_\alpha,~~ |\alpha| \ge 2$,
\begin{equation}
\label{eq:decomposePhi}
\Phi(B)~=~ \Phi_\alpha(B)~=~ \sum_{i_0\in\alpha} \int_{B\cap \Omega_{
    \alpha,i_0} } \frac{\ud \Phi_{\alpha,i_0}}{\ud x_{ \alpha
    \smallsetminus i_0}}(x) \ud x_{\alpha\setminus i_0}. 
\end{equation}
In order to formulate the next assumption, for $|\beta| \ge 2$, we set
\begin{equation}
\label{eq:supDensity}
M_\beta = 
~ \sup_{i \in\beta} ~~\sup_{x\in\Omega_{\beta,i}} ~~~~ \frac{\ud  \Phi_{\beta, i}}{\ud x_{\beta \setminus i}}(x).
\end{equation}

\begin{assumption}\label{hypo:abs_continuousPhi}({\sc Sparse Support})
  The angular density is uniformly bounded on $S^{d-1}_\infty$ ($\forall |\beta| \ge 2,~M_\beta < \infty$), and there exists a constant  $M>0$, such that we have $\sum_{|\beta| \ge 2} M_\beta < M$, where the sum is over subsets $\beta$ of $\{1,\ldots,d\}$ which contain at least two elements.
\end{assumption}

\begin{remark}
The constant $M$ is problem dependent. However, in the case where our representation $\mathcal{M}$ defined in \eqref{eq:representation_M} is the most informative about the angular measure, that is, when the density of $\Phi_\alpha$ is constant on $\Omega_\alpha$, we have $M \le d$: Indeed, in such a case, 
$M \le \sum_{|\beta| \ge 2} M_\beta |\beta| = \sum_{|\beta| \ge 2} \Phi(\Omega_\beta) \le \sum_\beta \Phi(\Omega_\beta) \le \mu([\mb 0,\mb 1]^c)$.
The equality inside the last expression comes from the fact that the Lebesgue measure of a sub-sphere $\Omega_\alpha$ is $|\alpha|$, for $|\alpha| \ge 2$. Indeed, using the notations defined in Lemma~\ref{lem:continuousPhi}, $\Omega_\alpha = \bigsqcup_{i_0 \in \alpha}\Omega_{\alpha,i_0}$, each of the edges $\Omega_{\alpha,i_0}$ being unit hypercube. 
Now, $\mu([\mb 0,\mb 1]^c) \le \mu(\{v,~ \exists j,~ v_j > 1\} \le d \mu(\{v,~v_1 >1\})) \le d$.

\noindent
Note that the summation $\sum_{|\beta| \ge 2} M_\beta |\beta|$ is smaller than $d$ despite the (potentially large) factors $|\beta|$. Considering $\sum_{|\beta| \ge 2} M_\beta$ is thus reasonable: in particular, $M$ will be small when only few $\Omega_\alpha$'s have non-zero $\Phi$-mass, namely when the representation vector $\mathcal{M}$ defined in \eqref{eq:representation_M} is sparse.
\end{remark}
\noindent Assumption~\ref{hypo:abs_continuousPhi} is naturally involved in the derivation of upper bounds on the error made when approximating $\mu(\cone_\alpha)$ by the empirical counterpart of $\mu(R_\alpha^\epsilon)$.
The estimation error bound derived in Section~\ref{sec:estimation} depends on the sparsity constant $M$.

\section{A non-parametric estimator of the subcones' mass : definition and preliminary results}
\label{sec:estimation}

In this section, an estimator $\widehat{\mathcal{M}}(\alpha)$ of each of the sub-cones' mass
$\mu(\cone_\alpha)$, $\emptyset\neq\alpha\subset\dd$, is
  proposed, based on observations  $\mb X_1,.\ldots, \mb X_n$, \iid~copies of $\mb X\sim \mb F$.
Bounds on the error $\vert\vert
  \widehat{\mathcal{M}}-\mathcal{M}\vert\vert_{\infty}$ are
  established. In the remaining of this paper, we work under
  Assumption~\ref{hypo:continuous_margins} (continuous margins, see
  Section~\ref{sec:RegularAssumptions}). 
Assumptions~\ref{hypo:continuousMu}~and~\ref{hypo:abs_continuousPhi}
are not necessary to prove a preliminary result on a class of
rectangles (Proposition~\ref{prop:g} and Corollary~\ref{cor:mu_n-mu}). However, they are required 
to bound the bias induced by the tolerance parameter
$\epsilon$ (in Lemma~\ref{lemma_simplex}, Proposition~\ref{prop_simplex} and in the main result, Theorem~\ref{thm-princ}).  
\subsection{A natural empirical version of $\mu$}
\label{sec:classicEstimators}
 Since the marginal distributions $F_j$ are unknown, we classically consider
the empirical counterparts of the $\mb V_i$'s, 
$\mb{\widehat  V}_i = (\widehat V_i^1, \ldots,\widehat
V_i^d)$, $1\le i\le n$, as standardized  variables obtained from a
rank transformation (instead of a probability integral
transformation),  
\[\mb{\widehat  V}_i = \left( ( 1- \widehat F_j
  (X_i^j))^{-1}\right)_{1 \le j \le d}~, \]
  where 
$\widehat F_j (x) = (1/n) \sum_{i=1}^n \mathbf{1}_{\{X_i^j < x\}}$.
We denote by $T$ (\emph{resp.} $\widehat T$) the standardization
(\textit{resp.} the empirical standardization), 
\begin{align}
\label{def:transform}
T(\mb x) = \left( \frac{1}{1- F_j (x^j)}\right)_{1\leq j\leq d}
\text{~~and~~}
\widehat T(\mb x) = \left( \frac{1}{1- \widehat F_j(x^j)}\right)_{1\leq j\leq d}. 
\end{align}
The empirical probability distribution of the rank-transformed data is then given by
\begin{align*}
\mathbb{\widehat P}_n=(1/n)\sum_{i=1}^n\delta_{\mb{\widehat{V}}_i}.
\end{align*}
Since for a $\mu$-continuity set $A$ bounded away from $0$,   $t~ \mathbb{P}\left( \mb V \in t A\right)  \to \mu(A)$ as $t \to \infty$, see~\eqref{eq:regularVariation}, a natural empirical version of $\mu$ is defined as 
\begin{align}\label{mu_n}
\mu_n(A) ~=~ \frac{n}{k} \widehat{\mathbb{P}}_n (\frac{n}{k}A) ~=~ \frac{1}{k}\sum_{i=1}^n \mathbf{1}_{\{\mb{\widehat{V}}_i \in \frac{n}{k} A\}}~.
\end{align}
 Here and throughout, we place ourselves in the asymptotic setting stipulating that $k = k(n) >0$ is such that $k \to \infty$ and $k = o(n)$ as $n \to \infty$.
The ratio $n/k$ plays the role of a large radial threshold.
Note that this estimator is commonly used in the field of
non-parametric estimation of the dependence structure, see \textit{e.g.}
\cite{Einmahl2009}.

\subsection{Accounting for the  non asymptotic nature of  data:
  $\epsilon$-thickening.}

 Since the cones $\mathcal{C}_\alpha$ have zero Lebesgue measure, 
and since, under Assumption~\ref{hypo:continuous_margins}, the margins are
continuous, the cones  are not likely to receive any empirical mass, 
 so that  simply counting points in $\frac{n}{k}\mathcal{C}_\alpha$ is not an
option: with probability one, only
the largest dimensional cone (the central one, corresponding to
$\alpha= \{1,\ldots,d\})$ will be hit. 
In view of Subsection~\ref{sec:decomposMu} and
Lemma~\ref{lem:limit_muCalphaEps}, 
 it is natural to introduce a 
tolerance parameter $\epsilon>0$ and to approximate the asymptotic mass
of $\mathcal{C}_\alpha$ with the non-asymptotic mass of
$R_\alpha^\epsilon$. We thus define the non-parametric estimator $\widehat{M}(\alpha)$ of
$\mu(\cone_\alpha)$ as
\begin{align}
\label{heuristic_mu_n}
\hatmass(\alpha) = \mu_n(R_\alpha^\epsilon), \qquad
\emptyset\neq\alpha\subset\dd.  
\end{align}
Evaluating  $\hatmass(\alpha)$  boils down (see~\eqref{mu_n})
to counting points in $(n/k)\,R_{\alpha}^{\epsilon}$, as illustrated in Figure~\ref{estimation_rect}. The estimate $\hatmass(\alpha)$ is thus a (voluntarily
$\epsilon$-biased) natural estimator of $\Phi(\Omega_\alpha) = \mu(\mathcal{C}_\alpha)$.
\begin{figure}[h]
  \centering
  \includegraphics[width = 0.7\textwidth]{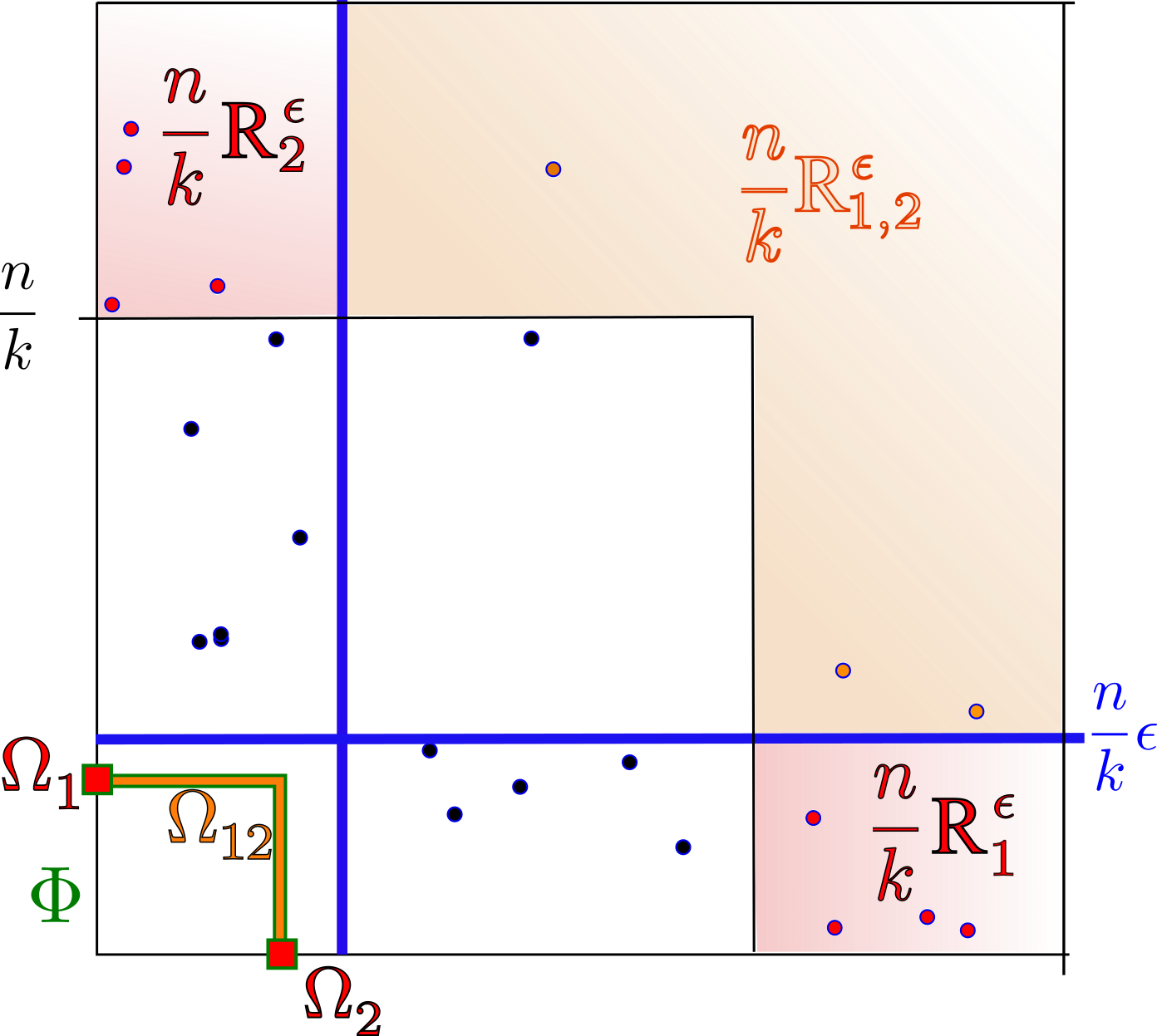}
  \caption{Estimation procedure}
  \label{estimation_rect}
\end{figure}

The coefficients $(\hatmass(\alpha))_{\alpha\subset\{1,\ldots,d\}}$ related to the cones $\mathcal{C}_\alpha$ constitute a summary representation of the dependence structure.  
This representation is sparse as soon as the $\mu_n^{\alpha,  \epsilon}$ are positive only for a few groups of features $\alpha$ (compared to the total number of groups or sub-cones, $2^d$ namely). It is  is low-dimensional as soon as each of these groups $\alpha$ is of small cardinality, or equivalently the corresponding sub-cones are low-dimensional compared with $d$.

In fact, $\hatmass(\alpha)$ is (up to a normalizing constant) an empirical version of the  conditional probability that $T(\mb X)$ belongs to the rectangle $ r R_\alpha^\epsilon$, given that $\|T(\mb X)\|$ exceeds  a large threshold $r$. Indeed, as explained in Remark~\ref{rk_approx_mu_n},  
\begin{align}\label{eq:interprete_mun_Pcondit}
\mathcal{M}(\alpha) = \lim_{r \to \infty} \mu([\mb 0,\mathbf{1}]^c)~~\mathbb{P}(T(\mb X)\in r R_\alpha^\epsilon ~|~ \|T(\mb X)\|\ge r) . 
\end{align}

The remaining of this section is devoted to obtaining non-asymptotic upper bounds on the error $\vert\vert \widehat{\mathcal{M}}-\mathcal{M}\vert\vert_{\infty}$. 
The main result is stated in Theorem~\ref{thm-princ}. 
Before all, notice that the error may be obviously decomposed as the sum of a stochastic term and a bias term inherent to the $\epsilon$-thickening approach:
\begin{align}
\vert\vert \widehat{\mathcal{M}}-\mathcal{M}\vert\vert_{\infty} &~=~\max_{\alpha} |
\mu_n(R_\alpha^\epsilon)-\mu(\mathcal{C}_\alpha)|\nonumber
\\&~\le~ ~\max_\alpha |\mu-\mu_n|(R_\alpha^\epsilon) ~+~ \max_\alpha|\mu(R_\alpha^\epsilon)-\mu(\mathcal{C}_\alpha)|~.\label{error_decomp} 
\end{align}
Here and beyond, for notational convenience, we simply denotes `$\alpha$'  for `$\alpha$ non empty subset of $\{1,\; \ldots,\;d\}$'. The main steps of the argument leading to Theorem~\ref{thm-princ} are  as follows. First, obtain a uniform upper bound on the error $|\mu_n - \mu|$ restricted to a well chosen VC class of rectangles (Subsection~\ref{sec:rectangles}), and deduce an uniform bound on  $|\mu_n - \mu|(R_\alpha^\epsilon)$ (Subsection~\ref{sec:boundErrorEpsilonCones}). Finally, using the regularity assumptions (Assumption~\ref{hypo:continuousMu} and Assumption~\ref{hypo:abs_continuousPhi}), bound the difference $|\mu(R_\alpha^\epsilon) - \mu(\cone_\alpha)|$ (Subsection~\ref{sec:boundMuEpsilonCones}).

\subsection{Preliminaries: uniform approximation over a VC-class of rectangles}
\label{sec:rectangles}
This subsection builds on the theory developed in \cite{COLT15}, where a non-asymptotic bound is stated on the estimation of the stable tail dependence function (STDF) defined in~\eqref{stdf1}. 
The STDF $l$ is related to the class of sets of the form $[\mb 0, \mb v]^c$ (or $[\mb u, \boldsymbol{\infty}]^c$ depending on which standardization is used), and an equivalent definition is 
\begin{align}
\label{stdf}
l(\mathbf{x}):= \lim_{t \to \infty} t \tilde F (t^{-1}\mathbf{x}) = \mu([\mb 0, \mb x ^{-1}]^c) 
\end{align}
\noindent
with $\tilde F (\mathbf{x}) = (1-F) \big( (1-F_1)^\leftarrow(x_1),\ldots, (1-F_d)^\leftarrow(x_d)  \big)$.
 Here the notation
$(1-F_j)^\leftarrow(x_j)$ denotes the quantity $\sup\{y\,:\; 1-F_j(y) \ge x_j\}$. Recall that the marginally uniform variable $\mb U$ is defined by  $U^j = 1-F_j(X^j)$ ($1\le j\le d$). Then  in terms of standardized variables $U^j$, 
\begin{align}
\label{def:tildeF}
\tilde F(\mb x) = \P\Big(\bigcup_{j=1}^d\{U^j< x_j\}\Big) = \P(\mb
U\in [\mb x, \boldsymbol{\infty}[^c) = \P(\mb V \in [\mb 0, \mb x^{-1}]^c). 
\end{align}

A natural estimator of $l$ is its empirical version defined as
follows,  see \cite{Huangphd}, \cite{Qi97}, \cite{Drees98}, \cite{Einmahl2006}, \cite{COLT15}:
\begin{align}\label{empir-Stdf}
l_n(\mathbf{x}) &= \frac{1}{k}~\sum_{i=1}^{n} \mathds{1}_{\{ X_i^1 \ge
  X^1_{(n-\lfloor kx_1 \rfloor+1)}  \text{~~or~~} \ldots \text{~~or~~}
  X_i^d \ge  X^d_{(n-\lfloor kx_d\rfloor+1)}  \}}~.
\end{align}
The expression is indeed suggested by the definition of $l$ in  (\ref{stdf}), with all distribution functions and  univariate quantiles replaced by their empirical counterparts, and with $t$  replaced by $n/k$.
The following lemma allows to derive alternative expressions for the empirical version of the STDF.  
\begin{lemma}
\label{lem:equivalence}
Consider the rank transformed variables 
$\mb{  \widehat U}_i = (\mb{ \widehat V}_i)^{-1} = ( 1- \widehat F_j
(X_i^j))_{1\leq j\leq d}$ for $i = 1, \ldots, n$. Then, for $(i,j)\in \{1,\ldots, n\}\times \{1,\ldots,d\}$,
with probability one,
$$
\widehat U_i^j \le \frac{k}{n} x_j^{-1} ~\Leftrightarrow~
\widehat V_i^j \ge \frac{n}{k} x_j ~\Leftrightarrow~
X_i^j \ge X_{(n-\lfloor  kx_j^{-1} \rfloor +1)}^j ~\Leftrightarrow~
U_i^j \le U_{(\lfloor kx_j^{-1}\rfloor)}^j~.
$$
\end{lemma}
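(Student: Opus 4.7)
The plan is to reduce all four conditions to a single condition on the rank of $X_i^j$ within the sample $X_1^j,\ldots,X_n^j$, exploiting that Assumption~\ref{hypo:continuous_margins} ensures that no ties occur with probability one.

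First I would dispose of the equivalence $\widehat U_i^j \le (k/n) x_j^{-1} \Leftrightarrow \widehat V_i^j \ge (n/k) x_j$: since $\widehat V_i^j = 1/\widehat U_i^j$ by construction, and since continuity of $F_j$ guarantees that $\widehat U_i^j > 0$ almost surely (ranks are all distinct), this is purely an algebraic manipulation.

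Next I would introduce $r_j(i) = \#\{\ell : X_\ell^j \le X_i^j\}$, the almost-sure rank of $X_i^j$ in the $j$-th coordinate. Under Assumption~\ref{hypo:continuous_margins}, the rank is uniquely defined, $X_i^j = X_{(r_j(i))}^j$, and the definition $\widehat F_j(x) = (1/n)\sum_\ell \mathds{1}\{X_\ell^j < x\}$ gives $\widehat F_j(X_i^j) = (r_j(i)-1)/n$, hence $\widehat U_i^j = (n-r_j(i)+1)/n$. The condition $\widehat U_i^j \le (k/n) x_j^{-1}$ then reads $n-r_j(i)+1 \le k x_j^{-1}$, and because $r_j(i)$ is an integer this is equivalent to $r_j(i) \ge n - \lfloor k x_j^{-1}\rfloor + 1$. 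Reading this inequality back through the order-statistic notation yields $X_i^j \ge X_{(n-\lfloor k x_j^{-1}\rfloor+1)}^j$, which settles the middle equivalence.

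Finally, for the last equivalence I would use that $F_j$ is continuous and non-decreasing, so $U_\ell^j = 1-F_j(X_\ell^j)$ reverses the ordering: the rank of $U_i^j$ among $U_1^j,\ldots,U_n^j$ equals $n - r_j(i) + 1$, whence $U_i^j = U^j_{(n-r_j(i)+1)}$ almost surely. Thus $U_i^j \le U^j_{(\lfloor k x_j^{-1}\rfloor)}$ amounts to $n - r_j(i) + 1 \le \lfloor k x_j^{-1}\rfloor$, \emph{i.e.}\ exactly the same rank inequality $r_j(i) \ge n - \lfloor k x_j^{-1}\rfloor + 1$ obtained above. All four conditions therefore coincide almost surely.

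The only delicate point worth flagging is the handling of the floor/ceiling: since $r_j(i) \in \mathbb{Z}$, the equivalence between $r_j(i) \ge n - kx_j^{-1} + 1$ and $r_j(i) \ge n - \lfloor k x_j^{-1}\rfloor + 1$ must be carried out carefully (both conditions coincide because $\lceil n - k x_j^{-1} + 1\rceil = n - \lfloor k x_j^{-1}\rfloor + 1$), but beyond that the lemma is essentially a bookkeeping exercise on ranks made legitimate by the continuity of the margins.
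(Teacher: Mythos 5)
Your proof is correct and follows essentially the same route as the paper's: both reduce all four conditions to the single rank inequality $\mathrm{rank}(X_i^j) \ge n - \lfloor k x_j^{-1}\rfloor + 1$, using $\widehat F_j(X_i^j) = (\mathrm{rank}(X_i^j)-1)/n$, the integrality of the rank to insert the floor, and the almost-sure order reversal under $x \mapsto 1-F_j(x)$ for the last equivalence. Your explicit flagging of the floor/ceiling step is a welcome touch but does not change the argument.
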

The proof of Lemma~\ref{lem:equivalence} is standard and is provided in~\ref{appendix_proof} for completeness.
By Lemma~\ref{lem:equivalence}, the following alternative expression of $l_n(\mb x)$ holds true:
\begin{align}\label{empir-Stdf2}
l_n(\mb x)=\frac{1}{k} ~\sum_{i=1}^{n} \mathds{1}_{ \{U_i^1 ~\le~ U^1_{([kx_1])} \text{~~or~~} \ldots \text{~~or~~}  U_i^d ~\le~ U^d_{([kx_d])} \}}= \mu_n \left( [\mb 0, \mb x^{-1}]^c \right).
\end{align}
Thus, bounding the error $|\mu_n - \mu|([\mb 0,\mb x^{-1}]^c)$ is the same as
bounding $|l_n - l|(\mb x)$. 

Asymptotic properties of this empirical counterpart have been studied
in \cite{Huangphd}, \cite{Drees98}, \cite{Embrechts2000} and
\cite{dHF06} in the bivariate case, and \cite{Qi97},
\cite{Einmahl2012}. 
in the general multivariate
 case. 
In \cite{COLT15}, a non-asymptotic bound is established on the maximal deviation $$\sup_{0 \le \mb x \le T} |l(\mb x) - l_n(\mb x)|$$ for a fixed $T>0$, or equivalently on 
$$ \sup_{1/T \le \mb x } \left|\mu ( [\mb 0, \mb x]^c ) - \mu_n ( [\mb 0, \mb x]^c )\right|. $$
\noindent
The exponent measure $\mu$ is indeed easier to deal with when restricted to the class of sets of the form $[\mb 0, \mb x]^c$, which is fairly simple in the sense that it has finite VC dimension. 

In the present work,  an important step is to bound the error on the
class of $\epsilon$-thickened rectangles $R_\alpha^\epsilon$. This is achieved by
using a more general class $R(\mb x, \mb z, \alpha, \beta)$, which includes (contrary to the collection of sets $[\mb 0,\mb x]^c$) the $R_\alpha^\epsilon$'s . 
This flexible class is defined by
\begin{align}
\nonumber R(\mb x, \mb z, \alpha, \beta) &~=~ \Big\{ \mb y \in [0, \infty]^d,
~~ y_j \ge x_j ~~\text{ for } j \in \alpha, 
\\&~~~~~~~~~~~~~~~~~~~~~~~~~~ y_j < z_j ~~\text{ for } j \in \beta \quad \Big\},
~~~  \mb x, \mb z \in [0, \infty]^{d}. \label{set-R}
\end{align}
Thus,
\begin{align*}
\mu_n \left( R(\mb x, \mb z, \alpha, \beta) \right) & ~=~ \frac{1}{k}
\sum_{i=1}^n \mathds{1}_{\{ \widehat V_i^j ~\ge~ \frac{n}{k} x_j  \text{ for } j \in \alpha \text{ ~and~ } \widehat V_i^j ~<~\frac{n}{k} x_j  \text{ for } j \in \beta \}}~.
\end{align*}

\noindent
Then, define the  functional $g_{\alpha,\beta}$ (which plays the same role as the STDF) as follows:  
for $\mb x \in [0, \infty]^{d}\setminus\{\boldsymbol{\infty}\}$, $\mb z \in [0, \infty]^d$, $\alpha
\subset \{1,\ldots,d\} \setminus \emptyset$ and $\beta \subset
\{1,\ldots,d\}$, let
\begin{align}
&g_{\alpha, \beta}(\mb x, \mb z) ~~=~~ \lim_{t \to \infty} t \tilde
F_{\alpha, \beta}(t^{-1} \mb x, t^{-1} \mb z), \text{~~with}
\label{g-alpha} \\
&\tilde F_{\alpha, \beta}( \mb x,  \mb z) ~~=~~
 \mathbb{P} \left[ \left\{ U^j \le x_j ~~\text{ for } j \in \alpha
   \right\}~~\bigcap~~ \left\{U^j > z_j  ~~\text{ for } j \in\beta
   \right\}\right].
 \label{F-tilde-alpha}
\end{align}
 Notice that 
$\tilde F_{\alpha, \beta}( \mb x,  \mb z)$ is an extension of the
non-asymptotic approximation $\tilde F$ in~(\ref{stdf}).   
By~(\ref{g-alpha}) and~\eqref{F-tilde-alpha}, we have 
\begin{align*}
  g_{\alpha, \beta}(\mb x, \mb z) &= \lim_{t \to \infty} t \mathbb{P}
  \left[ \left\{ U^j \le t^{-1} x_j ~\text{ for } j \in \alpha
    \right\}~\bigcap~ \left\{  U^j > t^{-1} z_j  ~\text{ for } j
      \in\beta \right\}\right] \\&= \lim_{t \to \infty} t \mathbb{P}
  \left[ \mb V \in t R(\mb x^{-1}, \mb z^{-1}, \alpha,~\beta) \right]~,
\end{align*}
 so that using~\eqref{eq:regularVariation},
\begin{align}\label{g_with_mu}
g_{\alpha, \beta}(\mb x, \mb z) = \mu([ R(\mb x^{-1}, \mb z^{-1}, \alpha,~\beta)]).
\end{align}

%
The following lemma makes the relation between  $g_{\alpha,\beta}$ and  the angular measure
$\Phi$ explicit. Its proof is given in~\ref{appendix_proof}.
\begin{lemma}
\label{lem:g-alpha}
The function $g_{\alpha, \beta}$ can be represented as follows:
\begin{align*}
g_{\alpha, \beta}(\mb x, \mb z) = \int_{S^{d-1}} \left(\bigwedge_{j \in \alpha}{w_j x_j} - \bigvee_{j \in\beta} w_j z_j\right)_+~ \Phi(\ud \mb w)~,
\end{align*}
where $u\wedge v=\min\{ u,v \}$, $u\vee v=\max\{ u,v \}$ and $u_+=\max\{u, 0\}$ for any $(u,v)\in \mathbb{R}^2$.
\noindent
Thus, $g_{\alpha, \beta}$ is homogeneous and satisfies
\begin{align*}
|g_{\alpha, \beta}(\mb x, \mb z) - g_{\alpha, \beta}(\mb x', \mb z')| ~~\le~~  \sum_{j \in \alpha}|x_j - x_j'| ~+~  \sum_{j \in \beta}|z_j - z_j'|~,
\end{align*}
\end{lemma}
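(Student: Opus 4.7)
The natural starting point is the identity \eqref{g_with_mu}, which recasts $g_{\alpha,\beta}(\mb x,\mb z)$ as $\mu\bigl(R(\mb x^{-1},\mb z^{-1},\alpha,\beta)\bigr)$. The plan is therefore to compute this $\mu$-mass using the pseudo-polar decomposition~\eqref{eq:pseudoPolar_change}-\eqref{mu-phi}. Recall that $\mu$ disintegrates as the product of the angular measure $\Phi$ on $S^{d-1}_\infty$ and the radial measure with density $r^{-2}\,\ud r$ on $(0,\infty)$, so that for any Borel set $A$ bounded away from $\mb 0$,
\begin{equation*}
\mu(A) \;=\; \int_{S^{d-1}_\infty}\Phi(\ud \mb w)\int_0^\infty \mathds{1}_{\{r\mb w \in A\}}\,\frac{\ud r}{r^2}.
\end{equation*}

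Next, for a fixed direction $\mb w\in S^{d-1}_\infty$, I would identify the slice of radii for which $r\mb w$ lies in $R(\mb x^{-1},\mb z^{-1},\alpha,\beta)$. The defining inequalities $r w_j\ge x_j^{-1}$ for $j\in\alpha$ and $r w_j < z_j^{-1}$ for $j\in\beta$ translate (by taking the most restrictive constraint on each side) into
\begin{equation*}
r \;\in\; \Bigl[\bigl(\bigwedge_{j\in\alpha} w_j x_j\bigr)^{-1},\ \bigl(\bigvee_{j\in\beta} w_j z_j\bigr)^{-1}\Bigr).
\end{equation*}
Here one must handle a few boundary cases: if $w_j=0$ for some $j\in\alpha$ (with $x_j<\infty$) the slice is empty, so $\bigwedge_{j\in\alpha} w_j x_j=0$ and the lower endpoint is $+\infty$; conventions $1/0=\infty$ and $1/\infty=0$ absorb this cleanly. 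Integrating $r^{-2}\,\ud r$ over this interval produces precisely the quantity $\bigl(\bigwedge_{j\in\alpha} w_j x_j - \bigvee_{j\in\beta} w_j z_j\bigr)_+$, the positive part appearing automatically because the interval is empty as soon as its lower endpoint exceeds the upper one. Integrating against $\Phi$ yields the claimed representation.

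For the Lipschitz bound, I would work directly from this integral representation. Using $|a_+ - b_+|\le |a-b|$ together with the fact that $\mb x\mapsto\bigwedge_{j\in\alpha} w_j x_j$ and $\mb z\mapsto\bigvee_{j\in\beta} w_j z_j$ are each $1$-Lipschitz in every coordinate (with weight $w_j\le 1$ since $\|\mb w\|_\infty = 1$), the pointwise inequality
\begin{equation*}
\Bigl|\bigl(\textstyle\bigwedge_{\alpha} w_j x_j - \bigvee_{\beta} w_j z_j\bigr)_+ - \bigl(\textstyle\bigwedge_{\alpha} w_j x_j' - \bigvee_{\beta} w_j z_j'\bigr)_+\Bigr| \;\le\; \sum_{j\in\alpha} w_j |x_j-x_j'| + \sum_{j\in\beta} w_j |z_j-z_j'|
\end{equation*}
is obtained. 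Integrating against $\Phi(\ud\mb w)$ and swapping the sum and integral reduces the claim to showing $\int_{S^{d-1}_\infty} w_j\,\Phi(\ud\mb w)\le 1$ for every $j$. This follows from the standard Pareto normalization of the marginals of $\mb V$: applying the angular representation to the set $\{\mb y: y_j>t\}$ gives $\mu(\{y_j>t\})=t^{-1}\int w_j\,\Phi(\ud\mb w)$, which equals $t^{-1}$ because $V^j$ has marginal survival $1/t$.

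The step I expect to be the only delicate one is the bookkeeping in the radial integral, specifically verifying that the various degenerate cases ($w_j=0$ for some index in $\alpha$ or $\beta$, coordinates $x_j$ or $z_j$ equal to $0$ or $\infty$, and the slice being empty) are all correctly captured by the single compact expression $(\cdot)_+$. Everything else, including homogeneity (immediate from the fact that $\bigwedge_\alpha w_j(\lambda x_j) - \bigvee_\beta w_j(\lambda z_j) = \lambda[\bigwedge_\alpha w_j x_j - \bigvee_\beta w_j z_j]$ scales out of the positive part), reduces to routine manipulation.
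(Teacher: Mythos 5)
Your proposal is correct and follows essentially the same route as the paper: the identity $g_{\alpha,\beta}=\mu(R(\mb x^{-1},\mb z^{-1},\alpha,\beta))$, the pseudo-polar disintegration $\ud(\mu\circ\pi^{-1})=r^{-2}\ud r\,\ud\Phi$, and the radial-slice integration producing the positive part. The only (cosmetic) difference is in the Lipschitz step: the paper keeps a pointwise bound by maxima, identifies the integrated maximum with $\mu([\mb 0,\tilde{\mb x}^{-1}]^c)$ via~\eqref{eq:integratePhiLambda} and applies a union bound, whereas you relax the maxima to sums pointwise and use $\int_{S^{d-1}_\infty} w_j\,\Phi(\ud\mb w)=\mu(\{y_j>1\})\le 1$ term by term --- both arguments rest on the same marginal normalization of $\mu$ and give the identical bound.
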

\begin{remark}
Lemma~\ref{lem:g-alpha} shows that the functional $g_{\alpha, \beta}$, which plays the same role as a  the STDF, enjoys  a Lipschitz property.
\end{remark}

We now define the empirical counterpart of $g_{\alpha, \beta}$
(mimicking that of the empirical STDF $l_n$ in (\ref{empir-Stdf}) ) by
\begin{align}
\label{def:gn}
g_{n, \alpha, \beta}(\mb x, \mb z) = \frac{1}{k}~\sum_{i=1}^{n}
\mathds{1}_{\{ X_i^j \ge X^j_{(n-\lfloor kx_j \rfloor+1)}  ~~\text{for}~
  j \in \alpha \text{~~~and~~~} X_i^j < X^j_{(n-\lfloor
    kx_j\rfloor+1)} ~~\text{for}~ j \in\beta \}}~.
\end{align}
As it is the case for the empirical STDF (see~(\ref{empir-Stdf2})), $g_{n,\alpha,\beta}$ has an alternative expression
\begin{align}\label{eq:gn2}
  \nonumber g_{n, \alpha, \beta}(\mb x, \mb z)&=\frac{1}{k} ~\sum_{i=1}^{n}
  \mathds{1}_{\{ U_i^j ~\le~ U^j_{([kx_j])} ~~\text{for}~ j \in \alpha
    \text{~~~and~~~} U_i^j ~>~U^j_{([kx_j])} ~~\text{for}~ j \in\beta \}}\\
  &= \mu_n \left( R(\mb x^{-1}, \mb z^{-1}, \alpha,~\beta)\right),
\end{align}
where the last equality comes from the equivalence $\widehat V_i^j \ge \frac{n}{k} x_j \Leftrightarrow U_i^j \le U_{(\lfloor kx_j^{-1}\rfloor)}^j$ (Lemma~\ref{lem:equivalence}) and from the expression $\mu_n(\point) = \frac{1}{k}\sum_{i=1}^n \mathds{1}_{\mb{\widehat{V}}_i \in \frac{n}{k} (\point)}$, definition~\eqref{mu_n}.

\noindent
The proposition below extends the result of \cite{COLT15}, by deriving an analogue upper bound on the maximal deviation 
$$
\max_{\alpha,\beta}\sup_{ 0 \le \mb x, \mb z \le T} |g_{\alpha, \beta}(\mb x, \mb z) - g_{n, \alpha, \beta}(\mb x, \mb z)|~,$$ 
or equivalently on
$$\max_{\alpha, \beta}~ \sup_{1/T \le \mb x, \mb z } \left|\mu ( R(\mb x, \mb z, \alpha, \beta)) - \mu_n (R(\mb x, \mb z, \alpha, \beta) )\right| ~.$$
Here and beyond 
we simply denote `$\alpha,\beta$'  for `$\alpha$ non-empty subset of $\{1,\ldots,d\}\setminus \emptyset$ and $\beta$ subset of
$\{1,\ldots,d\}$'. We also recall that comparison operators between two vectors (or
between a vector and a real number) are understood component-wise, \ie~ `$\mb x \le \mb z$' means `$x_j \le z_j$ for all $1\le j\le d$' and  for any real number $T$,  `$\mb x\le T$' means `$x_j \le T$ for all $1\le j\le d$'. 

\begin{proposition}
\label{prop:g}
Let $T \ge \frac{7}{2}(\frac{\log d}{k} + 1)$, and $\delta \ge e^{-k}$.  
Then there is a universal constant $C$, such that for each $n>0$, with probability at least $1 - \delta$,
\begin{align}
\label{ineq:g}
\max_{\alpha, \beta} \sup_{ 0 \le \mb x, \mb z \le T } &\left| g_{n, \alpha, \beta}(\mb x, \mb z) - g_{\alpha, \beta}(\mb x, \mb z) \right| ~\le~  Cd\sqrt{\frac{2T}{k}\log\frac{d+3}{\delta}} 
\\\nonumber &~~~~~~~~~~~~~~~~~~~~~~~+ \max_{\alpha, \beta}  \sup_{0 \le \mb x, \mb z \le 2T}\left|\frac{n}{k} \tilde F_{\alpha, \beta}( \frac{k}{n} \mb x, \frac{k}{n} \mb z)- g_{\alpha, \beta}(\mb x, \mb z)\right|.
\end{align}
The second term on the right hand side of the inequality is an asymptotic bias term which goes to $0$ as $n \to \infty$ (see Remark~\ref{rk:bias}).
\end{proposition}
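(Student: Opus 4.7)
The plan is to mirror the approach of \cite{COLT15} (originally designed for the STDF) and adapt it to the richer class of $(\alpha,\beta)$-indexed rectangles. I start from the decomposition
\[
\bigl|g_{n,\alpha,\beta}(\mathbf{x},\mathbf{z}) - g_{\alpha,\beta}(\mathbf{x},\mathbf{z})\bigr|
\;\le\;
\bigl|g_{n,\alpha,\beta}(\mathbf{x},\mathbf{z}) - \tfrac{n}{k}\tilde F_{\alpha,\beta}(\tfrac{k}{n}\mathbf{x},\tfrac{k}{n}\mathbf{z})\bigr|
+
\bigl|\tfrac{n}{k}\tilde F_{\alpha,\beta}(\tfrac{k}{n}\mathbf{x},\tfrac{k}{n}\mathbf{z}) - g_{\alpha,\beta}(\mathbf{x},\mathbf{z})\bigr|.
\]
The second summand is precisely the asymptotic bias appearing on the right-hand side of the statement, so no further work is needed there; the entire effort concentrates on giving a uniform probabilistic bound on the first, purely stochastic, summand.

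For the stochastic term, I would use the alternative expression~\eqref{eq:gn2} to rewrite $g_{n,\alpha,\beta}(\mathbf{x},\mathbf{z}) = \mu_n(R(\mathbf{x}^{-1},\mathbf{z}^{-1},\alpha,\beta))$, reducing the problem to controlling $|\mu_n - \mu|$ uniformly over the class
\[
\mathcal{R} \;=\; \bigl\{R(\mathbf{x},\mathbf{z},\alpha,\beta) \,:\, \mathbf{x},\mathbf{z}\in[0,\infty]^d,\; \alpha,\beta\subset\{1,\ldots,d\}\bigr\}.
\]
Each element of $\mathcal{R}$ is an intersection of at most $2d$ half-spaces, and there are at most $4^d$ pairs $(\alpha,\beta)$; hence $\mathcal{R}$ has VC dimension of order $O(d)$. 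The crucial point is that, after the $(n/k)$-rescaling, each set in $(n/k)\mathcal{R}$ restricted to $[0,T]^{2d}$ has probability of order $k/n$ under $\mathbb{P}$. This legitimates the use of the \emph{low-probability} VC inequality of \cite{COLT15}, which refines the standard VC rate $O(\sqrt{\mathrm{VCdim}/n})$ into $O(\sqrt{T/k})$ at the cost of a logarithmic dependence in $d$ and $\delta$. This is the source of the $\sqrt{(2T/k)\log((d+3)/\delta)}$ factor, and the union bound over the $(\alpha,\beta)$ pairs is what converts $\log(1/\delta)$ into $\log((d+3)/\delta)$.

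The delicate step is that $g_{n,\alpha,\beta}$ is defined through \emph{empirical} order statistics $X^j_{(n-\lfloor kx_j\rfloor+1)}$, whereas $\tfrac{n}{k}\tilde F_{\alpha,\beta}(\tfrac{k}{n}\mathbf{x},\tfrac{k}{n}\mathbf{z})$ involves the deterministic thresholds $(k/n)\mathbf{x},(k/n)\mathbf{z}$. I would handle this in two stages: first, apply the low-probability VC inequality to control $\sup_{\mathcal{R}}|\mu_n - \mu|$ over deterministic rectangles; second, absorb the randomness of the empirical thresholds by uniformly bounding $\sup_{\mathbf{x}\in[0,T]^d,\,j}|U^j_{(\lfloor kx_j\rfloor)} - kx_j/n|$ via a classical uniform concentration bound on the empirical quantile process, and then invoking the Lipschitz property of $g_{\alpha,\beta}$ from Lemma~\ref{lem:g-alpha} to convert this quantile displacement into an additive error of the same order $\sqrt{T/k}$. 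Summing the two contributions and collecting all constants into a universal $C$ gives the claimed inequality.

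The main obstacle I anticipate is the clean bookkeeping at this quantile-to-probability switch: one must simultaneously handle a continuum of deterministic thresholds and the random rank-based ones while preserving the scaling in $1/\sqrt{k}$ rather than the naive $1/\sqrt{n}$, and do so uniformly over the far richer two-sided rectangle class, rather than just the one-sided family $\{[\mathbf{0},\mathbf{x}]^c\}$ treated in \cite{COLT15}. The Lipschitz estimate from Lemma~\ref{lem:g-alpha} is exactly what makes this extension possible, since it controls variations in both the $\alpha$-coordinates (lower bounds) and the $\beta$-coordinates (upper bounds) by the same $L^1$-type norm.
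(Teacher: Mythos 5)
Your overall strategy is the right one and matches the paper's: a low-probability VC inequality over the rectangle class, the Lipschitz property of $g_{\alpha,\beta}$ from Lemma~\ref{lem:g-alpha}, and a concentration bound for the order statistics $U^j_{(\lfloor k x_j\rfloor)}$. However, your top-level decomposition introduces a genuine gap. You insert $\frac{n}{k}\tilde F_{\alpha,\beta}(\frac{k}{n}\mathbf{x},\frac{k}{n}\mathbf{z})$ (the pre-limit function at \emph{deterministic} thresholds) as the intermediate point, so your ``purely stochastic'' term contains, after the VC step, a residual of the form $\frac{n}{k}\bigl|\tilde F_{\alpha,\beta}(\hat{\mathbf{u}}) - \tilde F_{\alpha,\beta}(\frac{k}{n}\mathbf{x},\frac{k}{n}\mathbf{z})\bigr|$, where $\hat{\mathbf{u}}$ denotes the random rank-based thresholds. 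You propose to control this via the Lipschitz property of $g_{\alpha,\beta}$ — but only the \emph{limit} $g_{\alpha,\beta}$ is Lipschitz (Lemma~\ref{lem:g-alpha}); no modulus of continuity is available for the pre-asymptotic $\tilde F_{\alpha,\beta}$ under the stated assumptions. To use the Lipschitz property you must route through $g_{\alpha,\beta}$ evaluated at the rescaled random thresholds, which reintroduces a bias term evaluated at those random points. The paper avoids the duplication by choosing $\frac{n}{k}\tilde F_{n,\alpha,\beta}(\hat{\mathbf{u}})$ itself as the pivot (this equals $g_{n,\alpha,\beta}$ exactly, by Lemma~\ref{lem:gn-Fn}) and splitting into three terms: empirical-vs-true measure at the random thresholds ($\Lambda$), bias at the random thresholds ($\Xi$), and $g_{\alpha,\beta}$ at random vs deterministic arguments ($\Upsilon$, handled by Lipschitz plus quantile concentration). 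This is also why the bias supremum in the statement runs over $[0,2T]$ rather than $[0,T]$: the bias must be controlled at the random thresholds, which only live in $[0,2T]$ with high probability.

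Two further points you leave unaddressed. First, the hypothesis $T\ge\frac{7}{2}(\frac{\log d}{k}+1)$ is used nowhere in your argument; it is precisely what guarantees (Lemma~\ref{lem:U-x}) that $\frac{n}{k}U^j_{(\lfloor kT\rfloor)}\le 2T$ with probability $1-\delta$, without which neither the VC bound at the random thresholds nor the restriction of the bias to $[0,2T]$ goes through. Second, the ``classical uniform concentration bound on the empirical quantile process'' you invoke must itself be a low-probability-region bound at scale $\sqrt{T/k}$ (this is again the COLT15-type inequality, second part of Lemma~\ref{lem:U-x}); a standard Dvoretzky--Kiefer--Wolfowitz-type bound at scale $1/\sqrt{n}$ for $|U^j_{(\lfloor kx_j\rfloor)}-kx_j/n|$ is far too weak after the $n/k$ rescaling. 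Finally, a minor misattribution: the $\log\frac{d+3}{\delta}$ does not come from a union bound over the $(\alpha,\beta)$ pairs (that union is absorbed into the VC dimension $d$ of the combined class $\mathcal{A}$); it comes from accumulating the failure probabilities of the several high-probability events ($2\delta$ for the VC step, $(d+1)\delta$ for the order-statistics lemma) and rescaling $\delta$.
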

The proof follows the same lines as that of Theorem 6 in \cite{COLT15} and is detailed in \ref{appendix_proof}. Here is the main argument.

The empirical estimator is based on the empirical measure of `extreme' regions, which are hit only with  low probability. It is thus enough to bound maximal deviations on such low probability regions. The key consists in choosing an adaptive VC class which only covers the latter regions (after standardization to uniform margins), namely a VC class composed of sets of the kind $\frac{k}{n} R(\mb x^{-1}, \mb z^{-1}, \alpha, \beta)^{-1}$. In \cite{COLT15}, VC-type inequalities have been established that incorporate $p$, the probability of hitting the class at all. Applying these inequalities to the particular class of rectangles gives the result.

\subsection{Bounding $|\mu_n - \mu|(R_\alpha^\epsilon)$ uniformly over $\alpha$}
\label{sec:boundErrorEpsilonCones}
The aim of this subsection is to exploit the previously established
bound on the deviations on rectangles, to obtain another uniform  bound for 
$|\mu_n-\mu|(R_\alpha^\epsilon)$, for $\epsilon >0$ and $\alpha \subset \dd$. In the remainder of the paper, $\bar \alpha$ denotes the complementary set of $\alpha$ in $\dd$.
Notice that directly from their definitions \eqref{eq:epsilon_Rectangle} and \eqref{set-R}, $R_\alpha^\epsilon$ and $R(\mb x, \mb z, \alpha, \beta)$ are linked by: 
\begin{align*}
R_\alpha^\epsilon = R(\boldsymbol \epsilon, \boldsymbol  \epsilon, \alpha, \bar \alpha) \cap [\mb 0, \mb 1]^c = R(\boldsymbol \epsilon, \boldsymbol  \epsilon, \alpha, \bar \alpha) \setminus R(\boldsymbol \epsilon, \boldsymbol{\tilde \epsilon}, \alpha, \{1,\ldots, d\})
\end{align*}
where $\boldsymbol{\tilde \epsilon}$ is defined by $\boldsymbol{\tilde \epsilon}_j = \mathds{1}_{j \in \alpha} + \epsilon \mathds{1}_{j \notin \alpha}$ for all $j \in \{1,\ldots,d\}$.
Indeed, we have: $R(\boldsymbol \epsilon, \boldsymbol  \epsilon, \alpha, \bar \alpha) \cap [\mb 0, \mb 1] = R(\boldsymbol \epsilon, \boldsymbol{\tilde \epsilon}, \alpha, \{1,\ldots, d\})$.
As a result, for $\epsilon < 1$,
$$\sup_{\epsilon \le \mb x, \mb z }|\mu_n-\mu|\left(R_\alpha^\epsilon\right) \le 2 \sup_{\epsilon \le \mb x, \mb z }|\mu_n-\mu|\left(R(\mb x,~ \mb z,~ \alpha,~ \bar \alpha)\right).$$
On the other hand, from~\eqref{eq:gn2} and \eqref{g_with_mu} we have 
\begin{align*}
\sup_{\epsilon \le \mb x, \mb z }|\mu_n-\mu|\left(R(\mb x,~ \mb z,~ \alpha,~ \bar \alpha)\right) ~=~ \sup_{0 \le \mb x, \mb z \le \epsilon^{-1}} \left| g_{n, \alpha, \bar \alpha}(\mb x, \mb z) - g_{\alpha, \bar \alpha}(\mb x, \mb z) \right|.
\end{align*}
\noindent
Then Proposition \ref{prop:g} applies with $T = 1/\epsilon$ and the following result holds true.
\begin{corollary}
\label{cor:mu_n-mu}
Let $0 < \epsilon \le (\frac{7}{2}(\frac{\log d}{k} + 1))^{-1}$, and $\delta \ge e^{-k}$.
Then there is a universal constant $C$, such that for each $n>0$, with probability at least $1 - \delta$,
\begin{align}
\max_{\alpha} \sup_{ \epsilon \le \mb x, \mb z } &\left| (\mu_n-\mu)(R_\alpha^\epsilon) \right| ~\le~  Cd\sqrt{\frac{1}{\epsilon k}\log\frac{d+3}{\delta}} 
\\\nonumber &~~~~~~~~~~~~~~~~~~~~~~~+ \max_{\alpha, \beta}  \sup_{0 \le \mb x, \mb z \le 2\epsilon^{-1}}\left|\frac{n}{k} \tilde F_{\alpha, \beta}( \frac{k}{n} \mb x, \frac{k}{n} \mb z)- g_{\alpha, \beta}(\mb x, \mb z)\right|.
\end{align}
\end{corollary}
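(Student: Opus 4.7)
The plan is to chain together the observations made in the paragraph preceding the statement with a single application of Proposition~\ref{prop:g}. All the combinatorial work is already done there; only the bookkeeping on constants and on the admissible range of $\epsilon$ remains.

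First, I would use the set identity
\[
R_\alpha^\epsilon \;=\; R(\boldsymbol\epsilon,\boldsymbol\epsilon,\alpha,\bar\alpha) \;\setminus\; R(\boldsymbol\epsilon,\boldsymbol{\tilde\epsilon},\alpha,\{1,\ldots,d\}),
\]
which is recorded just above the corollary, together with the signed-measure triangle inequality $|(\mu_n-\mu)(A\setminus B)|\le |(\mu_n-\mu)(A)|+|(\mu_n-\mu)(B)|$ (valid when $B\subset A$), to obtain
\[
\sup_{\alpha}\bigl|(\mu_n-\mu)(R_\alpha^\epsilon)\bigr| \;\le\; 2\,\sup_{\alpha}\,\sup_{\mb x,\mb z \ge \boldsymbol\epsilon}\bigl|(\mu_n-\mu)\bigl(R(\mb x,\mb z,\alpha,\bar\alpha)\bigr)\bigr|.
\]
Both of the two rectangles on the right-hand side are of the form $R(\mb x,\mb z,\alpha,\beta)$, so this step introduces only the factor $2$.

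Next, I would translate each supremum from the ``rectangle'' side to the ``$g$-functional'' side via the identities \eqref{eq:gn2} and \eqref{g_with_mu}, which state that $\mu_n(R(\mb x,\mb z,\alpha,\beta))=g_{n,\alpha,\beta}(\mb x^{-1},\mb z^{-1})$ and $\mu(R(\mb x,\mb z,\alpha,\beta))=g_{\alpha,\beta}(\mb x^{-1},\mb z^{-1})$. Substituting $\mb x\leftrightarrow\mb x^{-1}$, $\mb z\leftrightarrow\mb z^{-1}$ turns the constraint $\mb x,\mb z\ge\boldsymbol\epsilon$ into $\mb x,\mb z\le\epsilon^{-1}$, so the right-hand side becomes
\[
2\,\sup_{\alpha}\,\sup_{0\le \mb x,\mb z\le \epsilon^{-1}}\bigl|g_{n,\alpha,\bar\alpha}(\mb x,\mb z)-g_{\alpha,\bar\alpha}(\mb x,\mb z)\bigr|,
\]
which is bounded in turn by the larger maximum over all pairs $(\alpha,\beta)$ considered in Proposition~\ref{prop:g}.

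Finally, I would apply Proposition~\ref{prop:g} with the choice $T=\epsilon^{-1}$. The hypothesis $T\ge \tfrac{7}{2}(\tfrac{\log d}{k}+1)$ becomes exactly the stated upper bound on $\epsilon$; the hypothesis $\delta\ge e^{-k}$ carries over verbatim. The proposition then yields, with probability at least $1-\delta$,
\[
2\,\sup_{\alpha,\beta}\sup_{0\le \mb x,\mb z\le \epsilon^{-1}}\bigl|g_{n,\alpha,\beta}-g_{\alpha,\beta}\bigr|(\mb x,\mb z) \;\le\; 2Cd\sqrt{\tfrac{2}{\epsilon k}\log\tfrac{d+3}{\delta}} \;+\; 2\,\text{(asymptotic bias)},
\]
and re-absorbing the numerical factors $2\sqrt{2}$ and $2$ into a new universal constant $C$ gives the statement of the corollary. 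There is no genuine obstacle here: the only thing to check carefully is that the rectangle $R(\boldsymbol\epsilon,\boldsymbol{\tilde\epsilon},\alpha,\{1,\ldots,d\})$ is indeed of the form covered by Proposition~\ref{prop:g} with coordinates bounded by $\epsilon^{-1}$ after inversion, which is clear since all components of $\boldsymbol{\tilde\epsilon}$ lie in $[\epsilon,1]$ and hence their inverses lie in $[1,\epsilon^{-1}]$.
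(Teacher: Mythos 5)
Your proposal is correct and is essentially the paper's own argument: the same set-difference decomposition of $R_\alpha^\epsilon$ into two rectangles of the form $R(\point,\point,\alpha,\beta)$ yielding the factor $2$, the same translation to the $g_{\alpha,\beta}$ functionals via \eqref{eq:gn2} and \eqref{g_with_mu}, and the same application of Proposition~\ref{prop:g} with $T=\epsilon^{-1}$. Your explicit tracking of the factor $2$ in front of the bias term is in fact slightly more careful than the corollary's statement (the paper reinstates this factor only later, in the definition of $\mathrm{bias}(\alpha,n,k,\epsilon)$ used for Theorem~\ref{thm-princ}).
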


\subsection{Bounding $|\mu(R_\alpha^\epsilon)-\mu(\mathcal{C}_\alpha)|$ uniformly over $\alpha$}
\label{sec:boundMuEpsilonCones}
In this section, an upper bound on the bias induced by handling
$\epsilon$-thickened rectangles is derived. 
As the rectangles $R_\alpha^\epsilon$ defined in \eqref{eq:epsilon_Rectangle} do not correspond to any set of angles on the sphere $S_\infty^{d-1}$,
we also define the {\it $(\epsilon, \epsilon')$-thickened cones}  
\begin{align}
\label{eq:epsilon_Cone}
\mathcal{C}_{\alpha}^{\epsilon, \epsilon'}~=\{\mb v \ge 0,~\|\mb v\|_\infty \ge 1,~ v_j > \epsilon \|\mb v\|_\infty  ~\text{ for } j \in \alpha,
~v_j \le \epsilon'  \|\mb v\|_\infty ~\text{ for } j \notin \alpha \} ,
\end{align}
which verify $\mathcal{C}_{\alpha}^{\epsilon, 0}\subset R_\alpha^\epsilon \subset \mathcal{C}_{\alpha}^{0, \epsilon}.$
Define the corresponding $(\epsilon, \epsilon')$-thickened sub-sphere
\begin{align}
\label{eq:epsilon_Sphere}
\Omega_{\alpha}^{\epsilon, \epsilon'} =~~ \big\{\mb x \in S^{d-1}_\infty , ~~ x_i >\epsilon ~~\text{ for } i\in\alpha~,~~  x_i \le \epsilon' ~~\text{ for
} i\notin \alpha   \big\} 
=~~ \mathcal{C}_\alpha^{\epsilon, \epsilon'} \cap S^{d-1}_\infty.
\end{align}
It is then possible to approximate rectangles $R_\alpha^\epsilon$ by the cones $\mathcal{C}_{\alpha}^{\epsilon, 0}$ and $\mathcal{C}_{\alpha}^{0, \epsilon}$, and then $\mu(R_\alpha^\epsilon)$ by $\Phi(\Omega_{\alpha}^{\epsilon, \epsilon'})$ in the sense that
\begin{align}
\label{eq:approx_Recctangle}
\Phi(\Omega_\alpha^{\epsilon, 0}) = \mu(\mathcal{C}_{\alpha}^{\epsilon, 0}) \le \mu(R_\alpha^\epsilon) \le \mu(\mathcal{C}_{\alpha}^{0, \epsilon}) = \Phi(\Omega_\alpha^{0, \epsilon}).
\end{align}

The next result (proved in \ref{appendix_proof}) is a preliminary step toward a bound on $|\mu(R_\alpha^\epsilon)-\mu(\cone_\alpha)|$. It is easier to use the absolute continuity of $\Phi$ instead of that of $\mu$, since the rectangles $R_\alpha^\epsilon$ are not bounded contrary to the sub-spheres $\Omega_\alpha^{\epsilon, \epsilon'}$. 
\begin{lemma}
\label{lemma_simplex}
For every $\emptyset \neq \alpha \subset \dd$ and $0 < \epsilon, \epsilon' < 1/2$, we have 
\begin{align*}
|\Phi(\Omega_\alpha^{\epsilon, \epsilon'}) - \Phi(\Omega_\alpha)| ~\le~ M |\alpha|^2 \epsilon ~+~ M d \epsilon'~.
\end{align*}
\end{lemma}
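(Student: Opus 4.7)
The plan is to bound the symmetric difference directly, writing
\[
|\Phi(\Omega_\alpha^{\epsilon,\epsilon'}) - \Phi(\Omega_\alpha)| \le \Phi(\Omega_\alpha \setminus \Omega_\alpha^{\epsilon,\epsilon'}) + \Phi(\Omega_\alpha^{\epsilon,\epsilon'} \setminus \Omega_\alpha),
\]
and treating each piece by decomposing the angular measure into its edge densities via Lemma~\ref{lem:continuousPhi} and then invoking the uniform bound $M_\beta$ of Assumption~\ref{hypo:abs_continuousPhi}. Throughout, the conditions $\epsilon, \epsilon' < 1/2$ will be used to locate the coordinate at which $\|\mb x\|_\infty = 1$ is attained.

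For the first term, any $\mb x\in\Omega_\alpha$ has $x_j = 0$ for $j\notin\alpha$, so the constraint $x_j\le\epsilon'$ is automatic and the only way $\mb x$ can fail to lie in $\Omega_\alpha^{\epsilon,\epsilon'}$ is that $x_i\le\epsilon$ for some $i\in\alpha$. If $|\alpha|=1$ the set is empty (the single atom $\mb e_i$ has $x_i=1>\epsilon$), giving a zero contribution. Otherwise, union-bound over $i\in\alpha$ and decompose $\Omega_\alpha = \bigsqcup_{j_0\in\alpha}\Omega_{\alpha,j_0}$; since $x_i\le\epsilon<1/2$ the maximum coordinate must be some $j_0\ne i$, so only edges with $j_0\ne i$ contribute. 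On $\Omega_{\alpha,j_0}$ the Lebesgue volume of $\{x_i\le\epsilon\}$ is at most $\epsilon$ (the remaining coordinates in $\alpha\setminus\{i,j_0\}$ lie in $[0,1]$), so the density bound yields at most $M_\alpha\epsilon$ per pair $(i,j_0)$. Summing over the $|\alpha|(|\alpha|-1)$ such pairs gives $\Phi(\Omega_\alpha\setminus\Omega_\alpha^{\epsilon,\epsilon'})\le |\alpha|^2 M_\alpha\epsilon\le |\alpha|^2 M\epsilon$.

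For the second term, any $\mb x\in\Omega_\alpha^{\epsilon,\epsilon'}\setminus\Omega_\alpha$ must have $x_j>0$ for some $j\notin\alpha$; let $\beta\supsetneq\alpha$ denote its support, so $|\beta|\ge |\alpha|+1\ge 2$. Partition according to $\beta$ and decompose $\Omega_\beta = \bigsqcup_{j_0\in\beta}\Omega_{\beta,j_0}$. Since $x_j\le\epsilon'<1$ for $j\notin\alpha$, the identity $x_{j_0}=1$ forces $j_0\in\alpha$, so only $|\alpha|$ of the edges contribute. On $\Omega_{\beta,j_0}$, the constraints $x_i\le\epsilon'$ for $i\in\beta\setminus\alpha$ bound the Lebesgue volume in $x_{\beta\setminus j_0}$ by $(\epsilon')^{|\beta\setminus\alpha|}\le\epsilon'$, so the density estimate gives $M_\beta\epsilon'$ per edge. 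Summing over the $|\alpha|$ admissible edges and then over $\beta\supsetneq\alpha$ (all satisfying $|\beta|\ge 2$), Assumption~\ref{hypo:abs_continuousPhi} gives $\Phi(\Omega_\alpha^{\epsilon,\epsilon'}\setminus\Omega_\alpha)\le |\alpha|\,\epsilon'\sum_{|\beta|\ge 2}M_\beta\le dM\epsilon'$.

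Adding the two bounds yields $M|\alpha|^2\epsilon + Md\epsilon'$ as claimed. The delicate points are (i) correctly handling $|\alpha|=1$, where $\Phi$ has no edge density on $\Omega_\alpha$ itself but the contribution happens to vanish, and (ii) using $\epsilon, \epsilon'<1/2$ to eliminate all edges whose distinguished coordinate $j_0$ would coincide with a ``small'' coordinate; this is precisely what buys the single factor of $\epsilon$ (or $\epsilon'$) from the Lebesgue volume on each edge.
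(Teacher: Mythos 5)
Your proof is correct and follows essentially the same route as the paper's: decompose the discrepancy into the mass gained on $\Omega_\alpha^{\epsilon,\epsilon'}\cap\Omega_\beta$ for $\beta\supsetneq\alpha$ and the mass lost on $\Omega_\alpha\cap\{x_i\le\epsilon\}$, then bound each edge integral by its Lebesgue volume times $M_\beta$ and sum using Assumption~\ref{hypo:abs_continuousPhi}. The symmetric-difference phrasing and the explicit treatment of $|\alpha|=1$ are minor presentational differences (the paper writes the signed decomposition directly), and your count $|\alpha|(|\alpha|-1)$ is even marginally tighter than the paper's $|\alpha|^2$.
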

\noindent
 Now, notice that 
$$
\Phi(\Omega_\alpha^{\epsilon, 0}) - \Phi(\Omega_\alpha) \le \mu(R_\alpha^\epsilon) - \mu(\cone_\alpha) \le \Phi(\Omega_\alpha^{0, \epsilon}) - \Phi(\Omega_\alpha).
$$
We obtain the following proposition.
\begin{proposition}
\label{prop_simplex}
For every non empty set of indices $\emptyset \neq \alpha \subset \dd$ and $\epsilon > 0$,
\begin{align*}
|\mu(R_\alpha^\epsilon)-\mu(\cone_\alpha)| \le M d^2 \epsilon
\end{align*}
\end{proposition}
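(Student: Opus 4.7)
The plan is to combine the sandwich identity \eqref{eq:approx_Recctangle} with Lemma~\ref{lemma_simplex} in the two degenerate limits $\epsilon'=0$ and $\epsilon=0$, and then simply dominate $|\alpha|^2$ and $d$ by $d^2$. Nothing else is needed; the whole bias argument has been front-loaded into Lemma~\ref{lemma_simplex}, so this proposition is essentially a packaging step.

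More concretely, I would start from the inclusions $\mathcal{C}_\alpha^{\epsilon,0}\subset R_\alpha^\epsilon\subset \mathcal{C}_\alpha^{0,\epsilon}$ (valid because $\|\mb v\|_\infty\ge 1$ on $R_\alpha^\epsilon$, so that $\epsilon\le\epsilon\|\mb v\|_\infty$ and $\epsilon\|\mb v\|_\infty\ge\epsilon$). Translating these inclusions to $\mu$-measure, and using the identity $\mu(\mathcal{C}_\alpha^{a,b})=\Phi(\Omega_\alpha^{a,b})$ together with $\mu(\cone_\alpha)=\Phi(\Omega_\alpha)$, I subtract $\Phi(\Omega_\alpha)$ across the sandwich to obtain
\begin{equation*}
\Phi(\Omega_\alpha^{\epsilon,0})-\Phi(\Omega_\alpha)\;\le\;\mu(R_\alpha^\epsilon)-\mu(\cone_\alpha)\;\le\;\Phi(\Omega_\alpha^{0,\epsilon})-\Phi(\Omega_\alpha).
\end{equation*}

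Next I would invoke Lemma~\ref{lemma_simplex} with $(\epsilon,\epsilon')=(\epsilon,0)$ on the left-hand side (giving a bound $M|\alpha|^2\epsilon$) and with $(\epsilon,\epsilon')=(0,\epsilon)$ on the right-hand side (giving a bound $Md\epsilon$); the case of a vanishing argument is harmless since the inequality in Lemma~\ref{lemma_simplex} is linear in each parameter, so the limiting values pose no problem. Taking absolute values,
\begin{equation*}
\bigl|\mu(R_\alpha^\epsilon)-\mu(\cone_\alpha)\bigr|\;\le\;\max\bigl(M|\alpha|^2\epsilon,\,Md\epsilon\bigr)\;\le\;Md^2\epsilon,
\end{equation*}
using $|\alpha|\le d$ and $d\le d^2$. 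This yields the claim, uniformly in $\alpha$.

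There is no real obstacle here; the only point that requires a moment's care is checking the two inclusions $\mathcal{C}_\alpha^{\epsilon,0}\subset R_\alpha^\epsilon\subset \mathcal{C}_\alpha^{0,\epsilon}$ rigorously and confirming that the boundary-case application of Lemma~\ref{lemma_simplex} (with one of its two parameters equal to zero) is legitimate. The constant $d^2$ is deliberately loose: the proof actually yields the sharper bound $M\max(|\alpha|^2,d)\,\epsilon$, but $Md^2\epsilon$ is the cleaner statement used downstream in Theorem~\ref{thm-princ}.
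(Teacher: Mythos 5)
Your proposal is correct and follows essentially the same route as the paper: the paper likewise sandwiches $\mu(R_\alpha^\epsilon)-\mu(\cone_\alpha)$ between $\Phi(\Omega_\alpha^{\epsilon,0})-\Phi(\Omega_\alpha)$ and $\Phi(\Omega_\alpha^{0,\epsilon})-\Phi(\Omega_\alpha)$ via \eqref{eq:approx_Recctangle} and then applies Lemma~\ref{lemma_simplex} with the degenerate parameter choices, coarsening $M\max(|\alpha|^2,d)\epsilon$ to $Md^2\epsilon$. Your observation that the bound is actually sharper, and your care about the boundary cases $\epsilon'=0$ and $\epsilon=0$ of the lemma, are both sound.
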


\subsection{Main result}
We can now state the main result of the paper, revealing the accuracy of the estimate \eqref{heuristic_mu_n}.
\begin{theorem}
\label{thm-princ}
There is an universal constant $C>0$ such that for every $n,~k,~\epsilon,~\delta$ verifying $\delta \ge e^{-k}$, $0 < \epsilon < 1/2$ and $ \epsilon \le (\frac{7}{2}(\frac{\log d}{k} + 1))^{-1}$,
the following inequality holds true with probability greater than $1-\delta$:
\begin{align*}
 \|\hatmass - \cal{M}\|_\infty 
&~\le~  C d \left( \sqrt{ \frac{1}{\epsilon k}\log\frac{d}{\delta}} + M d\epsilon \right) \\
&~~~~~~~~~~~+~ 4 \max_{\substack{\alpha ~\subset~ \dd\\ \alpha \neq \emptyset}}~~\sup_{0 \le \mb x, \mb z \le \frac{2 }{\epsilon}}\left|\frac{n}{k} \tilde F_{\alpha, \bar \alpha }( \frac{k}{n} \mb x, \frac{k}{n} \mb z)- g_{\alpha, \bar \alpha }(\mb x, \mb z)\right|.
\end{align*}
\end{theorem}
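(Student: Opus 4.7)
The plan is to combine the results of the previous three subsections by exploiting the bias/variance decomposition already recorded in \eqref{error_decomp}. Recall that
\[
\|\hatmass - \mathcal{M}\|_\infty \;\le\; \max_{\alpha} |\mu - \mu_n|(R_\alpha^\epsilon) \;+\; \max_{\alpha} |\mu(R_\alpha^\epsilon) - \mu(\cone_\alpha)|,
\]
so that it suffices to bound each of the two maxima separately and sum the results. The first term is stochastic, the second is a purely deterministic bias induced by the $\epsilon$-thickening.

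First I would invoke Corollary~\ref{cor:mu_n-mu} to control the stochastic term. The hypotheses on $\delta$ and $\epsilon$ in the statement of Theorem~\ref{thm-princ} are precisely those required by the corollary ($\delta\ge e^{-k}$ and $\epsilon \le (\frac{7}{2}(\frac{\log d}{k}+1))^{-1}$), so with probability at least $1-\delta$,
\[
\max_\alpha |\mu - \mu_n|(R_\alpha^\epsilon) \;\le\; Cd\sqrt{\tfrac{1}{\epsilon k}\log\tfrac{d+3}{\delta}} \;+\; \max_{\alpha,\beta}\sup_{0\le \mb x,\mb z\le 2\epsilon^{-1}}\Bigl|\tfrac{n}{k}\tilde F_{\alpha,\beta}(\tfrac{k}{n}\mb x,\tfrac{k}{n}\mb z) - g_{\alpha,\beta}(\mb x,\mb z)\Bigr|.
\]
The term $\log\tfrac{d+3}{\delta}$ is absorbed into $\log\tfrac{d}{\delta}$ at the price of enlarging the universal constant $C$. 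Tracking the factor of $2$ that arose in Subsection~\ref{sec:boundErrorEpsilonCones} when rewriting $R_\alpha^\epsilon$ as a difference of two sets of type $R(\mb x,\mb z,\alpha,\bar\alpha)$, together with the restriction $\beta=\bar\alpha$ this enforces on the asymptotic bias, accounts for the factor $4$ and for the restricted max appearing in front of the last term of the theorem.

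Next I would handle the deterministic bias term using Proposition~\ref{prop_simplex}, which requires Assumptions~\ref{hypo:continuousMu} and~\ref{hypo:abs_continuousPhi} and yields, uniformly in $\alpha$,
\[
|\mu(R_\alpha^\epsilon) - \mu(\cone_\alpha)| \;\le\; M d^2 \epsilon.
\]
This is exactly the $CdMd\epsilon = CMd^2\epsilon$ contribution appearing in Theorem~\ref{thm-princ}.

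Summing the two bounds through \eqref{error_decomp} and collecting factors yields the claimed inequality. The argument is essentially a bookkeeping exercise: all of the real analytical work has already been done in Corollary~\ref{cor:mu_n-mu} (itself resting on the VC-type deviation inequality of Proposition~\ref{prop:g}) and in Proposition~\ref{prop_simplex} (which relies on Lemma~\ref{lemma_simplex} and the sparsity bound \textbf{Assumption~\ref{hypo:abs_continuousPhi}}). The only mildly delicate point, and the one I would pay most attention to, is the constant-tracking step: verifying that the factor $2$ picked up in Subsection~\ref{sec:boundErrorEpsilonCones} propagates correctly through both the VC term and the asymptotic-bias term, so that the final expression comes out with the factor $4$ in front of $\max_{\alpha}\sup_{\mb x,\mb z}|\tfrac{n}{k}\tilde F_{\alpha,\bar\alpha} - g_{\alpha,\bar\alpha}|$ and with the $\log\tfrac{d}{\delta}$ rather than $\log\tfrac{d+3}{\delta}$, both of which are harmless once absorbed into the universal constant.
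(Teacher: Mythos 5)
Your proposal is correct and follows exactly the paper's own argument: the decomposition \eqref{error_decomp}, Corollary~\ref{cor:mu_n-mu} for the stochastic term (with $\log\frac{d+3}{\delta}$ absorbed into $\log\frac{d}{\delta}$ via the universal constant), and Proposition~\ref{prop_simplex} for the $Md^2\epsilon$ bias term, summed under the stated conditions on $\delta$ and $\epsilon$. The constant-tracking issue you flag (the factor carried from Subsection~\ref{sec:boundErrorEpsilonCones} into the factor $4$ defining the asymptotic bias term) is indeed the only delicate point, and the paper handles it no more explicitly than you do.
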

\noindent
Note that $\frac{7}{2}(\frac{\log d}{k} + 1)$ is smaller than $4$ as soon as $\log d / k < 1/7$, so that a sufficient condition on $\epsilon$ is $\epsilon < 1/4$.
The last term in the right hand side is a bias term which goes to zero as $n \to \infty$ (see Remark~\ref{rk:bias}).
The term $M d \epsilon$ is also a bias term, which represents the bias induced by considering $\epsilon$-thickened rectangles. It depends linearly on the sparsity constant $M$ defined in Assumption~\ref{hypo:abs_continuousPhi}. 
The value $k$ can be interpreted as the effective number of observations  used in the empirical estimate, \ie~the effective sample size for tail estimation. 
Considering classical inequalities in empirical process theory such as
VC-bounds, it is thus no surprise to obtain one  in $O(1/\sqrt k)$.
Too large values of $k$ tend to yield a large bias, whereas too small values of $k$ yield a large variance. For a more detailed discussion on the choice of $k$ we recommend \cite{ELL2009}.

The proof is based on decomposition~\eqref{error_decomp}. 
The first term $\sup_\alpha|\mu_n(R_\alpha^\epsilon)-\mu(R_\alpha^\epsilon)|$
on the right hand side of \eqref{error_decomp} is bounded using
Corollary~\ref{cor:mu_n-mu}, while Proposition~\ref{prop_simplex}
allows to bound the second one 
(bias term stemming from the tolerance parameter $\epsilon$). 
Introduce the notation 
\begin{align}
\label{eq:bias}
\text{bias}(\alpha,n,k,\epsilon) &= 4\sup_{0 \le \mb x, \mb z \le \frac{2}{\epsilon}}\left|\frac{n}{k} \tilde F_{\alpha, \bar \alpha }( \frac{k}{n} \mb x, \frac{k}{n} \mb z)- g_{\alpha, \bar \alpha }(\mb x, \mb z)\right|.
\end{align}
\noindent
With probability at least $1-\delta$,  
\begin{align*}
\forall~ \emptyset \neq \alpha\subset\dd,~~~~~~~~\\ \left|\mu_n(R_\alpha^\epsilon) - \mu(\mathcal{C}_\alpha)\right| ~\le~& 
Cd\sqrt{\frac{1}{\epsilon k}\log\frac{d+3}{\delta}}~+~ ~\text{bias}(\alpha,n,k,\epsilon) + M d^2  \epsilon~.
\end{align*}
The upper bound stated in
Theorem~\ref{thm-princ} follows.

\begin{remark}{{(\sc Thresholding the estimator})}
\label{rk:threshold}
In practice, we have to deal with non-asymptotic noisy data, so that many $\widehat{\mathcal{M}}(\alpha)$'s have very small values though the corresponding $\mathcal{M}(\alpha)$'s are null.
One solution is thus to define a threshold value, for instance a proportion $p$ of the averaged mass over all the faces $\alpha$ with positive mass, \ie~$\text{threshold} = p |A|^{-1} \sum_{\alpha} \widehat{\mathcal{M}}(\alpha)$ with $A = \{\alpha,~\widehat{\mathcal{M}}(\alpha) > 0\}$ .
Let us define $\widetilde{\mathcal{M}}(\alpha)$ the obtained thresholded $\widehat{\mathcal{M}}(\alpha)$. Then the estimation error satisfies:
\begin{align*}
\|\widetilde{\mathcal{M}} - \mathcal{M}\|_\infty &\le \| \widetilde{\mathcal{M}} - \widehat{\mathcal{M}} \|_\infty +  \| \widehat{\mathcal{M}} - \mathcal{M}\|_\infty \\
& \le p |A|^{-1} \sum_{\alpha} \widehat{\mathcal{M}}(\alpha) + \| \widehat{\mathcal{M}} - \mathcal{M} \|_\infty \\
& \le p |A|^{-1} \sum_{\alpha} \mathcal{M}(\alpha) + p |A|^{-1} \sum_{\alpha} |  \widehat{\mathcal{M}}(\alpha) - \mathcal{M}(\alpha) | \\&~~~~~~~~~~~~~~~~~~~~~~~~~~~~~~~~~~~~~~~~~~~~~~~~~~~~~~~~~~~+ \| \widehat{\mathcal{M}} - \mathcal{M} \|_\infty \\
& \le (p + 1) \| \widehat{\mathcal{M}} - \mathcal{M} \|_\infty + p |A|^{-1} \mu([0, 1]^c).
\end{align*}
It is outside the scope of this paper to study optimal values for $p$. However, Remark~\ref{rk:optim} writes the estimation procedure as an optimization problem, thus exhibiting a link between thresholding and $L^1$-regularization.
\end{remark}

\begin{remark}{(\sc Underlying risk minimization problems)}
\label{rk:optim}
Our estimate $\widehat{\mathcal{M}}(\alpha)$ can be interpreted as a solution of an empirical risk minimization problem inducing a conditional empirical risk $\widehat R_n$.
When adding a $L^1$ regularization term to this problem, we recover $\widetilde{\mathcal{M}}(\alpha)$, the thresholded estimate.

First recall that $\widehat{\mathcal{M}}(\alpha)$ is defined for $\alpha \subset \{1,\ldots,d\},~\alpha \neq \emptyset $ by $\widehat{\mathcal{M}}(\alpha) = 1/k \sum_{i=1}^{n} \mathds{1}_{\frac{k}{n} \hat{\mb V}_i \in R_\alpha^\epsilon}$. As $R_\alpha^\epsilon \subset [\mb 0, \mb 1]^c$, we may write
\begin{align*}
\widehat{\mathcal{M}}(\alpha) = \Big( \frac{n}{k} \mathcal{P}_n(\frac{k}{n} \| \hat{\mb V}_1 \| \ge 1) \Big) ~~ \Big( \frac{1}{n} \sum_{i=1}^n \frac{\mathds{1}_{\frac{k}{n} \hat{\mb V}_i \in R_\alpha^\epsilon} \mathds{1}_{\frac{k}{n} \| \hat{\mb V}_i \| \ge 1}}{ \mathcal{P}_n(\frac{k}{n} \| \hat{\mb V}_1 \| \ge 1)}\Big),
\end{align*}
where the last term is the empirical expectation of $Z_{n, i}(\alpha) = \mathds{1}_{\frac{k}{n} \hat{\mb V}_i \in R_\alpha^\epsilon}$ conditionnaly to the event $\{\|\frac{k}{n} \hat{\mb V}_1 \| \ge 1 \}$, and $\mathcal{P}_n(\frac{k}{n} \| \hat{\mb V}_1 \| \ge 1) = \frac{1}{n} \sum_{i=1}^n \mathds{1}_{\frac{k}{n} \|\hat{\mb V}_i\| \ge 1}$.
According to  Lemma~\ref{lem:equivalence}, for each fixed margin $j$, $\hat{V}_i^j  \ge \frac{n}{k}$ if, and only if $X_i^j \ge X_{(n-k+1)}^j$, which happens for $k$ observations exactly. Thus, $$\mathcal{P}_n(\frac{k}{n} \| \hat{\mb V}_1 \| \ge 1) = \frac{1}{n} \sum_{i=1}^n \mathds{1}_{\exists j,  \hat{\mb V}_i^j  \ge \frac{n}{k}} \in \left[\frac{k}{n}, \frac{dk}{n}\right].$$
If we define $\tilde k = \tilde k(n) \in [k, dk]$ such that $\mathcal{P}_n(\frac{k}{n} \| \hat{\mb V}_1 \| \ge 1) = \frac{\tilde k}{
   n}$, we then have  
\begin{align*}
\widehat{\mathcal{M}}(\alpha) &= \frac{\tilde k}{k} ~~ \left( \frac{1}{n} \sum_{i=1}^n \frac{\mathds{1}_{\frac{k}{n} \hat{\mb V}_i \in R_\alpha^\epsilon} \mathds{1}_{\frac{k}{n} \| \hat{\mb V}_i \| \ge 1}}{ \mathcal{P}_n(\frac{k}{n} \| \hat{\mb V}_1 \| \ge 1)}\right) \\
&= \frac{\tilde k}{k} ~~ \argmin_{m_\alpha > 0} \sum_{i=1}^n (Z_{n,i}(\alpha) - m_\alpha)^2 \mathds{1}_{\frac{k}{n} \| \hat{\mb V}_i \| \ge 1},
\end{align*}
Considering now the $(2^d -1)$-vector $\widehat{\mathcal{M}}$ and $\|.\|_{2, \alpha}$ the $L^2$-norm on $\mathbb{R}^{2^d-1}$, we immediatly have (since $k(n)$ does not depend on $\alpha$)
\begin{align}
\label{eq:optim_pb}
\widehat{\mathcal{M}} = \frac{\tilde k}{k} \argmin_{m \in \mathbb{R}^{2^d-1}} \widehat{R_n}(m),
\end{align}
where $\widehat{R_n}(m) = \sum_{i=1}^n \|Z_{n,i} - m\|_{2,\alpha}^2 \mathds{1}_{\frac{k}{n} \| \hat{\mb V}_i \| \ge 1}$ is the $L^2$-empirical risk of $m$, restricted to extreme observations, namely to observations $\mb X_i$ satisfying $\| \hat{\mb V}_i \| \ge \frac{n}{k}$. Then, up to a constant $\frac{\tilde k}{k} = \Theta(1)$, $\widehat{\mathcal{M}}$ is solution of an empirical conditional risk minimization problem. 
\noindent
Define the non-asymptotic theoretical risk $R_n(m)$ for $m \in \mathbb{R}^{2^d-1}$ by $$R_n(m) = \mathbb{E}\left[ \|Z_n - m\|_{2,\alpha}^2  \Big| \|\frac{k}{n} \mb V_1\|_\infty \ge 1\right]$$
with $Z_n:=Z_{n,1}$. Then one can show (see~\ref{appendix_proof}) that $Z_n$, conditionally to the event $\{\|\frac{k}{n} \mb V_1\| \ge 1\}$, converges in distribution to a variable $Z_\infty$ which is a multinomial distribution on $\mathbb{R}^{2^d-1}$ with parameters $(n=1, p_\alpha = \frac{\mu(R_\alpha^\epsilon)}{\mu([\mb 0, \mb 1]^c)}, \alpha \in \{1,\ldots,n\}, \alpha \neq \emptyset)$. In other words, 
$$\mathbb{P}(Z_\infty(\alpha) = 1) = \frac{\mu(R_\alpha^\epsilon)}{\mu([\mb 0, \mb 1]^c)}$$ for all $\alpha \in \{1,\ldots,n\}, \alpha \neq \emptyset $, and $\sum_\alpha Z_\infty(\alpha) = 1$.
Thus $R_n(m)\to R_\infty(m):=\mathbb{E}[\|Z_\infty - m \|_{2,\alpha}^2]$, which is the asymptotic risk. Moreover, the optimization problem
\begin{align*}
\min_{m \in \mathbb{R}^{2^d-1}} R_\infty(m)  
\end{align*}
admits $m = (\frac{\mu(R_\alpha^\epsilon)}{\mu([\mb 0, \mb 1]^c)}, \alpha \subset \{1,\ldots,n\}, \alpha \neq \emptyset)$ as solution.

Considering the solution of the minimization problem \eqref{eq:optim_pb}, which happens to coincide with the definition of $\widehat{\mathcal{M}}$, makes then sense if the goal is 
to estimate $\mathcal{M}:= (\mu(R_\alpha^\epsilon), \alpha \in \{1,\ldots,n\}, \alpha \neq \emptyset)$.
As well as
considering thresholded estimators $\widetilde{\mathcal{M}}(\alpha)$, since it amounts (up to a bias term) to add a $L^1$-penalization term to the underlying optimization problem:
Let us consider
\begin{align*}
\min_{m \in \mathbb{R}^{2^d-1}} \widehat{R_n}(m) ~+~ \lambda \|m\|_{1, \alpha}
\end{align*}
with $\|m\|_{1,\alpha} = \sum_{\alpha} |m(\alpha)|$ the $L^1$ norm on $\mathbb{R}^{2^d-1}$. In this optimization problem, only extreme observations are involved. It is a well known fact that solving it is equivalent to soft-thresholding the solution of the same problem without the penality term -- and then, up to a bias term due to the \textbf{soft}-thresholding, it boils down to setting to zero features $m(\alpha)$ which are less than some fixed threshold $T(\lambda)$. This is an other interpretation on thresholding as defined in Remark~\ref{rk:threshold}.
\end{remark}

\section{Application to Anomaly Detection }
\label{sec:appliAD}
\subsection{Background on AD}
\label{sec:appliADBackground}
\noindent 
\textbf{What is Anomaly Detection ?}
From a machine learning perspective, AD can be considered as a specific classification task, where the usual assumption in supervised learning stipulating that the dataset contains structural information regarding all classes breaks down, see \cite{Roberts99}. This typically happens in the case of two highly unbalanced classes: the normal class is expected to regroup a large majority of the dataset, so that the very small number of points representing the abnormal class does not allow to learn information about this class.
\textit{Supervised} AD consists in training the algorithm on a labeled (normal/abnormal) dataset including both normal and abnormal observations. In the \textit{semi-supervised} context, only normal data are available for training. This is the case in applications where normal operations are known but intrusion/attacks/viruses are unknown and should be detected. In the \textit{unsupervised} setup, no assumption is made on the data which consist in unlabeled normal and abnormal instances. In general, a method from the semi-supervised framework may apply to the unsupervised one, as soon as the number of anomalies is sufficiently weak to prevent the algorithm from fitting them when learning the normal behavior. Such a method should be robust to outlying observations.

\noindent
\textbf{Extremes and Anomaly Detection.}
As a matter of fact, `extreme' observations are often more susceptible to be anomalies than  others.
In other words, extremal observations are often at the \textit{border} between normal and abnormal regions and play a very special role in this context. As the number of observations considered as extreme (\emph{e.g.} in a Peak-over-threshold analysis) typically constitute less than one percent of the data, a classical AD algorithm would tend to systematically classify all of them as abnormal: it is not worth the risk (in terms of ROC or precision-recall curve for instance) trying to be more accurate in low probability regions without adapted tools. Also, new observations outside the `observed support' are most often predicted as abnormal. However, false positives (\ie~false alarms) are very expensive in many applications (\eg~aircraft predictive maintenance). It is thus of primal interest to develop tools increasing precision (\ie~the probability of observing an anomaly among alarms) on such extremal regions.

\noindent
\textbf{Contributions.}
The algorithm proposed  in this paper provides a scoring function which ranks
extreme observations according to their supposed degree of abnormality. This method is complementary to other  AD
algorithms, insofar as  two algorithms (that described  here, together with any
other  appropriate AD algorithm) may be trained on the same dataset.
Afterwards, the input space may be divided into two regions -- an
extreme region and a non-extreme one-- so that a new
observation in the central region (\emph{resp.} in the extremal
region) would be classified as abnormal or
not according  to the scoring function issued by the generic algorithm
(\emph{resp.} the one  presented
here). 
The scope of our algorithm concerns  both semi-supervised and
unsupervised problems. Undoubtedly, as it consists in learning a
`normal' (\ie\ not abnormal) behavior in extremal regions, it is optimally efficient when
trained on `normal' observations only. 
However it also applies to  unsupervised situations. 
Indeed, it involves a non-parametric but relatively coarse estimation
scheme which prevents from over-fitting normal data or fitting anomalies.
As a consequence, this method is robust to outliers and also applies when the training dataset contains a (small) proportion of anomalies. 
%
%

\subsection{Algorithm:  Detecting Anomalies among Multivariate
  EXtremes (DAMEX)}\label{sec:algo}
The purpose of this subsection is to explain the heuristic behind the
use of multivariate EVT for Anomaly Detection, which is in fact a
natural way to proceed when trying to describe the dependence
structure of extreme regions. The algorithm is thus introduced in an
intuitive setup,  which matches the theoretical framework and results obtained in sections \ref{sec:framework} and \ref{sec:estimation}.
The notations are the same as above:  $\mb X = (X^1,\ldots,X^d)$ is a
random vector in $\rset^d$, with joint (\emph{resp.} marginal)
distribution $\mb F$ (\emph{resp.} $F_j$, $j =1,\ldots,d$)  and $\mb
X_1,.\ldots, \mb X_n \sim \mb F$ is an \iid  sample.
The first natural step to study the dependence between the margins $X^j$ is to
standardize them, and the choice of standard Pareto margins (with
\emph{c.d.f.} 
$x \mapsto 1/x$) is convenient: Consider thus the $\mb V_i$'s and $\mathbf{\widehat V}_i$'s as defined in Section~\ref{sec:framework}.
%
One possible strategy  to investigate the dependence structure of
extreme events is to characterize, for each subset of features $\alpha
\subset \{1,...,d\}$, the `correlation' of these features given that
one of them at least  is large and the others are small. Formally, we
associate to each such $\alpha$ a coefficient $\mathcal{M}(\alpha)$ 
reflecting the degree of dependence between the features $\alpha$.
%
This coefficient is to be proportional to the expected number of
points $\mb V_i$ above a large radial threshold ($\|\mb V\|_\infty >r$), verifying $V_i^j$ `large' for  $j \in\alpha$, 
while $V_i^j$ `small' for $j\notin \alpha$. 
In order to define the notion of `large' and `small', fix a (small)
tolerance parameter $0<\epsilon<1$. Thus, our focus is on the 
expected proportion of points `above a large radial threshold' $r$  which belong to
the truncated rectangles $R_\alpha^\epsilon $ defined in \eqref{eq:epsilon_Rectangle}. More precisely, our goal is to estimate the above
expected proportion, when the tolerance parameter $\epsilon$ goes to
$0$. 

The standard empirical approach  --counting the number of
points in the regions of interest-- leads to  estimates $\hatmass(\alpha) = \mu_n(R_\alpha^\epsilon)$ (see \eqref{heuristic_mu_n}), with $\mu_n$ the empirical version of $\mu$ defined in \eqref{mu_n}, namely:
\begin{align}
\label{heuristic_mu_n2}
\hatmass(\alpha) = \mu_n(R_\alpha^\epsilon) =  \frac{n}{k} \mathbb{\widehat P}_n \left ( \frac{n}{k} R_\alpha^\epsilon \right),
\end{align}
where we recall that $\mathbb{\widehat P}_n=(1/n)\sum_{i=1}^n\delta_{\widehat{V}_i}$ is the empirical probability distribution of the rank-transformed data, and
$k = k(n) >0$ is such that $k \to \infty$ and $k = o(n)$ as $n \to \infty$.
The ratio $n/k$ plays the role of a large radial threshold $r$. From our standardization choice, counting points in
$(n/k)\,R_\alpha^\epsilon$ boils down to selecting, for each feature $j\le d$, the `$k$ largest values' $X_i^j$
among $n$ observations. According to the nature of the extremal dependence,
a number between $k$ and $dk$ of observations are selected: $k$ in
case of perfect dependence, $dk$ in case of `independence', which
means, in the EVT framework, that the components may only be large one at a time. In
any case, the number of observations considered as extreme is proportional to $k$, whence the normalizing factor $\frac{n}{k}$. 

The coefficients
$(\hatmass(\alpha))_{\alpha\subset\{1,\ldots,d\}}$ associated
  with the cones $\mathcal{C}_\alpha$  constitute our
  representation of the dependence structure.  
  This representation is sparse as soon as the $\hatmass(\alpha)$  are positive only for a few groups of features $\alpha$
(compared with the total number of groups, or sub-cones,  $2^d - 1$). It
is  is low-dimensional as soon as each of these groups  has moderate
cardinality $|\alpha|$, \ie\ as soon as  the sub-cones  with
positive $\hatmass(\alpha)$ are low-dimensional relatively to $d$.

In fact, up to a normalizing constant,  $\hatmass(\alpha)$ is
an empirical version of the  probability that $T(\mb X)$ belongs
to the cone $\mathcal{C}_{\alpha}$, conditioned upon exceeding a large threshold. Indeed, for $r, n$ and $k$ sufficiently large, we have (Remark~\ref{rk_approx_mu_n} and \eqref{eq:interprete_mun_Pcondit}, reminding that $\mb V = T(\mb X)$) 
\begin{align*}
\hatmass(\alpha)\simeq C \mathbb{P}(T(\mb X)\in r R_\alpha^\epsilon ~|~ \|T(\mb X)\|\ge
r) . 
\end{align*}
Introduce  an `angular scoring function'
\begin{align}\label{eq:angularscoring}
w_n(\mb x) = \sum_{\alpha }\hatmass(\alpha) \mathds{1}_{\{\widehat T(\mb x) \in R_\alpha^\epsilon\}}.
\end{align}
For each fixed (new observation) $\mb x$, $w_n(\mb x)$ approaches 
the probability that the random variable $\mb X$ belongs to the same cone
as $\mb x$ in the transformed space. In short,  $w_n(\mb x)$ is an
empirical version of the probability that  $\mb X$ and $\mb x$ have
approximately the  same `direction'. 
For AD, the degree of `abnormality' of the  new observation $\mb x$ 
should be related both to $w_n(\mb x)$ 
and to the uniform norm $\|\widehat T(\mb x)\|_\infty$ (angular and radial
components). More precisely, for $\mb x$ fixed such that
$T(\mb x)\in R_\alpha^\epsilon$.
Consider the  `\textit{directional tail region}' induced by $\mb x$, 
$A_{\mb x} =  \{ \mb y  : T(\mb y) \in R_\alpha^\epsilon\,,\;\|T(\mb y)\|_\infty \ge \| T(\mb x)\|_\infty\}.$
 Then, if  $\|T(\mb x)\|_\infty$ is large enough, we have (using~\eqref{mu-phi}) that 
\begin{align*}
\mathbb{P}\left( \mb X \in A_{\mb x} \right)  &= \mathbb{P}\left(\mb V\in \|T(\mb x)\|_\infty R_\alpha^\epsilon\right)\\
&= \mathbb{P}\left(\|\mb V\| \ge \|T(\mb x)\|\right) ~~ \mathbb{P}\left(\mb V\in \|T(\mb x)\|_\infty R_\alpha^\epsilon ~|~ \|\mb V\| \ge \|T(\mb x)\|\right) \\
&\simeq C~ \mathbb{P}\left(\|\mb V\| \ge \|T(\mb x)\|\right)~ \hatmass(\alpha) \\
& =  C ~\|\widehat T(\mb x) \|_\infty^{-1} ~w_n(\mb x).
\end{align*}
 This yields the scoring function
\begin{align}
\label{def:scoring}
s_n(\mb x):=  \frac{w_n(\mb x)}{\|\widehat T(\mb x)\|_\infty}, 
\end{align}
which is thus (up to a scaling constant $C$) an empirical version of $\mathbb{P}(\mb X\in A_{\mb x})$: the smaller $s_n(\mb x)$, the more abnormal the point $\mb x$ should be considered.
As an illustrative example, Figure~\ref{DAMEX-2D} displays the level
sets of this scoring function, both in the transformed and the non-transformed input space, in the 2D situation. The data are simulated under a 2D logistic distribution with asymmetric parameters. 
\begin{figure}[h]
\centering
\includegraphics[scale=0.2331]{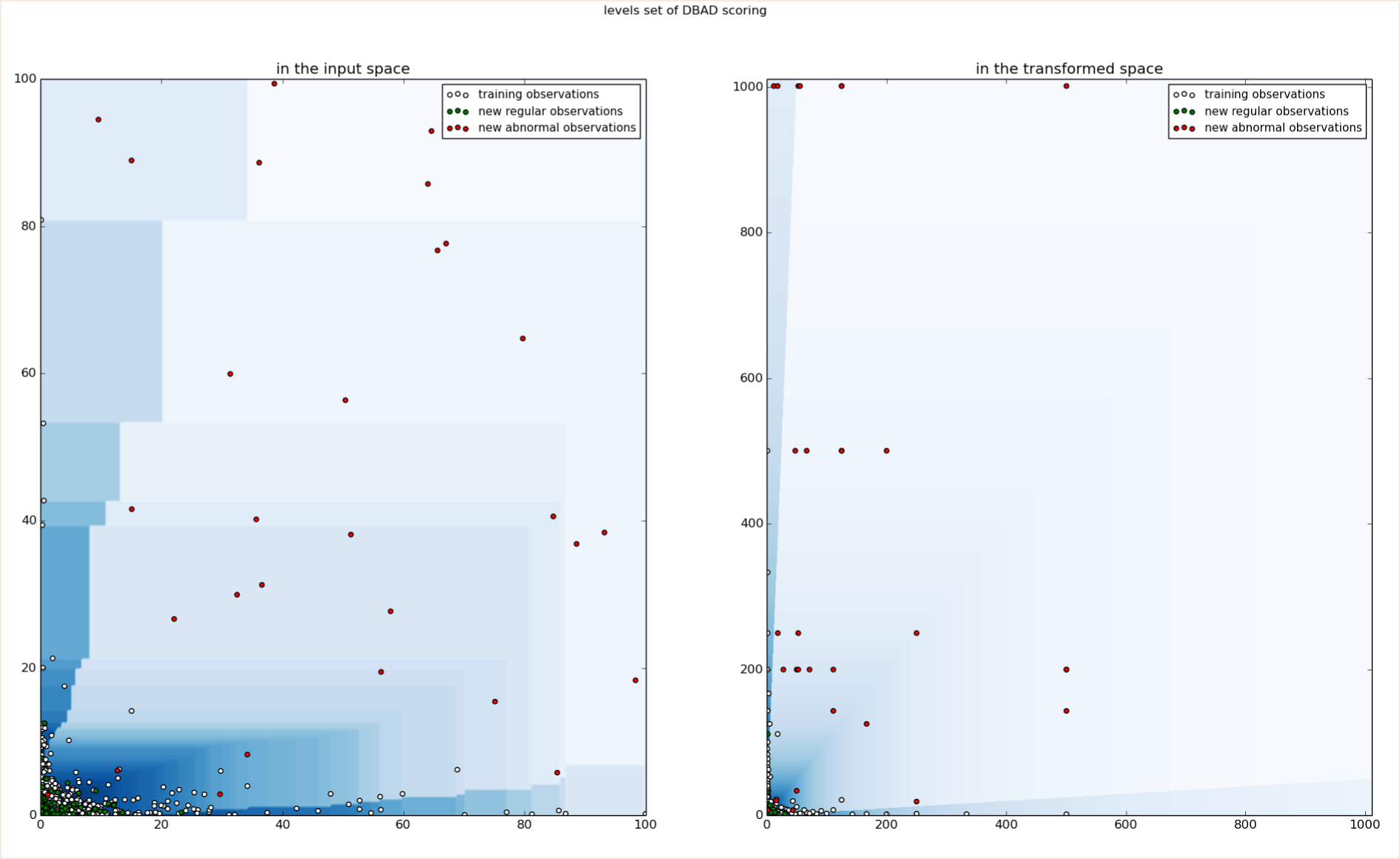}
\caption{Level sets of $s_n$ on simulated 2D data}
\label{DAMEX-2D}
\end{figure}

 This heuristic argument explains the following algorithm, referred to as {\it Detecting Anomaly with Multivariate EXtremes}  (DAMEX in abbreviated form). Note that this is a slightly modified version of the original DAMEX algorihtm empirically tested in \cite{AISTAT16}, where $\epsilon$-thickened sub-cones instead of $\epsilon$-thickened rectangles are considered. The proof is more straightforward when considering rectangles and performance remains as good.
The complexity is in $O( dn\log n + dn) = O(dn\log n)$, where the first term on the left-hand-side comes from  computing the $\widehat F_j(X_i^j)$ (Step 1) by sorting  the data (\emph{e.g.} merge sort). The second one arises from Step 2. 

\begin{center}
\fbox{
\begin{minipage}{0.95\linewidth}
\begin{algorithm} (DAMEX)
\label{DAMEX-algo}\\
{\bf Input:} parameters $\epsilon>0$,~~ $k = k(n)$,~~ $p\geq 0$.
\begin{enumerate}
\item Standardize \emph{via} marginal rank-transformation: $\mb{ \widehat V}_i:= \big (1/(1- \widehat F_j (X_i^j))\big)_{j=1,\ldots,d}$~.
\item Assign to each $\mb{\widehat V}_i$ the cone $R_\alpha^\epsilon$
  it belongs to.  
\item Compute $\hatmass(\alpha)$ from (\ref{heuristic_mu_n2})
   $\rightarrow$ yields: (small number of) cones with non-zero mass.
\item (Optional) Set to $0$ the $\hatmass(\alpha)$ below some small
  threshold defined in remark~\ref{rk:threshold} \wrt~$p$.
   $\rightarrow$ yields: (sparse) representation of the dependence
  structure 
 \begin{align}
 \label{phi_n}
\left\{\hatmass(\alpha):\; \emptyset\alpha\subset\{1,\ldots, d\}\right\}.
 \end{align}
\end{enumerate}
{\bf Output:} Compute the scoring function given
by~(\ref{def:scoring}), 
\begin{align*}
s_n(\mb x):= (1/\|\widehat T(\mb x)\|_\infty)
\sum_{\alpha }
\hatmass(\alpha) \mathds{1}_{\widehat T(\mb x) \in R_\alpha^\epsilon}.
\end{align*}
\end{algorithm}
\end{minipage}
}
\end{center}

Before investigating how the algorithm above empirically performs when applied to synthetic/real datasets, a few remarks are in order.

\begin{remark}({\sc Interpretation of the Parameters})
\label{rk_param_interpretation}
In view of 
(\ref{heuristic_mu_n2}), $n/k$ is the threshold above  which the data are
considered as extreme and $k$ is proportional to the number of such
data, a common approach in multivariate extremes.  
The tolerance parameter $\epsilon$ accounts for the  non-asymptotic nature of data. The
smaller $k$, the smaller $\epsilon$ shall be chosen. 
The additional angular mass threshold in step 4. acts as an additional
sparsity inducing parameter. Note that even without this additional
step (\ie\ setting $p=0$, the obtained representation for
real-world data (see Table~\ref{fig:wavedata-nb-faces})  is 
already sparse (the number of charges cones is significantly less than
$2^d$). 
\end{remark}
\begin{remark}({\sc Choice of Parameters})
\label{rk_param_choice}
A standard choice of parameters $(\epsilon,~ k ,~ p)$ is
respectively 
$(0.01, n^{1/2}, 0.1)$. 
However, there is no simple manner to choose optimally these parameters, as there is no simple way to determine how fast is the convergence to the (asymptotic) extreme behavior --namely how far in the tail appears the asymptotic dependence structure. Indeed, even though the  first term of the  error bound in Theorem~\ref{thm-princ} is  proportional, up to re-scaling, to $\sqrt{\frac{1}{\epsilon k} }+ \sqrt{\epsilon}$, which suggests choosing $\epsilon$ of order $ k^{-1/4}$, the unknown bias term perturbs the analysis and in practice, one obtains better results with the values above mentioned. 
In a supervised or semi-supervised framework (or if a small labeled dataset is available) these three parameters should be chosen by cross-validation.
In the unsupervised situation, a classical heuristic
(\cite{Coles2001}) is to choose $(k, \epsilon)$ in a stability region
of the algorithm's output: the largest $k$ (\emph{resp.} the larger
$\epsilon$) such that when decreased, the dependence structure remains
stable. This amounts to selecting as many  data as possible as being
extreme (\emph{resp. } in  low dimensional regions), within a stability
domain of the estimates, which exists under the primal assumption
\eqref{intro:assumption1} and in view of Lemma~\ref{lem:limit_muCalphaEps}. 
\end{remark}
\begin{remark} ({\sc Dimension Reduction})
If the extreme dependence structure is low dimensional, namely
concentrated on low dimensional cones $\mathcal{C}_\alpha$ -- or in other terms if only a
limited number of margins can be large together -- then most of the
$\widehat V_i$'s will be concentrated on the $R_\alpha^\epsilon$'s
such that  $|\alpha|$ (the dimension of the cone $\mathcal{C}_\alpha$)
is small; then the
representation of the dependence structure
 in (\ref{phi_n}) is both sparse and low dimensional.
\end{remark}

\begin{remark} ({\sc Scaling Invariance})
DAMEX produces the same result if the input data are transformed in such a way that the marginal order is preserved. In particular, any marginally increasing transform or any scaling as a preprocessing step does not affect the algorithm. It also implies invariance with respect to any change in the measuring units. This invariance property constitutes part of the strengh of the algorithm, since 
data preprocessing steps usually have a great impact on the overall performance and are of major concern in pratice.
\end{remark}


\section{Experimental results}
\label{sec:experiments}
\subsection{Recovering the support of the dependence structure of generated data}
Datasets of size $50000$ (respectively $100000$, $150000$) are  generated in $\mathbb{R}^{10}$ according to a popular multivariate extreme value
model, introduced by \cite{Tawn90},  namely a multivariate asymmetric
logistic distribution ($G_{log}$). 
The data have the following features: (i) they resemble `real life'
data, that is, the $X_i^j$'s are non
zero  and the transformed $\hat V_i$'s belong to the interior cone
$\mathcal{C}_{\{1,\ldots,d\}}$, (ii) the associated (asymptotic) exponent measure concentrates on
 $K$ disjoint cones $\{\mathcal{C}_{\alpha_m} , 1\le m\le K\}$.  
 For the sake of reproducibility, 
 $ G_{log}(\mb x) = \exp\{ - \sum_{m = 1}^K \left(\sum_{j \in \alpha_m}
     (|A(j)|x_j)^{ - 1/{w_{\alpha_m}}}\right)^{w_{\alpha_m}} \}, $
 where $|A(j)|$ is the cardinal of the set $\{\alpha\in D: j \in
 \alpha\}$ and where $w_{\alpha_m} = 0.1$ is a dependence parameter
 (strong dependence). 
The data are simulated using  Algorithm 2.2 in \cite{Stephenson2003}.
The subset of sub-cones $D$ charged by $\mu$ is randomly chosen (for each
fixed number of sub-cones $K$) and the purpose is to recover $D$ by Algorithm~\ref{DAMEX-algo}.
  For each $K$, $100$ experiments
are made and we consider  the  number of `errors', that is,      the number of
non-recovered or false-discovered sub-cones. Table~\ref{table:logevd} shows the averaged
numbers of errors  among the $100$ experiments. 
\begin{table}[h]
\centering
\scriptsize
\begin{tabular}{|l|lllllllllll|}
  \hline
  $\#$ sub-cones $K$       &    3 & 5    &  10   & 15   & 20    & 25  & 30   & 35    & 40    & 45    & 50 \\
  \hline
 Aver. $\#$ errors     & 0.02 & 0.65 & 0.95  & 0.45 & 0.49  & 1.35& 4.19 & 8.9  & 15.46  & 19.92  & 18.99 \\
   (n=5e4)     &&&&&&&&&&& \\
\hline
 Aver. $\#$ errors     & 0.00 & 0.45 & 0.36  & 0.21 & 0.13  & 0.43& 0.38 & 0.55  & 1.91  & 1.67  & 2.37 \\
   (n=10e4)     &&&&&&&&&&& \\

  \hline
 Aver. $\#$ errors     & 0.00  & 0.34 & 0.47 & 0.00 & 0.02  & 0.13& 0.13 & 0.31  & 0.39  & 0.59  & 1.77 \\
(n=15e4) &&&&&&&&&&& \\
  \hline
\end{tabular}
\caption{Support recovering on simulated data}
\label{table:logevd}
\end{table}
The results are very promising in situations where the number of sub-cones is moderate \emph{w.r.t.} the number of observations. 

\subsection{Sparse structure of extremes  (wave data)}
Our goal is here to verify that the two expected phenomena mentioned
in the introduction, \textbf{1-}~sparse dependence structure of extremes (small number
of sub-cones with non zero mass), \textbf{2-}~low dimension of the
sub-cones with non-zero mass,  do occur with real data. 
We consider wave
directions data provided by Shell, which consist of $58585$
measurements  $D_i$, $i\le 58595$ of wave directions between $0^{\circ}$ and $360^{\circ}$ at $50$ different
locations (buoys in North sea). The dimension is thus $50$. 
The angle $90^{\circ}$ being fairly
rare, we work with data obtained as $X_i^j = 1/(10^{-10} + |90-
D_i^j|)$, where $D_i^j$ is the wave direction at buoy $j$, time $i$. Thus,
$D_i^j$'s close to $90$ correspond to  extreme $X_i^j$'s.
Results in
Table~\ref{fig:wavedata-nb-faces}
show that 
the 
number of  sub-cones $\mathcal{C}_\alpha$ identified by Algorithm~\ref{DAMEX-algo}
is indeed small compared to the total number of sub-cones ($2^{50}$-1).
(Phenomenon \textbf{1} in the introduction section). 
Further, the dimension of these sub-cones is essentially moderate
(Phenomenon \textbf{2}):
respectively $93\%$, $98.6\%$ and  $99.6\%$
of the mass is affected to  sub-cones of dimension no greater  than $10$,
$15$ and $20$ respectively 
(to be compared with $d=50$).  Histograms displaying the mass repartition produced by Algorithm~\ref{DAMEX-algo} are given in Fig.~\ref{fig:wavedata-dim}.
\begin{figure}[h]
\centering
\includegraphics[scale=0.33]{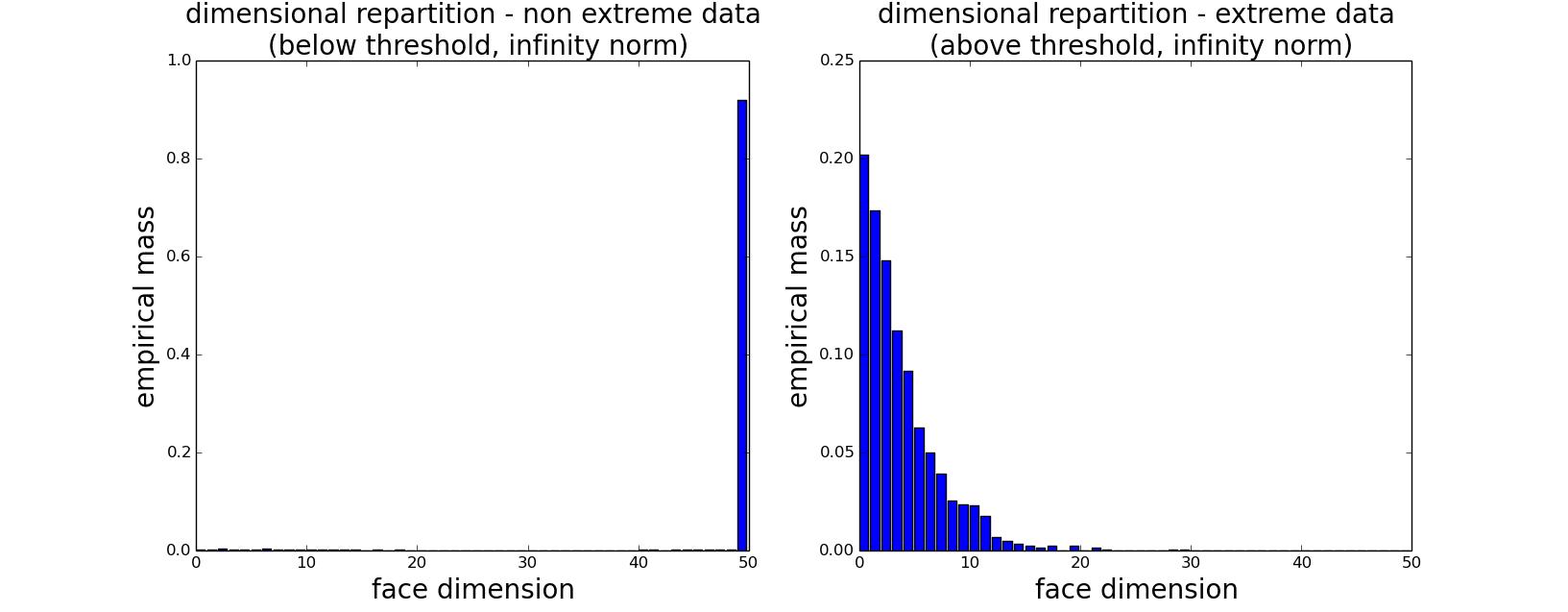}
\caption{sub-cone dimensions of wave data}
\label{fig:wavedata-dim}
\end{figure}

\begin{table}[h]
\centering
\footnotesize
\begin{tabular}{|l|cc|}
\hline
~ & non-extreme data & extreme data \\
\hline
nb of sub-cones with mass $>0$ ($p = 0$) & 3413 & 858 \\
idem after thresholding ($p = 0.1$) & 2 & 64 \\
idem after thresholding ($p = 0.2$) & 1 & 18 \\ 
\hline
\end{tabular}
\caption{Total number of sub-cones of wave data}
\label{fig:wavedata-nb-faces}
\end{table}

\subsection{Application to Anomaly Detection on real-world data sets}

 The main purpose of Algorithm~\ref{DAMEX-algo} is to build a 
 `normal profile' for 
extreme data, so as to distinguish between normal and ab-normal
extremes. 
In this section we evaluate its performance 
 and compare it with that of a standard AD algorithm, 
the Isolation Forest (iForest)
algorithm, which we chose in view of its
established high performance (\cite{Liu2008}). 
The two algorithms are trained and tested on the same datasets, the
test set being restricted to an  extreme region.
 Five reference AD datasets  are considered:
 \emph{shuttle}, \emph{forestcover}, \emph{http},
 \emph{SF} and \emph{SA} \footnote{These datasets are available for instance on http://scikit-learn.org/dev/ }. The experiments are performed in a
 semi-supervised framework (the training set consists of normal data). 
%
%

The \emph{shuttle} dataset is the fusion of the training and testing datasets
available in the UCI repository \cite{Lichman2013}. The data have $9$
 numerical attributes,  the first one being time. Labels from $7$ different classes are also
 available. Class $1$ instances are considered as normal, the others as anomalies. 
We use instances from all different classes but class $4$,  
which yields an anomaly ratio (class 1) of $7.17\%$. %
%

In the \emph{forestcover} data, also available at UCI
repository (\cite{Lichman2013}), the normal data are the  instances
from class~$2$ while instances from class $4$ are anomalies, other classes are omitted, 
so that the  anomaly ratio for this dataset is  $0.9\%$. 

The last three datasets belong to the KDD Cup '99 dataset
(\cite{KDD99}, \cite{Tavallaee2009}), produced by processing the
tcpdump portions of the 1998 DARPA Intrusion Detection System (IDS)
Evaluation dataset, created by MIT Lincoln Lab \cite{Lippmann2000}.
The artificial data was generated using a closed network and a wide
variety of hand-injected attacks (anomalies) to produce a large number
of different types of attack with normal activity in the background.
Since the original demonstrative purpose of the dataset concerns
supervised AD, the anomaly rate is very high ($80\%$), which is
unrealistic in practice, and inappropriate for evaluating the
performance on realistic data.  We thus take standard pre-processing
steps in order to work with smaller anomaly rates. For datasets
\emph{SF} and \emph{http} we proceed as described in
\cite{Yamanishi2000}: \emph{SF} is obtained by picking up the data
with positive logged-in attribute,
and 
focusing on the intrusion attack, which gives an anomaly proportion of
$0.48\%.$ 
The dataset \emph{http} is a subset of \emph{SF} corresponding to a
third feature equal to 'http'.
Finally, the \emph{SA} dataset  is obtained as in \cite{Eskin2002} by 
selecting all the normal data, together with a small proportion
($1\%$) of anomalies. 
%

Table~\ref{table:data} summarizes the characteristics of these
datasets. 
The thresholding parameter $p$ is fixed to $0.1$, the averaged mass of the non-empty sub-cones, while the parameters $(k,\epsilon)$ are standardly chosen as $(n^{1/2}, 0.01)$.
The extreme region on which the evaluation step is performed is chosen
as $\{\mb x:~ \|T(\mb x)\| > \sqrt{n} \}$, where $n$ is the  training
set's sample size. The ROC and PR curves are computed using only observations in the extreme region. This provides a precise evaluation of the two AD methods on extreme data.
For each of them, 20 experiments on random training and testing datasets are performed, yielding averaged ROC and Precision-Recall curves whose AUC are presented in Table~\ref{table:results-dbad+iforest-01}.
DAMEX significantly improves the performance (both in term of precision and of ROC curves) in extreme regions
for each dataset, as illustrated in figures \ref{fig:shuttle} and \ref{fig:forestcover}.

In Table~\ref{table:results-dbad+iforest-1}, we repeat the same experiments but with $\epsilon=0.1$.
This yields the same strong performance of DAMEX, excepting for \emph{SF}.
Generally, to large $\epsilon$ may yield over-estimated
$\hatmass(\alpha)$ for low-dimensional faces $\alpha$.
Such a performance gap between $\epsilon=0.01$ and $\epsilon=0.1$ can also be explained by the fact that anomalies may form a cluster which is wrongly include in some over-estimated `normal' sub-cone, when $\epsilon$ is too large. Such singular anomaly structure would also explain the counter performance of iForest on this dataset.


%

We also point out that for very small values of epsilon ($\epsilon \le 0.001$),
the performance of DAMEX significantly decreases on these datasets.
With such a small $\epsilon$, most observations belong to the central cone
(the one of dimension $d$) which is widely over-estimated, while the other cones are under-estimated.

The only case were using very small $\epsilon$ should be useful, is when the asymptotic behaviour is
clearly reached at level $k$ (usually for very large threshold $n/k$, \eg~$k=n^{1/3}$), or in the
specific case where anomalies clearly concentrate in low dimensional sub-cones: The use of a small $\epsilon$ precisely allows
to assign a high abnormality score to these subcones (under-estimation of the asymptotic mass), which yields better performances.


The averaged ROC curves and PR curves for the other datasets are gathered in \ref{appendix_exp}.
\begin{table}[h]
\centering
\footnotesize
\begin{tabular}{|l|cccccc|}
  \hline
   ~                   & shuttle & forestcover & SA     & SF     & http    \\
  \hline
  Samples total        & 85849   & 286048      & 976158 & 699691 & 619052  \\
  Number of features   & 9       & 54          & 41     & 4      & 3       \\
  Percentage of anomalies & 7.17    & 0.96        & 0.35   & 0.48   & 0.39 \\
\hline
\end{tabular}
\caption{Datasets characteristics}
\label{table:data}
\end{table}

\begin{table}[h]
\centering
\begin{tabular}{|l|cc|cc|}
  \hline
Dataset      &\multicolumn{2}{c|}{iForest}& \multicolumn{2}{c|}{DAMEX}\\
~            &AUC ROC       & AUC PR     &AUC ROC     &AUC PR       \\
shuttle      & 0.957        & 0.987      &$\mb{0.988}$&$\mb{0.996}$ \\
forestcover  & 0.667        & 0.201      &$\mb{0.976}$&$\mb{0.805}$ \\
http         & 0.561        & 0.321      &$\mb{0.981}$&$\mb{0.742}$ \\
SF           & 0.134        & 0.189      &$\mb{0.988}$&$\mb{0.973}$ \\
SA           & 0.932        &0.625       &$\mb{0.945}$&$\mb{0.818}$ \\ 
\hline
\end{tabular}
\caption{Results on extreme regions with standard parameters $(k,\epsilon) = (n^{1/2}, 0.01)$}
\label{table:results-dbad+iforest-01}
\end{table}

\begin{table}[h]
\centering
\begin{tabular}{|l|cc|cc|}
  \hline
Dataset      &\multicolumn{2}{c|}{iForest}& \multicolumn{2}{c|}{DAMEX}\\
~            &AUC ROC       & AUC PR     &AUC ROC     &AUC PR       \\
shuttle      & 0.957        & 0.987      &$\mb{0.980}$&$\mb{0.995}$ \\
forestcover  & 0.667        & 0.201      &$\mb{0.984}$&$\mb{0.852}$ \\
http         & 0.561        & 0.321      &$\mb{0.971}$&$\mb{0.639}$ \\
SF           & $\mb{0.134}$ & 0.189      &0.101       &$\mb{0.211}$ \\
SA           & 0.932        &0.625       &$\mb{0.964}$&$\mb{0.848}$ \\ 
\hline
\end{tabular}
\caption{Results on extreme regions with lower $\epsilon=0.1$}
\label{table:results-dbad+iforest-1}
\end{table}

\begin{figure}[h]
  \centering
  \includegraphics[width = \textwidth]{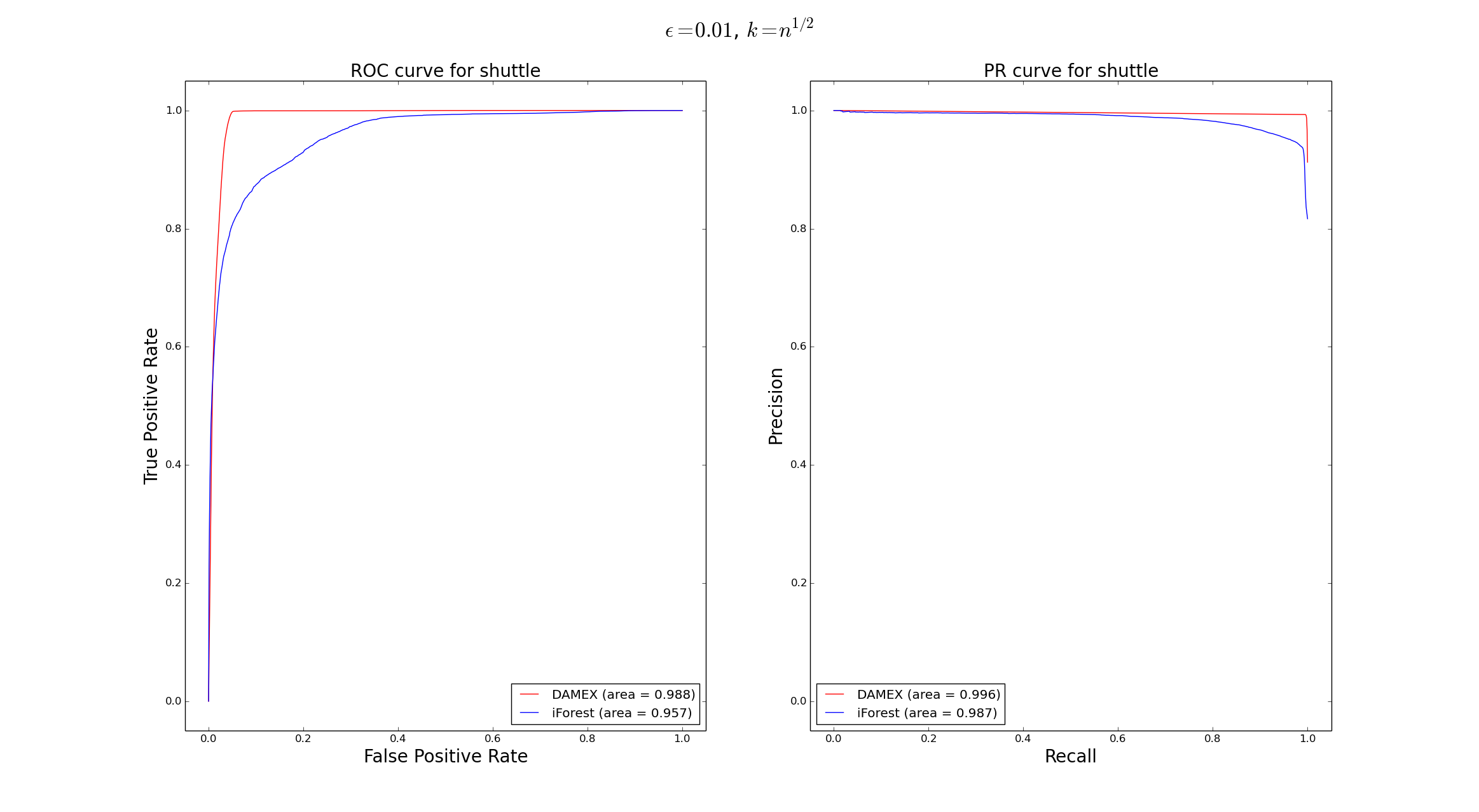}
  \caption{SF dataset, default parameters}
\label{fig:shuttle}
\end{figure}
\begin{figure}[h]
  \centering
  \includegraphics[width = \textwidth]{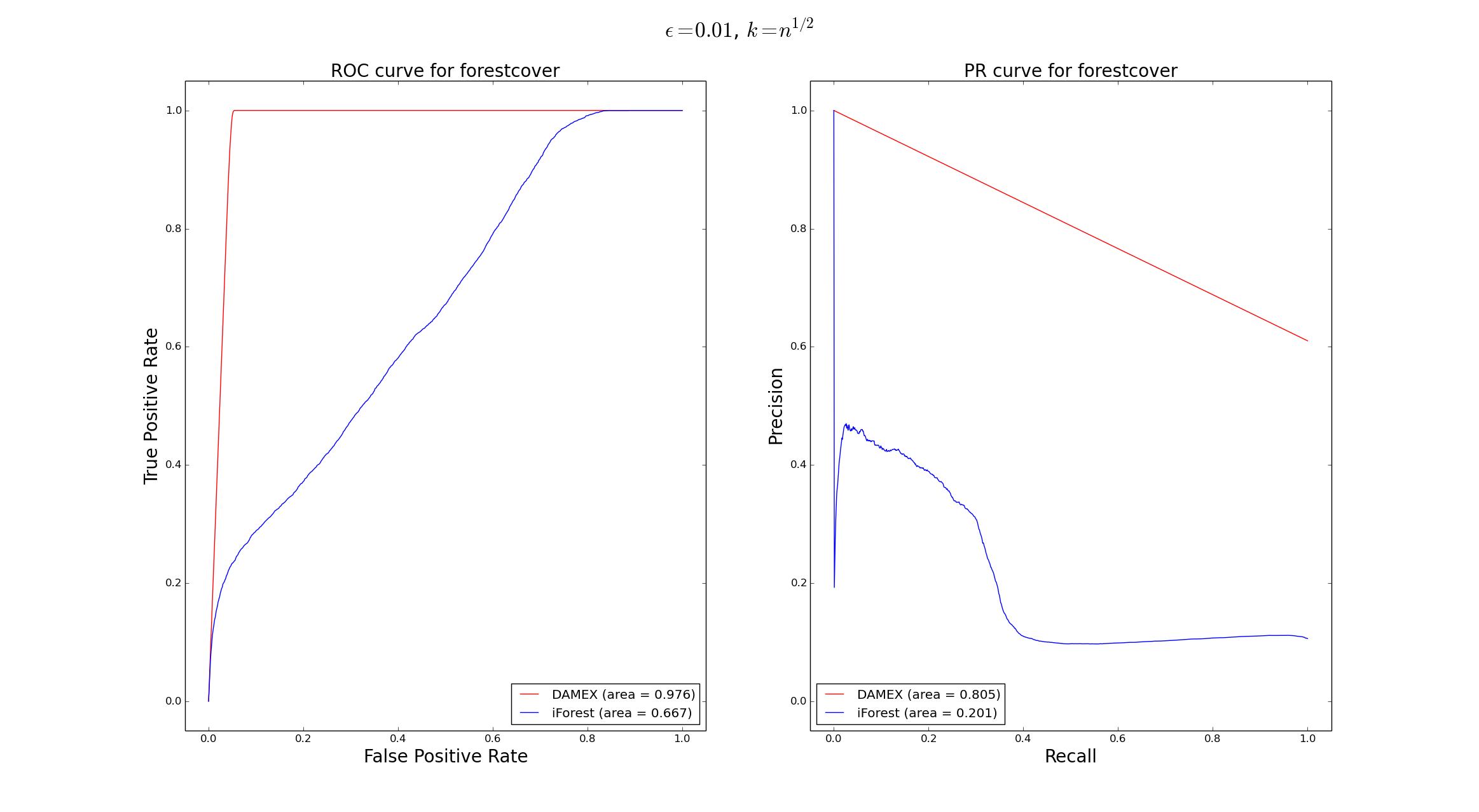}
  \caption{SF dataset, larger $\epsilon$}
\label{fig:forestcover}
\end{figure}

Considering the significant performance improvements on extreme data,
DAMEX may be combined with any standard AD algorithm to handle extreme
\emph{and} non-extreme data. This would improve the \emph{global}
performance of the chosen standard algorithm, and in particular
decrease the false alarm rate (increase the slope of the ROC curve's tangents near
the origin).  This combination can be done 
by splitting the input space between an
extreme region and a non-extreme one, then using
Algorithm~\ref{DAMEX-algo} to treat new observations that appear in
the extreme region, and the standard algorithm to deal with those which
appear in the non-extreme region.  


\section{Conclusion}
The contribution of this work is twofold. First, it brings advances in multivariate EVT by designing a statistical method that possibly exhibits a sparsity pattern in the dependence structure of extremes, while deriving non-asymptotic bounds to assess the accuracy of the estimation procedure. Our method is intended to be used as a preprocessing step to scale up multivariate extreme values modeling to high dimensional settings, which is currently one of the major challenges in multivariate EVT.
Since the asymptotic bias ($\text{bias}(\alpha,n,k, \epsilon)$ in eq.~\eqref{eq:bias}) appears as a separate term  in the bound established, no second order
assumption is required. One possible line of further research would be to make  such an assumption (\ie\,to assume that the bias itself is regularly varying), in order to choose $\epsilon$ adaptively with respect to $k$ and $n$ (see Remark~\ref{rk_param_choice}). This might also open up the possibility of de-biasing the estimation procedure (\cite{Fougeres2015}, \cite{Beirlant2015}). 
As a second contribution, this work extends the applicability of multivariate  EVT to the field of Anomaly Detection: a multivariate EVT-based algorithm which scores extreme observations according to their degree of abnormality is proposed. Due to  its  moderate complexity  --of order $d n \log n$--  this algorithm is suitable for the treatment of real word large-scale learning problems, and experimental results reveal a significantly increased performance on extreme regions compared with standard AD approaches.

\section*{Acknowledgements}
Part of this work has been supported by the industrial chair `Machine Learning for Big Data' from Telecom ParisTech, by the Ecole Normale Supérieure de Cachan and by the AGREED project from PEPS JCJC INS2I 2015.
 
\appendix
\section{Technical proofs}
\label{appendix_proof}
\subsection{Proof of Lemma \ref{lem:equivalence}}
For $n$ vectors $\mb v_1,\ldots,\mb v_n$ in $\mathbb{R}^d$, let us denote by $rank(v_i^j)$ the rank of $v_i^j$ among $v_1^j,\ldots,v_n^j$, that is $rank(v_i^j)=\sum_{k=1}^n \mathds{1}_{\{v_k^j \le v_i^j\}}$, so that $\hat F_j(X_i^j) = (rank(X_i^j)-1)/n$. For the first equivalence, notice that $\hat V_i^j = 1 / \hat U_i^j$.
For the others, we have both at the same time:
\begin{align*}
\hat V_i^j \ge \frac{n}{k} x_j  &~~\Leftrightarrow~~ 1-\frac{rank(X_i^j)-1}{n} ~\le~ \frac{k}{n}~x_j^{-1} 
\\ &~~\Leftrightarrow~~ rank(X_i^j) \ge n-kx_j^{-1} + 1 
\\ &~~\Leftrightarrow~~ rank(X_i^j) \ge n-\lfloor kx_j^{-1}\rfloor + 1 
\\ &~~\Leftrightarrow~~ X_i^j \ge X_{(n-\lfloor kx_j^{-1}\rfloor +1)}^j, 
\end{align*}
and
\begin{align*}
& X_i^j \ge X_{(n-\lfloor  kx_j^{-1} \rfloor +1)}^j ~\Leftrightarrow~
 rank(X_i^j) \ge n-\lfloor  kx_j^{-1} \rfloor+1 \\
&~~~~~~~~~~~~~~~~\Leftrightarrow~  rank( F_j(X_i^j)) \ge n-\lfloor
 kx_j^{-1}\rfloor+1 \qquad(\text{with probability one})\\ 
&~~~~~~~~~~~~~~~~\Leftrightarrow~  rank(1-F_j(X_i^j)) \le \lfloor kx_j^{-1}\rfloor\\
&~~~~~~~~~~~~~~~~\Leftrightarrow~  U_i^j \le U_{(\lfloor kx_j^{-1}\rfloor)}^j.
\end{align*}

\subsection{Proof of Lemma \ref{lem:g-alpha}}

First, recall that $g_{\alpha,\beta}(\mb x,\mb z) = \mu\big( R(\mb x^{-1},\mb z^{-1}, \alpha,\beta)\big)$, see \eqref{g_with_mu}. 
Denote by $\pi$ the transformation to pseudo-polar coordinates introduced
in Section~\ref{sec:framework}, 
\[
\begin{aligned}
  \pi: [0,\infty]^d \setminus\{\mb 0\} &\to  (0,\infty]\times S^{d-1}_\infty \\
\mb v & \mapsto (r,\boldsymbol{\theta}) = (\|\mb v\|_\infty, \|\mb v\|_\infty^{-1} \mb v). 
\end{aligned}
\]
Then, we have $\ud (\mu \circ\pi^{-1})= \frac{\ud r}{r^2}\ud
\Phi$ on $(0,\infty]\times S^{d-1}_\infty$. This classical result from EVT
comes from the fact that, for $r_0>0$ and $B\subset S^{d-1}_\infty$,  $\mu\circ\pi^{-1}\{ r\ge r_0, \boldsymbol{\theta}\in B\}
= r_0^{-1}\Phi(B)$, see \eqref{mu-phi}.
Then 
\[
\begin{aligned}
 g_{\alpha,\beta}(\mb x,\mb z)  & = 
\mu\circ\pi^{-1}\Big\{(r,\boldsymbol{\theta}): \quad \forall i \in\alpha,~ r \theta_i \ge
x_i^{-1}\;; \quad \forall j \in\beta, r \theta_j < z_j^{-1} \Big\} \\
&= \mu \circ \pi^{-1} \Big\{(r,\boldsymbol{\theta}): \quad  r  \ge \bigvee_{i\in\alpha}
(\theta_ix_i)^{-1}\;; \quad  r  <\bigwedge_{j\in\beta}(\theta_j z_j)^{-1} \Big\} \\
&= \int_{\boldsymbol{\theta}\in S^{d-1}_\infty} \int_{r>0} \mathds{1}_{r  \ge \bigvee_{i\in\alpha}
(\theta_ix_i)^{-1}}\;\mathds{1}_{r  <\bigwedge_{j\in\beta}(\theta_j
z_j)^{-1}} \frac{\ud r}{r^2} \ud \Phi(\boldsymbol{\theta}) \\ 
& = \int_{\boldsymbol{\theta}\in S^{d-1}_\infty}  \left(
\Big(\bigvee_{i\in\alpha} (\theta_ix_i)^{-1}\Big)^{-1}  - 
\Big(  \bigwedge_{j\in\beta}(\theta_jz_j)^{-1} \Big)^{-1} \right)_+
\ud \Phi(\boldsymbol{\theta}) \\ 
& = \int_{\boldsymbol{\theta}\in S^{d-1}_\infty}  \left(
\bigwedge_{i\in\alpha} \theta_ix_i   - 
 \bigvee_{j\in\beta}\theta_jz_j  \right)_+ \ud \Phi(\boldsymbol{\theta}), \\ 
\end{aligned}
\]
which proves the first assertion. 
To prove the Lipschitz property, notice first that, 
for any  finite sequence of real numbers  $c$ and $d$,
$\max_i c_i - \max_i d_i \le \max_i (c_i - d_i)$ and $\min_i c_i -
\min_i d_i \le \max_i (c_i - d_i)$. 
Thus for every $\mb x, \mb z \in [ 0, \infty]^d \setminus \{\boldsymbol{\infty}\}$ and $\theta \in S_\infty^{d-1}$:
\begin{align*}
  &\left(\bigwedge_{j \in \alpha}{\theta_j x_j} - \bigvee_{j \in\beta} \theta_j z_j\right)_+ - \left(\bigwedge_{j \in \alpha}{\theta_j x_j'} - \bigvee_{j \in\beta} \theta_j z_j'\right)_+ \nonumber\\
  &~~~~~~~~~~~~~~~~~~~~~~~~~~~~\le~
 \left[\left(\bigwedge_{j \in \alpha}{\theta_j x_j} -
      \bigvee_{j \in\beta} \theta_j z_j\right) - \left(\bigwedge_{j
        \in \alpha}{\theta_j x_j'} - \bigvee_{j \in\beta} \theta_j
      z_j'\right)\right]_+ \nonumber\\
&~~~~~~~~~~~~~~~~~~~~~~~~~~~~\le~
\left[\bigwedge_{j \in \alpha}{\theta_j x_j} - \bigwedge_{j \in
  \alpha}{\theta_j x_j'} ~+~ \bigvee_{j \in\beta} \theta_j z_j' -
\bigvee_{j \in\beta} \theta_j z_j\right]_+\nonumber\\
&~~~~~~~~~~~~~~~~~~~~~~~~~~~~\le~
    \left[\max_{j \in \alpha}(\theta_j x_j - \theta_j x_j')
      ~+~ \max_{j \in\beta} (\theta_j z_j' - \theta_j z_j) \right]_+ \\
 &~~~~~~~~~~~~~~~~~~~~~~~~~~~~\le~
\max_{j \in \alpha}\theta_j|{ x_j} - { x_j'}| ~+~ \max_{j \in\beta} \theta_j | z_j' -  z_j|
\end{align*}
Hence,
\begin{align*}
&|g_{\alpha,\beta}(\mb x, \mb z) - g_{\alpha,\beta}(\mb x', \mb z')|\\
&~~~~~~~~~~~~~~~~~~~~~~~\le~\int_{S^{d-1}_\infty} \left(\max_{j \in \alpha}\theta_j|{ x_j} - { x_j'}| ~+~ \max_{j \in\beta} \theta_j | z_j' -  z_j|\right)  \ud\Phi(\boldsymbol{\theta})~.
\end{align*}
Now, by \eqref{eq:integratePhiLambda} we have:
$$\int_{S^{d-1}_\infty} \max_{j \in \alpha}\theta_j|{ x_j} - { x_j'}| ~~ \ud\Phi(\boldsymbol{\theta}) = \mu([\mb 0, \mb{\tilde x}^{-1}]^c)$$ with $\mb{\tilde x}$ defined as $\tilde x_j = |x_j - x_j'|$ for $j\in\alpha$, and $0$ elsewhere.
It suffices then to write:
\begin{align*}
\mu([\mb 0, \mb{\tilde x}^{-1}]^c) &= \mu(\{y,~\exists j \in \alpha,~y_j \ge |x_j-x_j'|^{-1}\})\\
 &\le \sum_{j\in\alpha}\mu(\{y,~y_j \ge |x_j-x_j'|^{-1}\})\\
 &\le \sum_{j \in \alpha} |x_j-x_j'|~.
\end{align*}
Similarly, $\int_{S^{d-1}_\infty} \max_{j \in\beta} \theta_j | z_j' -  z_j|  ~~ \ud\Phi(\boldsymbol{\theta}) ~\le~ \sum_{j \in \beta} |z_j-z_j'|$.

%

\subsection{Proof of Proposition \ref{prop:g}}
The starting point is inequality (9) on p.7 in \cite{COLT15} which bounds the deviation of the empirical measure on extreme regions. 
Let $\mathcal{C}_n(\point)=\frac{1}{n} \sum_{i=1}^{n} \mathds{1}_{\{\mb Z_i \in \point \}}$ 
and $\mathcal{C}(\mathbf{x})=\mathbb{P}(\mb Z \in \point)$ be the empirical and true measures associated with a n-sample $\mb Z_1, \ldots, \mb Z_d$ of \iid~realizations of a random vector $\mathbf{Z}=(Z^1,\ldots,Z^d)$ with uniform margins on $[0,1]$. Then for any real number $\delta \ge e^{-k}$, 
with probability greater than $1 - \delta$,
\begin{align}
\label{Qialt2}
\sup_{0 \le \mathbf{x} \le T} \frac{n}{k} \left | \mathcal{C}_n(\frac{k}{n} [\mathbf{x},\boldsymbol{\infty}[^c) - \mathcal{C}(\frac{k}{n} [\mathbf{x},\boldsymbol{\infty}[^c)  \right| ~\le~ C d\sqrt{\frac{T}{k} \log{\frac{1}{\delta}}}~.
\end{align} 
Recall that with the above notations, $0 \le \mb x \le T$ means $0 \le x_j \le T$ for every $j$.
The proof of Proposition~\ref{prop:g} follows the same lines as in \cite{COLT15}. 
The cornerstone concentration inequality \eqref{Qialt2} has to be replaced with 
\begin{align}
\label{cornerstone-extension}
\nonumber \max_{\alpha, \beta} \sup_{\substack{  0 \le \mb x, \mb z \le T \\ \exists j \in \alpha, x_j \le T'}}
&\frac{n}{k} \left | \mathcal{C}_n \left(\frac{k}{n} R(\mb x^{-1}, \mb z^{-1}, \alpha,~\beta)^{-1}\right) - \mathcal{C}\left(\frac{k}{n} R(\mb x^{-1}, \mb z^{-1}, \alpha,~\beta)^{-1}\right)  \right|\\
&~~~~~~~~~~~~~~~~~~~~~~~~~~~~~~~~~~~~~~~~~ ~\le~ Cd \sqrt{\frac{dT'}{k} \log{\frac{1}{\delta}}}~.
\end{align}
\begin{remark}
Inequality \eqref{cornerstone-extension} is here written in its full generality, namely with a separate constant $T'$ possibly smaller than $T$. If $T' < T$, we then have a smaller bound (typically, we may use $T = 1/\epsilon$ and $T' = 1$). However, we only use \eqref{cornerstone-extension} with $T = T'$ in the analysis below, since the smaller bounds in $T'$ obtained (on $\Lambda(n)$ in \eqref{proof_decomp}) would be diluted (by $\Upsilon(n)$ in \eqref{proof_decomp}).
\end{remark}
\begin{proof}[Proof of (\ref{cornerstone-extension})]
Recall that for notational convenience we write `$\alpha,\beta$' for `$\alpha$ non-empty subset of $ \{1,\ldots,d\}$ and $\beta$ subset of $ \{1,\ldots,d\}$'.
The key is to apply Theorem 1 in \cite{COLT15}, with a VC-class which fits our purposes. Namely, consider
\begin{align*}
\mathcal{A} ~~=~~ \mathcal{A}_{T,T'} ~~&=~~ \bigcup_{\alpha, \beta} \mathcal{A}_{T,T',\alpha,\beta} \text{~~~~~with}\\
\mathcal{A}_{T,T',\alpha,\beta} ~~=~~& \frac{k}{n}\Big\{ R(\mb x^{-1}, \mb z^{-1}, \alpha,~\beta)^{-1}:~~ \mb x, \mb z \in \mathbb{R}^d ,~ 0 \le \mb x, \mb z \le T,\\
&~~~~~~~~~~~~~~~~~~~~~~~~~~~~~~~~~~~~~~~~~~~~~~~~~~~~~~~~\exists j \in \alpha, x_j \le T' \Big\}~,
\end{align*}
for $T,~T' > 0$ and $\alpha,~\beta \subset \dd,~\alpha \neq \emptyset$. $\mathcal{A}$ has VC-dimension $V_\mathcal{A} = d$, as the one considered in \cite{COLT15}. Recall in view of (\ref{set-R}) that 
\begin{align*}
R(\mb x^{-1}, \mb z^{-1}, \alpha,~\beta)^{-1} ~&=~ \Big\{ \mb y \in [0, \infty]^d , ~~y_j \le x_j ~~\text{ for } j \in \alpha,\\
&~~~~~~~~~~~~~~~~~~~~~~~~~  y_j > z_j ~~\text{ for } j \in\beta ~~\Big\} \\&=~ [\mb a, \mb b],
\end{align*}
with $\mb a$ and $\mb b$ defined by 
$a_j =   \left \lbrace \begin{array}{cc}
0 &  \text{for} ~ j \in \alpha \\
z_j  & \text{for} ~ j \in\beta \\
\end{array}
\right.$
and
$b_j =   \left \lbrace \begin{array}{cc}
x_j &  \text{for} ~ j \in \alpha \\
\infty  & \text{for} ~ j \in\beta \\
\end{array}
\right.$.
Since we have $\forall A \in \mathcal{A}, A \subset [\frac{k}{n} \mb T',~\boldsymbol{\infty}[^c$, the probability for a \rv~$\mb Z$ with uniform margins in $[0,1]$ to be in the union class $\mathbb{A} = \bigcup_{A \in \mathcal{A}}A$ is $\mathbb{P}(\mb Z \in \mathbb{A}) \le \mathbb{P}(\mb Z \in [\frac{k}{n}\mb T',~\boldsymbol{\infty}[^c) \le \sum_{j=1}^d \mathbb{P}(Z^j \le \frac{k}{n} T') \le \frac{k}{n}dT'$. 
Inequality \eqref{cornerstone-extension} is thus a direct consequence of Theorem 1 in \cite{COLT15}.
\end{proof}
\noindent
Define now the empirical version $\tilde F_{n, \alpha, \beta}$ of $\tilde F_{\alpha, \beta}$ (introduced in (\ref{F-tilde-alpha})) as 
\begin{align}
\label{def:Fn}
\tilde F_{n, \alpha, \beta}(\mb x, \mb z)  ~=~ \frac{1}{n} \sum_{i=1}^n \mathds{1}_{\{ U_i^j \le x_j ~~\text{for}~ j \in \alpha ~~\text{ and }~~  U_i^j > z_j ~~\text{for}~ j \in\beta \}}~ ,
\end{align}
so that 
$  \frac{n}{k} \tilde F_{n, \alpha, \beta}( \frac{k}{n} \mb x, \frac{k}{n} \mb z) ~=~ \frac{1}{k} \sum_{i=1}^n 
\mathds{1}_{\{ U_i^j \le \frac{k}{n} x_j ~~\text{for}~ j \in \alpha ~~\text{ and }~~ U_i^j > \frac{k}{n} z_j   ~~\text{for}~ j \in\beta \}}.$
Notice that the $U_i^j$'s are  not observable (since $F_j$ is
unknown). In fact, $\tilde F_{n, \alpha, \beta}$ will be used as a substitute for $g_{n, \alpha, \beta}$ (defined in \eqref{def:gn}) allowing to handle uniform variables. This is illustrated by the following lemmas. 

\begin{lemma}[Link between $g_{n, \alpha, \beta}$ and $\tilde F_{n, \alpha, \beta}$]
\label{lem:gn-Fn}
The  empirical version of $\tilde F_{\alpha, \beta}$ and that of $g_{\alpha, \beta}$ are related \emph{via}
\begin{align*}
g_{n, \alpha, \beta}(\mb x, \mb z)~=~\frac{n}{k} \tilde F_{n, \alpha, \beta}\left( \left(U_{(\lfloor kx_j\rfloor)}^j\right)_{j \in \alpha} , \left(U_{(\lfloor kz_j\rfloor)}^j\right)_{j \in \beta}\right),
\end{align*}
\end{lemma}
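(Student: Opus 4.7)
The statement is essentially a book-keeping identity that translates the $X$-based definition \eqref{def:gn} of $g_{n,\alpha,\beta}$ into a $U$-based expression, after which one recognises the empirical functional $\tilde F_{n,\alpha,\beta}$ evaluated at appropriate (random) uniform order statistics. My plan is to proceed in three short steps.

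\textbf{Step 1: rewrite each indicator via Lemma~\ref{lem:equivalence}.} Starting from the definition
\[
g_{n,\alpha,\beta}(\mb x,\mb z) = \frac{1}{k}\sum_{i=1}^n \mathds{1}_{\{ X_i^j \ge X^j_{(n-\lfloor kx_j\rfloor +1)}\ \forall j\in\alpha\ ;\ X_i^j < X^j_{(n-\lfloor kz_j\rfloor +1)}\ \forall j\in\beta\}},
\]
I apply the equivalence from Lemma~\ref{lem:equivalence} coordinate-wise. That lemma gives, with probability one, $X_i^j \ge X_{(n-\lfloor k s\rfloor+1)}^j \Leftrightarrow U_i^j \le U_{(\lfloor k s\rfloor)}^j$ (using the standard relation $U^j = 1-F_j(X^j)$ and that ranks are preserved under $X\mapsto 1-F_j(X)$). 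Taking complements of these events yields the corresponding $>$-version for coordinates in $\beta$. Substituting these equivalences into each indicator above rewrites $g_{n,\alpha,\beta}(\mb x,\mb z)$ as
\[
\frac{1}{k}\sum_{i=1}^n \mathds{1}_{\{ U_i^j \le U^j_{(\lfloor kx_j\rfloor)}\ \forall j\in\alpha\ ;\ U_i^j > U^j_{(\lfloor kz_j\rfloor)}\ \forall j\in\beta\}}.
\]

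\textbf{Step 2: recognise $\tilde F_{n,\alpha,\beta}$.} By the definition~\eqref{def:Fn} of $\tilde F_{n,\alpha,\beta}$, for any deterministic arguments $\mb u\in\rset_+^{|\alpha|}$ and $\mb v\in\rset_+^{|\beta|}$,
\[
\tilde F_{n,\alpha,\beta}(\mb u,\mb v) = \frac{1}{n}\sum_{i=1}^n \mathds{1}_{\{ U_i^j \le u_j\ \forall j\in\alpha\ ;\ U_i^j > v_j\ \forall j\in\beta\}}.
\]
Plugging in $u_j = U^j_{(\lfloor k x_j\rfloor)}$ for $j\in\alpha$ and $v_j = U^j_{(\lfloor k z_j\rfloor)}$ for $j\in\beta$ (which is permissible since the right-hand side is a well-defined functional of its arguments, random or not), and multiplying by $n/k$, one obtains exactly the expression displayed at the end of Step~1. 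Combining Steps~1 and~2 gives the announced identity.

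\textbf{Main obstacle and remarks.} There is no substantial obstacle: the proof is a direct consequence of Lemma~\ref{lem:equivalence} combined with the definitions. The only point that deserves care is that Lemma~\ref{lem:equivalence} holds with probability one (there are no ties among the $X_i^j$'s under Assumption~\ref{hypo:continuous_margins}), so the identity of the lemma is an almost-sure equality, which is sufficient for all subsequent uses. I would state this almost-sure character explicitly at the end of the argument for clarity.
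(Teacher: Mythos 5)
Your proof is correct and follows essentially the same route as the paper: the paper's one-line argument observes that both sides equal $\mu_n(R(\mb x^{-1},\mb z^{-1},\alpha,\beta))$ by the definitions \eqref{def:Fn} and \eqref{eq:gn2}, the latter being exactly the $U$-based rewriting via Lemma~\ref{lem:equivalence} that you carry out in Step~1. Your version just makes the indicator-by-indicator substitution (including the complementation for $j\in\beta$ and the almost-sure caveat) explicit.
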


\begin{proof}
Considering the definition in (\ref{def:Fn}) and (\ref{eq:gn2}), both sides are equal to $\mu_n(R(\mathbf{x}^{-1}, \mathbf{z}^{-1}, \alpha,\beta))$. 
\end{proof}

\begin{lemma}[Uniform bound on $\tilde F_{n, \alpha, \beta}$'s deviations]
\label{lem:Fn-tildeF}
 For any finite  $T>0$, and $\delta\ge e^{-k}$,  with probability at least $1-\delta$, the  deviation of $\tilde F_{n, \alpha, \beta}$ from  $\tilde F_{\alpha, \beta}$ is uniformly bounded: 
 
\begin{align*}
\max_{\alpha, \beta} \sup_{ 0 \le \mb x, \mb z \le T}
\left| \frac{n}{k} \tilde F_{n, \alpha, \beta}( \frac{k}{n} \mb x, \frac{k}{n} \mb z)-\frac{n}{k} \tilde F_{\alpha, \beta}( \frac{k}{n} \mb x, \frac{k}{n} \mb z) \right| \le Cd\sqrt{\frac{T}{k}\log{\frac{1}{\delta}}}~.
\end{align*}
\end{lemma}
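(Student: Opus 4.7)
The plan is to recognise that this lemma is essentially the cornerstone VC inequality \eqref{cornerstone-extension}, already invoked in the proof of Proposition~\ref{prop:g}, re-stated with the marginal restriction $\exists j\in\alpha, x_j\le T'$ relaxed. First I would rewrite the two quantities being compared as empirical and true measures of the same family of rectangles: from the definitions \eqref{F-tilde-alpha} and \eqref{def:Fn},
\begin{align*}
\tfrac{n}{k}\,\tilde F_{\alpha,\beta}\!\left(\tfrac{k}{n}\mb x,\tfrac{k}{n}\mb z\right) &= \tfrac{n}{k}\,\mathcal{C}\!\left(\tfrac{k}{n}\,R(\mb x^{-1},\mb z^{-1},\alpha,\beta)^{-1}\right),\\
\tfrac{n}{k}\,\tilde F_{n,\alpha,\beta}\!\left(\tfrac{k}{n}\mb x,\tfrac{k}{n}\mb z\right) &= \tfrac{n}{k}\,\mathcal{C}_n\!\left(\tfrac{k}{n}\,R(\mb x^{-1},\mb z^{-1},\alpha,\beta)^{-1}\right),
\end{align*}
where $\mathcal{C}_n$ and $\mathcal{C}$ denote the empirical and true laws of the uniform-margin vector $\mb U=(U^1,\ldots,U^d)$. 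Hence the quantity to bound is the uniform deviation of the empirical process over the class
\begin{equation*}
\mathcal{A}_T \;=\; \bigcup_{\alpha,\beta}\;\Big\{\,\tfrac{k}{n}\,R(\mb x^{-1},\mb z^{-1},\alpha,\beta)^{-1}\,:\;0\le\mb x,\mb z\le T\,\Big\}.
\end{equation*}

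Next I would control the complexity and the size of this class, exactly as in the proof of \eqref{cornerstone-extension}. Every element of $\mathcal{A}_T$ is an axis-aligned generalised rectangle in $[0,\infty]^d$ determined by at most $2d$ thresholds, so that $\mathcal{A}_T$ is VC with dimension bounded by a constant multiple of $d$. Furthermore, since $\alpha\neq\emptyset$, picking any $j_0\in\alpha$ gives, for every $A\in\mathcal{A}_T$,
\begin{equation*}
\mathbb{P}(\mb U\in A)\;\le\;\mathbb{P}\!\left(U^{j_0}\le \tfrac{k}{n}T\right)\;=\;\tfrac{k}{n}\,T,
\end{equation*}
so the whole class is confined to a low-probability region of mass at most $p_T:=kT/n$.

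Finally, I would invoke the VC-type Bernstein inequality tailored to low-probability classes from \cite{COLT15} (the same one that produced \eqref{cornerstone-extension}): with probability at least $1-\delta$, for $\delta\ge e^{-k}$,
\begin{equation*}
\sup_{A\in\mathcal{A}_T}\;|\mathcal{C}_n(A)-\mathcal{C}(A)|\;\le\; C\sqrt{\frac{d\,p_T}{n}\,\log\tfrac{1}{\delta}}\;=\;C\sqrt{\frac{d\,k\,T}{n^{2}}\,\log\tfrac{1}{\delta}}.
\end{equation*}
Multiplying by $n/k$ and absorbing the $\sqrt{d}$ into the prefactor $Cd$ yields the announced bound $Cd\sqrt{(T/k)\log(1/\delta)}$. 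The only non-routine ingredient is verifying the VC-dimension of the unioned class and checking that the probability envelope $p_T$ is indeed attained by every member, both of which are carried out identically to the argument already given for \eqref{cornerstone-extension}; I do not expect any fundamentally new obstacle.
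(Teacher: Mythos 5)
Your proposal is correct and follows essentially the same route as the paper, which simply rewrites both quantities as the empirical and true measures of the sets $\frac{k}{n}R(\mb x^{-1},\mb z^{-1},\alpha,\beta)^{-1}$ and invokes the already-established inequality \eqref{cornerstone-extension} with $T'=T$ (itself obtained from Theorem 1 of \cite{COLT15} exactly as you sketch). One small imprecision: the quantity $p$ entering the VC bound of \cite{COLT15} is the probability of hitting the \emph{union} of the class, which is at most $dkT/n$ (a union bound over the $d$ coordinates, since different $\alpha$'s single out different components), not $kT/n$; the corrected envelope still yields exactly $Cd\sqrt{(T/k)\log(1/\delta)}$, so the conclusion is unaffected.
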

\begin{proof}
Notice that 
\begin{align*}
&\sup_{ 0 \le \mb x, \mb z \le T} \left| \frac{n}{k} \tilde F_{n, \alpha, \beta}( \frac{k}{n} \mb x, \frac{k}{n} \mb z)- \frac{n}{k} \tilde F_{\alpha, \beta}( \frac{k}{n} \mb x, \frac{k}{n} \mb z) \right|  \\
&~= \sup_{ 0 \le \mb x, \mb z \le T} \frac{n}{k} \left|  \frac{1}{n}  \sum_{i=1}^n \mathds{1}_{ \mb U_i \in \frac{k}{n} R(\mb x^{-1}, \mb z^{-1}, \alpha,~\beta)^{-1} } -
   \mathbb{P} \left[ \mathbf{U} \in \frac{k}{n} R(\mb x^{-1}, \mb z^{-1}, \alpha,~\beta)^{-1}
   \right] \right|,
\end{align*}
and apply inequality (\ref{cornerstone-extension}) with $T'=T$.
\end{proof}
\begin{remark}
\label{rk:lem:Fn-tildeF_generalized}
Note that the following stronger inequality holds true, when using (\ref{cornerstone-extension}) in full generality, \ie~with $T'< T$.
 For any finite  $T, T'>0$, and $\delta\ge e^{-k}$,  with probability at least $1-\delta$,
\begin{align*}
\max_{\alpha, \beta} \sup_{\substack{  0 \le \mb x, \mb z \le T \\ \exists j \in \alpha, x_j \le T'}}
\left| \frac{n}{k} \tilde F_{n, \alpha, \beta}( \frac{k}{n} \mb x, \frac{k}{n} \mb z)-\frac{n}{k} \tilde F_{\alpha, \beta}( \frac{k}{n} \mb x, \frac{k}{n} \mb z) \right| \le Cd\sqrt{\frac{T'}{k}\log{\frac{1}{\delta}}}.
\end{align*}
\end{remark}

The following lemma is stated and proved in \cite{COLT15}.
\begin{lemma}[Bound on the order statistics of $\mb U$]
\label{lem:U-x} 
Let $\delta\ge e^{-k}$. For any finite positive number $T>0$ such that $T \ge 7/2((\log d)/k + 1)$, we have with probability greater than $1 - \delta$, 
\begin{align}
\label{lem:eq-Wellner}
\forall~ 1\le j \le d,~~~~~\frac{n}{k} U_{(\lfloor kT\rfloor )}^j ~\le~ 2T~,
\end{align}
and with probability greater than $1- (d+1)\delta$, 
\begin{align*}
\max_{1 \le j \le d}~ \sup_{0 \le x_j \le T} \left| \frac{\lfloor kx_j\rfloor }{k} - \frac{n}{k} U_{(\lfloor kx_j\rfloor )}^j  \right| ~\le~ C\sqrt{\frac{T}{k}\log{\frac{1}{\delta}}}~.
\end{align*}
\end{lemma}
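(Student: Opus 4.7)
The plan is to deduce both inequalities from standard concentration results for the uniform empirical process, exploiting the fact that under Assumption~\ref{hypo:continuous_margins} the variables $U^j = 1 - F_j(X^j)$ are \iid uniform on $[0,1]$. Denote by $\beta_n^j(t) = n^{-1}\sum_{i=1}^n \mathds{1}_{\{U_i^j \le t\}}$ the marginal empirical c.d.f.\ of the $j$-th coordinate, so that almost surely (no ties, by continuity of $F_j$) one has $\beta_n^j(U_{(m)}^j) = m/n$.

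For the first inequality, fix $j \in \{1,\ldots,d\}$ and notice that $\{U_{(\lfloor kT\rfloor)}^j \le 2kT/n\}$ is equivalent to $\{N_j \ge \lfloor kT\rfloor\}$, where $N_j := n\,\beta_n^j(2kT/n) \sim \mathrm{Bin}(n, 2kT/n)$ has mean $\mu = 2kT$. A sharp one-sided Chernoff bound on the lower tail gives $\mathbb{P}(N_j < kT) \le (2/e)^{kT} = e^{-c_0 kT}$ with $c_0 = \log(e/2) > 0.3$. Since $\delta \ge e^{-k}$ forces $\log(1/\delta) \le k$, the hypothesis $T \ge (7/2)(1+\log d/k)$ ensures $c_0 kT \ge \log d + \log(1/\delta)$, hence $e^{-c_0 kT} \le \delta/d$. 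A union bound over the $d$ margins yields the announced probability $1-\delta$.

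For the second inequality, work on the event $E_1$ just established. Monotonicity of the order statistics yields, for every $x_j \in [0,T]$, $U_{(\lfloor kx_j\rfloor)}^j \le U_{(\lfloor kT\rfloor)}^j \le 2kT/n$. Combined with the identity $\beta_n^j(U_{(m)}^j)=m/n$, one rewrites
\begin{align*}
\sup_{0 \le x_j \le T}\left|\tfrac{\lfloor kx_j\rfloor}{k} - \tfrac{n}{k} U_{(\lfloor kx_j\rfloor)}^j\right|
&= \tfrac{n}{k}\sup_{0\le x_j\le T}\left|\beta_n^j(U_{(\lfloor kx_j\rfloor)}^j) - U_{(\lfloor kx_j\rfloor)}^j\right| \\
&\le \tfrac{n}{k}\sup_{0\le t\le 2kT/n}\left|\beta_n^j(t) - t\right|.
\end{align*}
The class $\{\mathds{1}_{[0,t]} : t\in[0,2kT/n]\}$ has VC-dimension $1$ and union of probability $2kT/n$, so a localized Bernstein/VC inequality of the type already invoked for \eqref{cornerstone-extension} (Theorem~1 of \cite{COLT15} specialized to this one-dimensional class) bounds the rightmost supremum by $C\sqrt{(kT/n^2)\log(1/\delta)}$ with probability at least $1-\delta$ per margin. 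Multiplying by $n/k$ produces the desired $C\sqrt{(T/k)\log(1/\delta)}$; a final union bound over $j \in \{1,\ldots,d\}$, together with $E_1$, brings the total failure probability to at most $(d+1)\delta$.

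The main obstacle is tight bookkeeping of constants: a crude Chernoff bound with the quadratic exponent $e^{-kT/4}$ would yield the weaker condition $T \ge 4(1+\log d/k)$ and miss the advertised factor $7/2$. One must invoke the sharper multiplicative (or Bennett-type) Chernoff bound on the binomial lower tail to recover the precise hypothesis. Beyond that, the proof is routine, amounting to an empirical-process argument localized to the tail region $\{t \le 2kT/n\}$ isolated by the first step.
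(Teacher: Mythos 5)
Your argument is correct and essentially reconstructs the proof that the paper defers to \cite{COLT15}: a binomial Chernoff bound for the order-statistic inequality (with the sharp lower-tail exponent $1-\log 2$, so that $7/2 \ge (1-\log 2)^{-1}\approx 3.26$ indeed recovers the stated hypothesis on $T$), followed by the identity $\beta_n^j(U^j_{(m)})=m/n$ and localization of the uniform empirical process to $[0,2kT/n]$ via the tail VC inequality, with a union bound over the $d$ margins plus the first event giving $(d+1)\delta$. No gaps beyond routine details (the convention $U^j_{(0)}=0$ and absorbing the second-order $\frac{1}{k}\log\frac{1}{\delta}$ term using $\delta \ge e^{-k}$), both of which you handle implicitly.
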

\noindent
We may now proceed with the proof of Proposition \ref{prop:g}.
Using Lemma \ref{lem:gn-Fn}, we may write:
\begin{align}
\nonumber &\max_{\alpha, \beta}\sup_{ 0 \le \mb x, \mb z \le T }
 \left| g_{n, \alpha, \beta}(\mb x, \mb z) - g_{\alpha, \beta}(\mb x, \mb z) \right| \\
\nonumber           &~~~~~~~~~~=~ \max_{\alpha, \beta}  \sup_{ 0 \le \mb x, \mb z \le T } \left| \frac{n}{k} \tilde F_{n, \alpha, \beta} \left( \left(U_{(\lfloor kx_j\rfloor)}^j\right)_{j \in \alpha} , \left(U_{(\lfloor kz_j\rfloor)}^j\right)_{j \in \beta} \right) - g_{\alpha, \beta}(\mb x, \mb z) \right| \\
\label{proof_decomp}&~~~~~~~~~~\le~ \Lambda(n) ~+~ \Xi(n) ~+~ \Upsilon(n)~.
\end{align}
with:
\begin{align*}
 \Lambda(n) &~=~ \max_{\alpha, \beta} \sup_{ 0 \le \mb x, \mb z \le T } \bigg| \frac{n}{k} \tilde F_{n, \alpha, \beta} \left(\left(U_{(\lfloor kx_j\rfloor)}^j\right)_{j \in \alpha} , \left(U_{(\lfloor kz_j\rfloor)}^j\right)_{j \in \beta} \right) 
\\&~~~~~~~~~~~~~~~~~~~~~~~~~~~~~~~~~~~~~~~- \frac{n}{k} \tilde F_{\alpha, \beta} \left(\left(U_{(\lfloor kx_j\rfloor)}^j\right)_{j \in \alpha} , \left(U_{(\lfloor kz_j\rfloor)}^j\right)_{j \in \beta} \right)  \bigg|\\
\Xi(n) &~=~\max_{\alpha, \beta} \sup_{0 \le \mb x, \mb z \le T} \bigg| \frac{n}{k} \tilde F_{\alpha, \beta} \left(\left(U_{(\lfloor kx_j\rfloor)}^j\right)_{j \in \alpha} , \left(U_{(\lfloor kz_j\rfloor)}^j\right)_{j \in \beta} \right)
\\&~~~~~~~~~~~~~~~~~~~~~~~~~~~~~~~~~~~~~- g_{\alpha, \beta}\left(\left(\frac{n}{k}U_{(\lfloor kx_j\rfloor)}^j\right)_{j \in \alpha} , \left(\frac{n}{k}U_{(\lfloor kz_j\rfloor)}^j\right)_{j \in \beta}\right) \bigg|\\
\Upsilon(n) &~=~ \max_{\alpha, \beta} \sup_{ 0 \le \mb x, \mb z \le T } \left| g_{\alpha, \beta}\left(\left(\frac{n}{k}U_{(\lfloor kx_j\rfloor)}^j\right)_{j \in \alpha} , \left(\frac{n}{k}U_{(\lfloor kz_j\rfloor)}^j\right)_{j \in \beta}\right) - g_{\alpha, \beta}(\mb x, \mb z) \right|.
\end{align*}
Now, considering (\ref{lem:eq-Wellner}) we have with probability greater than $1-\delta$ that
for every $1\le j \le d$, $U_{(\lfloor kT\rfloor )}^j ~\le~ 2T \frac{k}{n}~$, so that
\begin{align*} 
\Lambda(n) 
  ~\le~ 
\max_{\alpha, \beta}  \sup_{0 \le \mb x, \mb z \le 2T} \left| \frac{n}{k} \tilde F_{n, \alpha, \beta} \left( \frac{k}{n} \mb x , \frac{k}{n} \mb z \right) - \frac{n}{k} \tilde F_{\alpha, \beta} \left( \frac{k}{n} \mb x , \frac{k}{n} \mb z \right)  \right|.
\end{align*}
\noindent
Thus by Lemma \ref{lem:Fn-tildeF}, with probability at least $1-2\delta$,
\begin{align*}
 \Lambda(n) \le C d\sqrt{\frac{2 T}{k}\log\frac{1}{\delta}}.
\end{align*}
\noindent
Concerning $\Upsilon(n)$, we have the following decomposition:
\begin{align*}
 \Upsilon(n) &~\le~ \max_{\alpha, \beta} \sup_{ 0 \le \mb x, \mb z \le T } \bigg| g_{\alpha, \beta} \left( \frac{n}{k} \left(U_{(\lfloor kx_j\rfloor)}^j\right)_{j \in \alpha} , \frac{n}{k}\left(U_{(\lfloor kz_j\rfloor)}^j\right)_{j \in \beta}  \right) 
\\&~~~~~~~~~~~~~~~~~~~~~~~~~~~~~~~~~~~~~~~~~~~- g_{\alpha, \beta} \left(  \left(\frac{\lfloor kx_j\rfloor }{k}\right)_{j \in \alpha},\left(\frac{\lfloor kz_j\rfloor }{k}\right)_{j \in \beta} \right) \bigg| 
\\&~~~ ~+~     \max_{\alpha, \beta} \sup_{ 0 \le \mb x, \mb z \le T }
\left| g_{\alpha, \beta} \left(  \left(\frac{\lfloor kx_j\rfloor }{k}\right)_{j \in \alpha},\left(\frac{\lfloor kz_j\rfloor }{k}\right)_{j \in \beta} \right)
-g_{\alpha, \beta}(\mb x, \mb z) \right| 
\\&~=:~ \Upsilon_1(n) ~+~ \Upsilon_2(n)~.
\end{align*}
\noindent
The inequality in Lemma \ref{lem:g-alpha} allows us to bound the first term $\Upsilon_1(n)$:
\begin{align*}
\Upsilon_1(n) &~\le~ C  \max_{\alpha, \beta} \sup_{ 0 \le \mb x, \mb z \le T }~
\sum_{j \in \alpha} \left| \frac{\lfloor kx_j\rfloor }{k} - \frac{n}{k} U_{(\lfloor kx_j\rfloor )}^j \right| + \sum_{j \in \beta} \left| \frac{\lfloor kz_j\rfloor }{k} - \frac{n}{k} U_{(\lfloor kz_j\rfloor )}^j \right|\\
&~\le~  2 C \sup_{ 0 \le \mb x \le T }~
\sum_{1 \le j \le d} \left| \frac{\lfloor kx_j\rfloor }{k} - \frac{n}{k} U_{(\lfloor kx_j\rfloor )}^j \right|
\end{align*}
\noindent
so that by Lemma \ref{lem:U-x}, with probability greater than $1-(d+1)\delta$:
\begin{align*}
\Upsilon_1(n) ~\le~ Cd \sqrt{\frac{2 T}{k}\log{\frac{1}{\delta}}}~.
\end{align*}
\noindent
Similarly, $$\Upsilon_2(n) ~\le~  2C \sup_{0 \le \mb x \le T }~
\sum_{1\le j \le d} \left|\frac{\lfloor k x_j\rfloor }{k} - x_j\right| ~\le~C \frac{2d}{k} ~. $$ 
Finally we get, for every $n >0$, with probability at least $1- (d+3)\delta$, 
\begin{align*}
& \max_{\alpha, \beta}  \sup_{ 0 \le \mb x, \mb z \le T }
 \left| g_{n, \alpha, \beta}(\mb x, \mb z) - g_{\alpha, \beta}(\mb x, \mb z) \right| ~\le~ \Lambda(n) + \Upsilon_1(n) + \Upsilon_2(n) + \Xi(n)
\\ &~~~~~~\le~  Cd\sqrt{\frac{2T}{k}\log\frac{1}{\delta}} ~+~ \frac{2d}{k} 
~+~  \max_{\alpha, \beta} \sup_{ 0 \le \mb x, \mb z \le 2T}  \left|\frac{n}{k} \tilde F_{\alpha, \beta}(\frac{k}{n} \mb x , \frac{k}{n} \mb z)- g_{\alpha, \beta} (\mathbf{x}, \mathbf{z})\right|
\\ &~~~~~~\le~ C'd\sqrt{\frac{2T}{k}\log\frac{1}{\delta}} ~+~ 
  \max_{\alpha, \beta} \sup_{0 \le \mb x, \mb z \le 2T}
 \left|\frac{n}{k} \tilde F_{\alpha, \beta}(\frac{k}{n} \mb x , \frac{k}{n} \mb z)- g_{\alpha, \beta} (\mathbf{x}, \mathbf{z})\right|.
\end{align*}

\begin{remark}({\sc Bias term})
\label{rk:bias}
It is classical (see \cite{Qi97} p.174 for details) to extend the simple convergence (\ref{g-alpha}) to the uniform version on $[0,T]^d$. It suffices to subdivide $[0,T]^d$ and to use the monotonicity in each dimension coordinate of $g_{\alpha, \beta}$ and $\tilde F_{\alpha, \beta}$.
Thus,  
$$\sup_{0 \le \mb x, \mb z \le 2T}\left|\frac{n}{k} \tilde F_{\alpha, \beta}( \frac{k}{n} \mb x, \frac{k}{n} \mb z)- g_{\alpha, \beta}(\mb x, \mb z)\right| \to 0$$ for every $\alpha$ and $\beta$. Note also that by taking a maximum on a finite class we have the convergence of the maximum uniform bias to $0$:
\begin{align}
\label{unif_conv}
\max_{\alpha, \beta}~ \sup_{0 \le \mb x, \mb z \le 2T}\left|\frac{n}{k} \tilde F_{\alpha, \beta}( \frac{k}{n} \mb x, \frac{k}{n} \mb z)-g_{\alpha, \beta} (\mb x, \mb z)\right| \to 0.
\end{align}
\noindent
\end{remark}

\subsection{Proof of Lemma \ref{lemma_simplex}}
First note that as the $\Omega_\beta$'s
form a partition of the simplex $S_\infty^{d-1}$ and that $\Omega_\alpha^{\epsilon,\epsilon'} \cap \Omega_\beta = \emptyset $ as soon as $\alpha \not \subset \beta$, we have 
$$\Omega_{\alpha}^{\epsilon, \epsilon'} ~=~ \bigsqcup_\beta
\Omega_\alpha^{\epsilon, \epsilon'} \cap \Omega_\beta ~=~ \bigsqcup_{\beta \supset
  \alpha} \Omega_\alpha^{\epsilon, \epsilon'} \cap \Omega_\beta .$$ 

Let us recall that as stated in Lemma~\ref{lem:continuousPhi}), $ \Phi$ is concentrated on the (disjoint) edges 
\begin{align*}
   \Omega_{\alpha,i_0} = \{\mb x: \; \ninf{\mb x}  = 1,\; x_{i_0} = 1,~~& 0<  x_i < 1 ~~\text{~for~} i \in \alpha \setminus \{i_0\}\\
&x_i=0 ~~~~\text{~~~ for } i\notin \alpha ~~~~~~~\}
\end{align*}
and that the restriction $\Phi_{\alpha,i_0}$ of $\Phi$ to $\Omega_{\alpha,i_0}$ is absolutely continuous \wrt~the Lebesgue measure $\ud x_{\alpha\setminus{i_0}}$ on the cube's edges, whenever $|\alpha|\ge 2 $.
\noindent
By (\ref{eq:decomposePhi}) we have, for every $\beta \supset \alpha$, 
\begin{align*}
\Phi(\Omega_\alpha^{\epsilon, \epsilon'} \cap \Omega_{\beta})~~&=~~ \sum_{i_0 \in \beta} ~\int_{\Omega_\alpha^{\epsilon, \epsilon'} \cap \Omega_{\beta,i_0}}  ~\frac{\ud \Phi_{\beta,i_0}}{\ud x_{ \beta \setminus i_0}}(x) ~\ud x_{\beta \setminus i_0}\\
\Phi(\Omega_\alpha)~~&=~~ \sum_{i_0 \in \alpha} ~\int_{\Omega_{\alpha,i_0}}  ~\frac{\ud \Phi_{\alpha,i_0}}{\ud x_{ \alpha \setminus i_0}}(x) ~\ud x_{\alpha \setminus i_0}~.
\end{align*}
Thus,
\begin{align*}
\Phi(\Omega_\alpha^{\epsilon, \epsilon'}) - \Phi(\Omega_\alpha) &~=~ \sum_{\beta \supset \alpha} \sum_{i_0 \in \beta} ~\int_{\Omega_\alpha^{\epsilon, \epsilon'} \cap \Omega_{\beta,i_0}}  ~\frac{\ud \Phi_{\beta,i_0}}{\ud x_{ \beta \setminus i_0}}(x) ~\ud x_{\beta \setminus i_0}\\
&~~~~~~~~~~~~~~~~~~~~~~~~~~~~~~~~~~~~-~\sum_{i_0 \in \alpha} ~\int_{\Omega_{\alpha,i_0}}  ~\frac{\ud \Phi_{\alpha,i_0}}{\ud x_{ \alpha \setminus i_0}}(x) ~\ud x_{\alpha \setminus i_0}\\
&~=~ \sum_{\beta \supsetneq \alpha} \sum_{i_0 \in \beta} ~\int_{\Omega_\alpha^{\epsilon, \epsilon'} \cap \Omega_{\beta,i_0}}  ~\frac{\ud \Phi_{\beta,i_0}}{\ud x_{ \beta \setminus i_0}}(x) ~\ud x_{\beta \setminus i_0}\\
&~~~~~~~~~~~~~~~~~~~~~-~\sum_{i_0 \in \alpha} ~\int_{\Omega_{\alpha,i_0} \setminus (\Omega_\alpha^{\epsilon, \epsilon'} \cap \Omega_{\alpha,i_0})}  ~\frac{\ud \Phi_{\alpha,i_0}}{\ud x_{ \alpha \setminus i_0}}(x) ~\ud x_{\alpha \setminus i_0},
\end{align*}
so that by eq\ref{eq:supDensity}, 
\begin{align}
\label{pr:decompPhi}
|\Phi(\Omega_\alpha^{\epsilon, \epsilon'}) - \Phi(\Omega_\alpha)| &~\le~ \sum_{\beta \supsetneq \alpha} M_\beta \sum_{i_0 \in \beta} ~\int_{\Omega_\alpha^{\epsilon, \epsilon'} \cap \Omega_{\beta,i_0}} ~\ud x_{\beta \setminus i_0}\\
\nonumber&~~~~~~~~~~~~~~~~~~~~~~~~~~~+~ M_\alpha \sum_{i_0 \in \alpha} ~\int_{\Omega_{\alpha,i_0} \setminus (\Omega_\alpha^{\epsilon, \epsilon'} \cap \Omega_{\alpha,i_0})}  ~\ud x_{\alpha \setminus i_0}~.
\end{align}
Without loss of generality we may assume that $\alpha =\{1,...,K\}$ with $K \le d$. 
Then, for $\beta \supsetneq \alpha$, $\int_{\Omega_\alpha^{\epsilon, \epsilon'} \cap \Omega_{\beta,i_0}} ~\ud x_{\beta \setminus i_0}$ is smaller than $(\epsilon')^{|\beta| - |\alpha|}$ and is null as soon as $i_0 \in \beta \setminus \alpha$. To see this, assume for instance that $\beta = \{1,...,P\}$ with $P>K$. Then 
\begin{align*}
\Omega_\alpha^{\epsilon, \epsilon'} ~\cap~ \Omega_{\beta,i_0} = \{ \epsilon < x_1,...,x_K \le 1,~&x_{K+1},...,x_P \le \epsilon' ,~x_{i_0}=1,\\ &x_{P+1}=...=x_d=0~~~~~~~~~\}
\end{align*}
which is empty if $i_0 \ge K+1$ (\ie~$i_0 \in \beta \setminus \alpha$) and which fulfills if $i_0 \le K$ $$\int_{\Omega_\alpha^{\epsilon, \epsilon'} \cap \Omega_{\beta,i_0}} ~\ud x_{\beta \setminus i_0} \le (\epsilon')^{P-K}.$$
The first term in (\ref{pr:decompPhi}) is then bounded by $\sum_{\beta \supsetneq \alpha } M_\beta |\alpha| (\epsilon')^{|\beta|-|\alpha|}$.
Now, concerning the second term in (\ref{pr:decompPhi}), $\Omega_{\alpha}^{\epsilon, \epsilon'} \cap \Omega_{\alpha,i_0} ~=~\{\epsilon < x_1,...,x_K \le 1, x_{i_0}=1,~x_{K+1},...,x_d=0 \} $ and then
\begin{align*}
\Omega_{\alpha,i_0} \setminus (\Omega_\alpha^{\epsilon, \epsilon'} \cap \Omega_{\alpha,i_0}) = \bigcup_{l=1,...,K} \Omega_{\alpha,i_0} \cap \{ x_l \le \epsilon \},
\end{align*}
so that $\int_{\Omega_{\alpha,i_0} \setminus (\Omega_\alpha^{\epsilon, \epsilon'} \cap \Omega_{\alpha,i_0})}  ~\ud x_{\alpha \setminus i_0}~ \le~ K\epsilon = |\alpha| \epsilon$.
The second term in (\ref{pr:decompPhi}) is thus bounded by $M |\alpha|^2 \epsilon$.
Finally, 
(\ref{pr:decompPhi}) implies 
\begin{align*}
|\Phi(\Omega_\alpha^{\epsilon, \epsilon'}) - \Phi(\Omega_\alpha)| \le |\alpha|  \sum_{\beta \supsetneq \alpha} M_\beta (\epsilon')^{|\beta|-|\alpha|} + M |\alpha|^2 \epsilon.
\end{align*}
To conclude, observe that by Assumption \ref{hypo:abs_continuousPhi},
$$\sum_{\beta \supsetneq \alpha} M_\beta (\epsilon')^{|\beta|-|\alpha|} \le \sum_{\beta \supsetneq \alpha} M_\beta (\epsilon') \le \epsilon' \sum_{|\beta| \ge 2} M_\beta \le \epsilon' M
$$
The result is thus proved.

\subsection{Proof of Remark~\ref{rk:optim}}
Let us prove that $Z_n$, conditionally to the event $\{\|\frac{k}{n} V_1\|_\infty \ge 1\}$, converges in law.
Recall that $Z_n$ is a $(2^d-1)$-vector defined by $Z_n(\alpha)=\mathds{1}_{\frac{k}{n} \mb V_1 \in R_\alpha^\epsilon}$ for all $\alpha \subset \{1,\ldots,d\}, \alpha \neq \emptyset$.
Let us denote $1_\alpha = (\mathds{1}_{j=\alpha})_{j=1,\ldots, 2^d-1}$ where we implicitely define the bijection between $\mathcal{P}(\{1,\ldots,d\}) \setminus \emptyset$ and $\{1,\ldots,2^d-1\}$.
Since the $R_\alpha^\epsilon$'s, $\alpha$ varying, form a partition of $[\mb 0, \mb 1]^c$,
$\mathbb{P}(\exists \alpha, Z_n = 1_\alpha ~|~\|\frac{k}{n} \mb V_1 \|_\infty \ge 1)=1$ and 
$Z_n = 1_\alpha \Leftrightarrow Z_n(\alpha) = 1 \Leftrightarrow \frac{k}{n} \mb V_1 \in R_\alpha^\epsilon$, so that
$$\mathbb{E}\left[\Phi(Z_n) \mathds{1}_{\|\frac{k}{n} \mb V_1 \|_\infty \ge 1}\right] = \sum_\alpha \Phi(1_\alpha) \mathbb{P}(Z_n(\alpha) = 1).$$
\noindent
Let $\Phi: \mathbb{R}^{2^d-1} \to \mathbb{R}_+$ be a measurable function. Then
\begin{align*}
\mathbb{E}\left[\Phi(Z_n) ~|~ \|\frac{k}{n} \mb V_1 \|_\infty \ge 1\right] ~=~ \mathbb{P}\left[\|\frac{k}{n} \mb V_1 \|_\infty \ge 1\right]^{-1}~~ \mathbb{E}\left[\Phi(Z_n) \mathds{1}_{\|\frac{k}{n} \mb V_1 \|_\infty \ge 1}\right].
\end{align*}
Now, $\mathbb{P}\left[\|\frac{k}{n} \mb V_1 \|_\infty \ge 1\right] = \frac{k}{n} \pi_n$ with $\pi_n \to \mu([\mb 0, \mb 1]^c)$, so that
$$\mathbb{E}\left[\Phi(Z_n) ~|~ \|\frac{k}{n} \mb V_1 \|_\infty \ge 1\right] = \pi_n^{-1} \frac{n}{k}\left(\sum_\alpha \Phi(1_\alpha) \mathbb{P}(Z_n(\alpha) = 1) \right).$$
Using $\frac{n}{k} \mathbb{P}\left[Z_n(\alpha)=1\right] = \frac{n}{k}\mathbb{P}\left[\frac{k}{n} \mb V_1 \in R_\alpha^\epsilon\right] \to \mu(R_\alpha^\epsilon)$, we find that
$$\mathbb{E}\left[\Phi(Z_n) ~|~ \|\frac{k}{n} \mb V_1 \|_\infty \ge 1\right] \to \sum_\alpha \Phi(1_\alpha) \frac{\mu(R_\alpha^\epsilon)}{\mu([\mb 0, \mb 1]^c)},$$
which achieves the proof.

\section{Experiments curves}
\label{appendix_exp}
\begin{figure}[H]
  \centering
  \includegraphics[width = \textwidth]{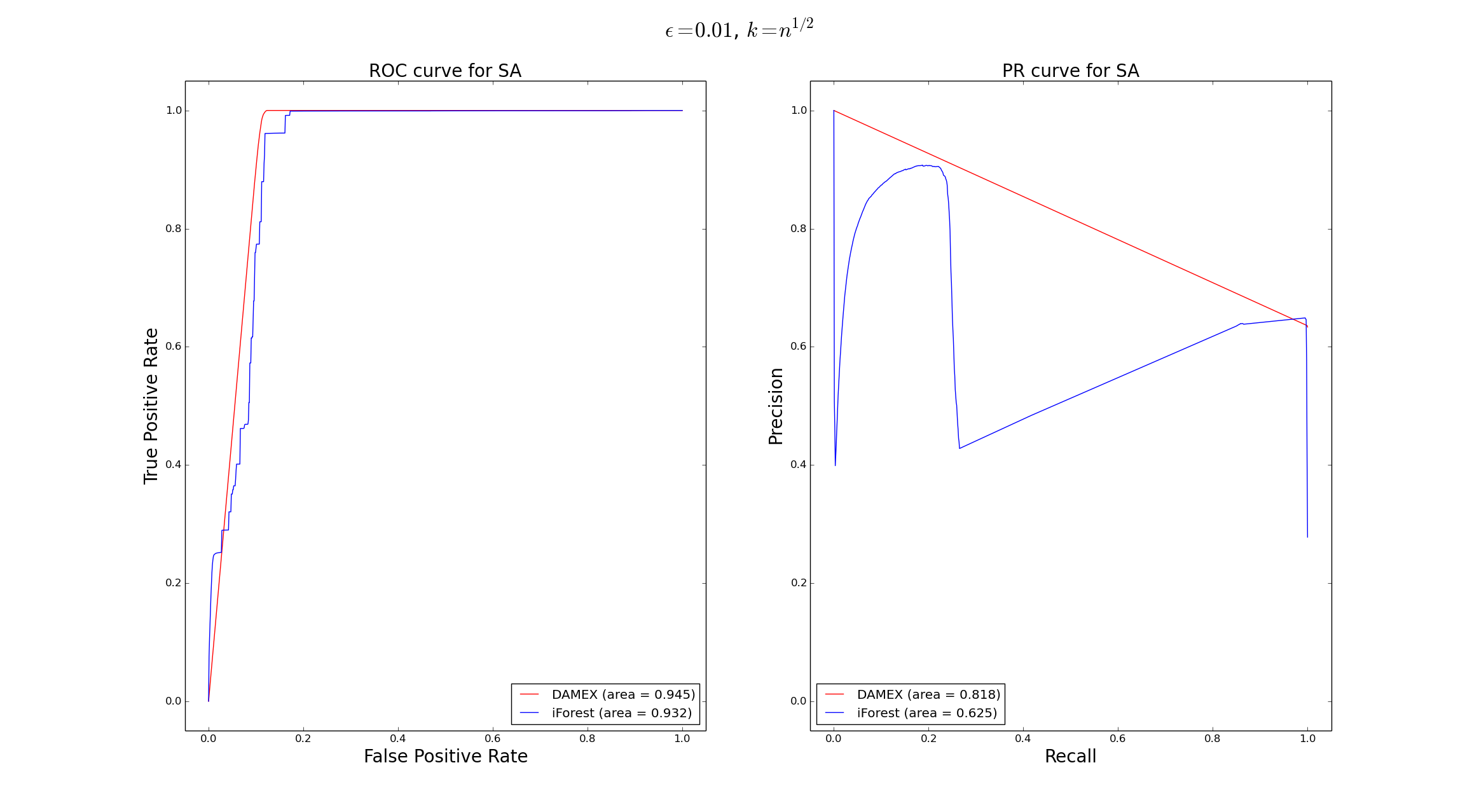}
  \caption{SA dataset, default parameters}
\end{figure}

\begin{figure}[H]
  \centering
  \includegraphics[width = \textwidth]{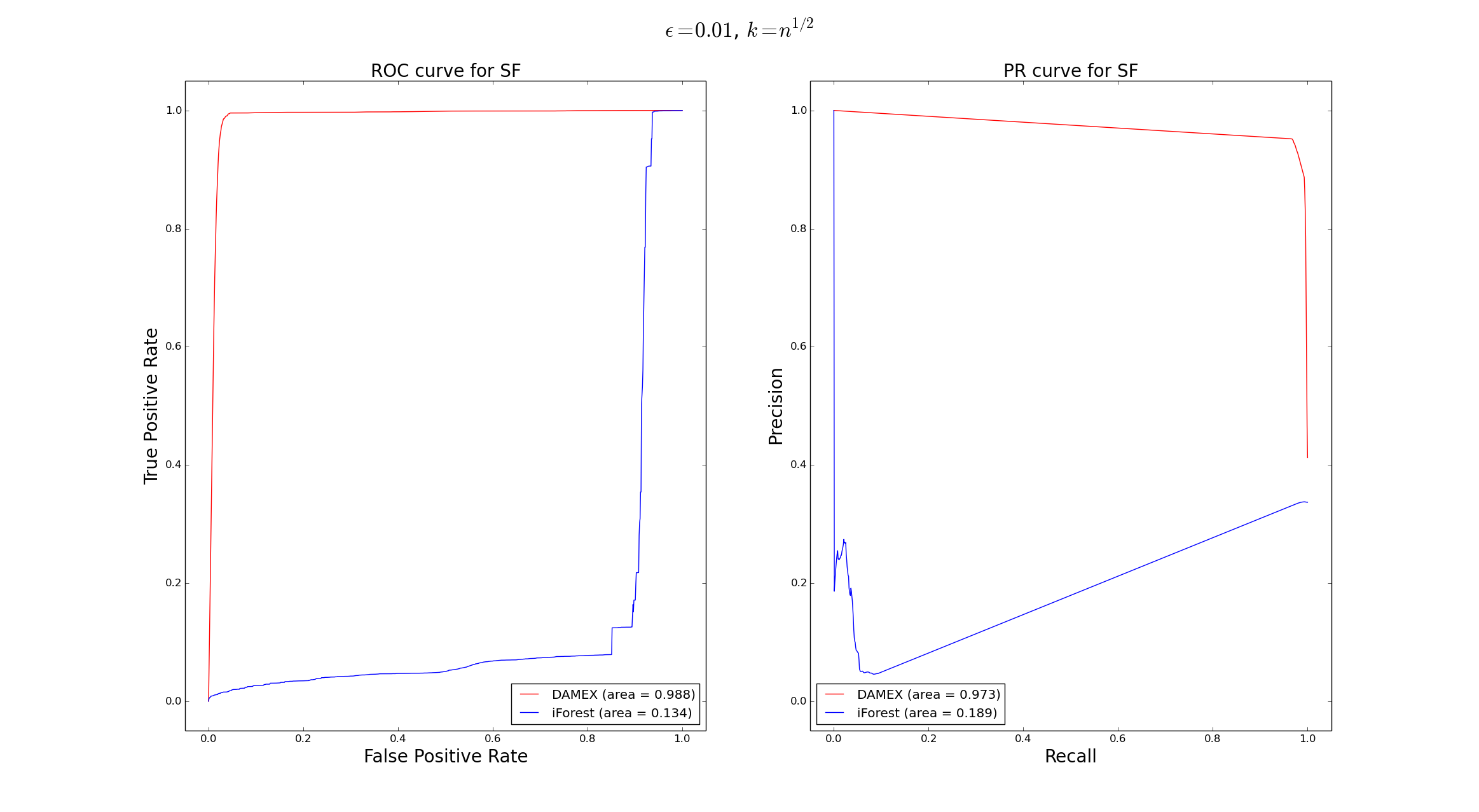}
  \caption{shuttle dataset, default parameters}
\end{figure}

\begin{figure}[H]
  \centering
  \includegraphics[width = \textwidth]{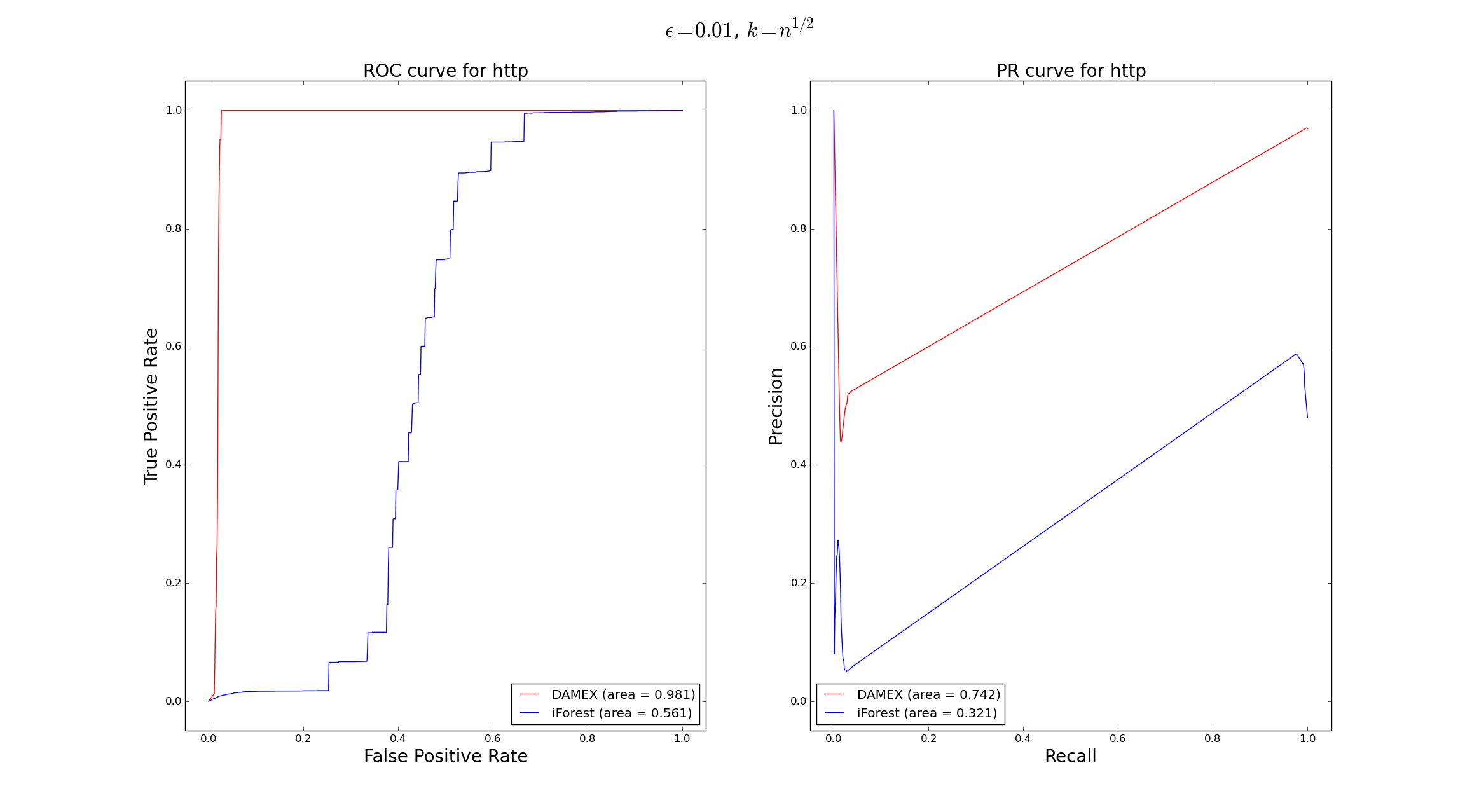}
  \caption{http dataset, default parameters}
\end{figure}

\section*{References}
\bibliographystyle{elsarticle-harv} 
\bibliography{mvextrem.bib}







\end{document}